\DeclareSymbolFontAlphabet{\amsmathbb}{AMSb}
\pgfplotsset{compat=newest}
\pgfplotsset{compat=1.11, width=10.5cm, height=7cm,
/pgfplots/ybar legend/.style={
    /pgfplots/legend image code/.code={%
        \draw[##1,/tikz/.cd,yshift=-0.25em]
        (0cm,0cm) rectangle (3pt,0.8em);},},
}
\definecolor{red}{rgb}{0.745,0.192,0.102}
\definecolor{darkgreen}{RGB}{34,161,55}
\definecolor{ruhuisstijlrood}{rgb}{0.745,0.192,0.102}
\definecolor{ruhuisstijlzwart}{rgb}{0,0,0}
\definecolor{ruhuisstijlwit}{rgb}{0.98,0.98,0.98}
\definecolor{plotblue}{rgb}{0.1,0.498039215686275,0.9549019607843137}
\spnewtheorem{assumption}{Assumption}{\bfseries}{\itshape}
\spnewtheorem{problemBold}{Problem}{\bfseries}{\itshape}
\crefname{equation}{Eq.}{Eqs.}
\crefname{pluralequation}{Eqs.}{Eqs.}
\crefname{algorithm}{Algorithm}{Algorithm}
\crefname{figure}{Fig.}{Figs.}
\crefname{pluralfigure}{Figs.}{Figs.}
\crefname{section}{Sect.}{Sects.}
\crefname{pluralsection}{Sects.}{Sects.}
\crefname{table}{Table}{Tables}
\crefname{pluraltable}{Tables}{Tables}
\crefname{definition}{Def.}{Defs.}
\crefname{pluraldefinition}{Defs.}{Defs.}
\crefname{theorem}{Theorem}{Theorems}
\crefname{pluraltheorem}{Theorems}{Theorems}
\crefname{lemma}{Lemma}{Lemmas}
\crefname{plurallemma}{Lemmas}{Lemmas}
\crefname{example}{Example}{Examples}
\crefname{pluralexample}{Examples}{Examples}
\crefname{problem}{Problem}{Problems}
\crefname{pluralproblem}{Problems}{Problems}
\crefname{problemBold}{Problem}{Problems}
\crefname{pluralproblemBoldm}{Problems}{Problems}
\crefname{assumption}{Assumption}{Assumptions}
\crefname{pluralassumption}{Assumptions}{Assumptions}
\crefname{remark}{Remark}{Remarks}
\crefname{pluralremark}{Remarks}{Remarks}
\crefname{proposition}{Proposition}{Propositions}
\crefname{pluralproposition}{Propositions}{Propositions}
\crefname{property}{Property}{Properties}
\crefname{pluralproperty}{Properties}{Properties}
\crefname{corollary}{Corollary}{Corollaries}
\crefname{pluralcorollary}{Corollaries}{Corollaries}
\crefname{appendix}{App.}{Appendices}
\crefname{pluralappendix}{Appendices}{Appendices}
\DeclareMathOperator*{\argmax}{arg\,max}
\algrenewcommand\algorithmicindent{1.0em}%
\newcommand{\relu}{\mathrm{ReLU}}
\newcommand{\satprob}{\mathrm{Pr}}
\newcommand{\Prob}{\mathbb{P}}
\newcommand{\Exp}{\mathbb{E}}
\newcommand{\R}{\mathbb{R}}
\newcommand{\Nzero}{\mathbb{N}_0}
\newcommand{\weight}{w}
\newcommand{\weightsys}{\mathcal{W}}
\newcommand{\Id}{\mathrm{Id}}
\newcommand{\loss}{\mathcal{L}}
\newcommand{\tuple}[1]{\langle #1 \rangle}
\newcommand{\Vmin}{V_{\mathrm{min}}}
\newcommand{\Vmax}{V_{\mathrm{max}}}
\newcommand{\Vlb}{V_{\mathrm{LB}}}
\newcommand{\Vub}{V_{\mathrm{UB}}}
\newcommand{\cell}{\mathrm{cell}}
\newcommand{\suggmesh}{\lambda}
\newcommand*{\nn}{\ensuremath{\mathcal{A}}}
\newcommand{\network}{{T^{\nn}}}
\newcommand{\lipnetwork}{L_{\nn, \weightsys}}
\newcommand{\weightsysalt}{\widetilde{\mathcal{W}}}
\newcommand{\Kalt}{\widetilde{K}}
\newcommand{\lipnetworkalt}{L_{\nn, \weightsysalt}}
\newcommand{\networkleqn}{{T^{\nn}_{\leq n}}}
\newcommand{\networkleqnprime}{T^{\nn'}_{\leq n}}
\newcommand{\weightalt}{\widetilde{\weight}}
\newcommand{\wsysalt}{\weightsysalt}
\newcommand*{\dtss}{\ensuremath{\mathcal{S}}}
\newcommand*{\DTSS}{\ensuremath{\tuple{\X, \X_0, \U, \Noise, \noisedist, f}}}
\newcommand{\X}{\mathcal{X}}
\newcommand{\Xdisc}{\widetilde{\mathcal{X}}}
\newcommand{\U}{\mathcal{U}}
\newcommand{\Noise}{\mathcal{N}}
\newcommand{\state}{\mathbf{x}}
\newcommand{\statedisc}{\tilde{\state}}
\newcommand{\control}{\mathbf{u}}
\newcommand{\policy}{\pi}
\newcommand{\noise}{\omega}
\newcommand{\dynamics}{f}
\newcommand{\noisedist}{\mu}
\newcommand*{\xTarget}{\X_T}
\newcommand*{\xUnsafe}{\X_U}
\definecolor{color1}{RGB}{55,126,184} %
\definecolor{color2}{RGB}{228,26,28} %
\definecolor{color3}{RGB}{77,175,74} %
\definecolor{color4}{RGB}{152,78,163} %
\definecolor{color5}{RGB}{255,127,0} %
\definecolor{color6}{rgb}{0.5, 1.0, 0.83} %
\definecolor{color7}{rgb}{1.0, 0.0, 1.0} %
\definecolor{color8}{rgb}{0.66, 0.66, 0.66} %
\newif\iftikzexternal
\renewcommand{\paragraph}[1]{\medskip\noindent\emph{#1}\,\,}
\renewcommand{\subsubsection}[1]{\medskip\noindent\textbf{#1}\,\,}
\def\orcidID#1{\smash{\href{http://orcid.org/#1}{\protect\raisebox{-1.25pt}{\protect\includegraphics{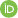}}}}}
\newif\ifappendix
\begin{document}

\title{Policy Verification in Stochastic Dynamical Systems Using Logarithmic Neural Certificates
}
\titlerunning{Policy Verification in Stoch. Dyn. Sys. Using Logarithmic Neural Certificates}

\author{ 
Thom Badings\inst{1,2}\thanks{Equal contribution.}\orcidID{0000-0002-5235-1967}
\and
Wietze Koops\inst{2,3,4\star}\orcidID{0000-0001-9945-1992}
\and\\
Sebastian Junges\inst{2}\orcidID{0000-0003-0978-8466}
\and
Nils Jansen\inst{2,5}\orcidID{0000-0003-1318-8973}
}

 \institute{
University of Oxford, Oxford, United Kingdom
\\
\email{thom.badings@cs.ox.ac.uk} 
\and
Radboud University, Nijmegen, the Netherlands
\and
Lund University, Lund, Sweden
\and
University of Copenhagen, Copenhagen, Denmark
\and
Ruhr-University Bochum, Bochum, Germany
}

\maketitle              %

\begin{abstract}
We consider the verification of neural network policies for discrete-time stochastic systems with respect to reach-avoid specifications. We use a learner-verifier procedure that learns a certificate for the specification, represented as a neural network. Verifying that this neural network certificate is a so-called reach-avoid supermartingale (RASM) proves the satisfaction of a reach-avoid specification. Existing approaches for such a verification task rely on computed Lipschitz constants of neural networks. These approaches struggle with large Lipschitz constants, especially for reach-avoid specifications with high threshold probabilities. We present two key contributions to obtain smaller Lipschitz constants than existing approaches. First, we introduce logarithmic RASMs (logRASMs), which take exponentially smaller values than RASMs and hence have lower theoretical Lipschitz constants. Second, we present a fast method to compute tighter upper bounds on Lipschitz constants based on weighted norms. Our empirical evaluation shows we can consistently verify the satisfaction of reach-avoid specifications with probabilities as high as~$99.9999\%$.
\end{abstract}

\section{Introduction}
\label{sec:Introduction}
Feed-forward neural networks are widely used in reinforcement learning (RL) to represent policies for autonomous control systems operating in continuous and nonlinear environments~\cite{DBLP:journals/corr/LillicrapHPHETS15,DBLP:journals/arcras/Recht19,DBLP:conf/icml/DuanCHSA16}.
To deploy such policies in safety-critical domains, it is crucial to provide guarantees about their (closed-loop) behavior~\cite{DBLP:journals/corr/AmodeiOSCSM16}.
The development of techniques that provide such guarantees is an ongoing research effort~\cite{DBLP:journals/corr/abs-2405-06624}. 
In this paper, we study (nonlinear) stochastic dynamical systems, which are ubiquitous in control theory~\cite{khalil2002nonlinear,Bertsekas.Shreve78} and AI~\cite{DBLP:journals/arc/BusoniuBTKP18,bertsekas2019reinforcement} for modeling control tasks in uncertain environments.
The operational model of such a \emph{discrete-time stochastic system (DTSS)} is a Markov decision process (MDP) with a continuous state and action space, and with the transition function defined by stochastic difference equations.
We aim to prove that a \emph{reach-avoid specification} is satisfied, i.e., that the probability of reaching a set of goal states without visiting unsafe states is above some threshold~\cite{DBLP:journals/automatica/SummersL10}.
More precisely, we study the following verification problem (see \cref{fig:feedback_loop}): Given a DTSS and a neural network policy for this DTSS, check whether the policy satisfies a given reach-avoid specification.

\begin{figure}[!t]
\centering
\parbox{0.46\textwidth}{
    \centering
    \iftikzexternal
        \includegraphics[scale=.9]{figures/feedback_loop.pdf}%
    \else
        \include{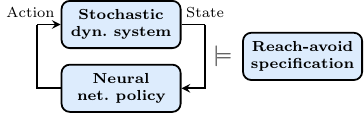}%
    \fi
}\hfill
\parbox{0.51\textwidth}{
    \centering
    \iftikzexternal
        \includegraphics[scale=.9]{figures/Lipschitz.pdf}%
    \else
        \include{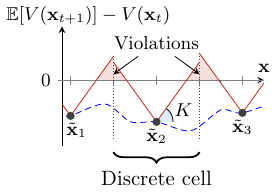}%
    \fi
}
\parbox{0.46\textwidth}{
    \caption{We verify that a neural network policy deployed on a DTSS satisfies a reach-avoid specification.}
    \label{fig:feedback_loop}
}\hfill
\parbox{0.51\textwidth}{
    \caption{The verifier computes the expected decrease of a candidate RASM $V$ on discrete states $\statedisc_i$ and uses a Lipschitz constant $K$ to generalize to all other states.}
    \label{fig:Lipschitz}
}
\end{figure}

\paragraph{Certificate functions.}
We follow the common paradigm of \emph{verification by finding a certificate}.
A certificate is a function satisfying conditions such that its \emph{existence} implies satisfaction of a specification.
Classical certificates include ranking functions~\cite{Floyd1993} (to prove termination in program loops~\cite{DBLP:conf/cav/BradleyMS05,DBLP:conf/vmcai/PodelskiR04}) and Lyapunov functions (to prove stability in non-stochastic systems~\cite{khalil2002nonlinear}).
In this paper, we build upon ranking supermartingales~\cite{DBLP:conf/cav/ChakarovS13,DBLP:conf/popl/FioritiH15}, specifically \emph{reach-avoid supermartingale} (RASM) certificates for DTSSes~\cite{DBLP:conf/aaai/ZikelicLHC23}.
A RASM is a function from the system's states to real values which (among other conditions) must decrease \emph{in expectation} at every step under the DTSS's dynamics. Hence, a RASM induces a \emph{supermartingale}~\cite{DBLP:books/daglib/0073491,DBLP:journals/tac/PrajnaJP07}.
The existence of a RASM proves the satisfaction of a reach-avoid specification.
Standard approaches to finding certificates mostly use optimization over restrictive templates, e.g., low-degree polynomials~\cite{DBLP:conf/rss/SteinhardtT11,DBLP:journals/automatica/SantoyoDC21}.
Thus, we follow the recent trend of representing certificates as neural networks instead~\cite{DBLP:conf/nips/ChangRG19,DBLP:journals/csysl/AbateAGP21,DBLP:conf/corl/RichardsB018,DBLP:conf/nips/ZhouQSL22,DBLP:conf/nips/00010KV23} (collectively called \emph{neural certificates}~\cite{DBLP:journals/trob/DawsonGF23}).

\paragraph{Learner-verifier framework.}
An effective method to learn a RASM is to use a counterexample-guided framework as in \cref{fig:CEGIS}.
Such a framework iterates between (1) a \emph{learner} that trains a neural network as a candidate certificate and (2) a \emph{verifier} that either proves the validity of the candidate or returns counterexamples that disprove that the candidate is a RASM~\cite{DBLP:conf/cav/AbateDKKP18,DBLP:conf/tacas/ChatterjeeHLZ23}.
We initialize the learner with a policy (without guarantees) learned from any common RL algorithm.
Then, the learner trains the neural network certificate and updates the (initial) policy based on counterexamples that it receives from the verifier. 

\paragraph{Challenges in verifying RASMs.}
Recall that the verifier in \cref{fig:CEGIS} must prove that the candidate RASM decreases in expectation with the DTSS's dynamics.
This \emph{expected decrease condition} is shown by the blue line in \cref{fig:Lipschitz}, so we must show that this line is strictly negative in every state $\state$.
Because the state space is continuous, existing RASM verifiers check conditions on a discretization of the state space (points $\statedisc_1, \statedisc_2, \statedisc_3$ in \cref{fig:Lipschitz}) and use Lipschitz continuity of the policy and the RASM (with Lipschitz constant $K$ in \cref{fig:Lipschitz}) to generalize to the entire state space.\footnote{For other neural certificates, different verifiers have been used, e.g., using satisfiability modulo theories (SMT) solving, which we discuss in the related work (\cref{sec:Related}).}
However, this approach has two main limitations.
First, specifications with very high threshold probabilities necessarily require RASMs with infeasibly large Lipschitz constants.
Second, computing the (smallest) Lipschitz constant of a neural network exactly is intractable \cite{DBLP:conf/nips/VirmauxS18}, so existing RASM verifiers use loose upper bounds instead.
Consequently, applying the framework to safety-critical domains, where high levels of assurance are needed, remains elusive.

\begin{figure}[t!]
	\centering
        \iftikzexternal
            \includegraphics[scale=0.8]{figures/CEGIS.pdf}%
        \else
            \include{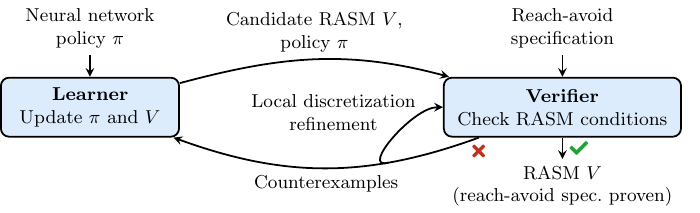}%
        \fi
        \vspace{-3mm}
        \caption{Overview of the learner-verifier framework for finding a RASM.}%
	\label{fig:CEGIS}%
\end{figure}%

\paragraph{Our approach to improving RASM verifiers.}
In this paper, we propose novel techniques that address these two limitations in verifying RASMs represented as neural networks.
Our method can verify reach-avoid specifications with threshold probabilities as high as $99.9999\%$, which is enabled by two key novel aspects:
\begin{enumerate}
    \item \textbf{Logarithmic RASMs.}
    Instead of training the neural network to satisfy the RASM conditions from~\cite{DBLP:conf/aaai/ZikelicLHC23}, we consider \emph{the logarithm of these conditions}.
    Like a (standard) RASM, the resulting certificate, which we call a \emph{logarithmic RASM} (or \emph{logRASM}), proves the satisfaction of a reach-avoid specification.
    A logRASM takes exponentially smaller values than a RASM, leading to a lower theoretical Lipschitz constant of the certificate.
    \item \textbf{Tighter bounds on Lipschitz constants.}
    We use \emph{weighted norm systems} instead of standard norms to compute Lipschitz constants for neural networks.
    In combination with \emph{averaged activation operators}~\cite{combettes2020lipschitz}, our method leads to significantly tighter global bounds on Lipschitz constants for neural networks.
    These better Lipschitz constants improve the performance of RASM verifiers without sacrificing the efficiency of the verifier.
\end{enumerate}
We embed our techniques in the learner-verifier framework depicted in \cref{fig:CEGIS}.
We further accelerate the framework using a local refinement scheme for the verifier that only refines the discretization at points where necessary, similar to ideas from~\cite{DBLP:conf/atva/AnsaripourCHLZ23,DBLP:journals/csysl/MathiesenCL23}.
Specifically, when a discretized point violates the RASM conditions, we can determine if the violation could be mitigated by further refining that point.
Following this intuition, we locally refine the discretization while avoiding unnecessary computations in cases where refinement cannot fix violations.

\paragraph{Contributions.}
In summary, we present novel techniques for verifying neural network policies in stochastic dynamical systems with reach-avoid specifications.
Our approach combines the use of logarithmic RASMs as certificates with tighter upper bounds on Lipschitz constants of neural networks.
Our experiments confirm that we can verify specifications with probability bounds orders of magnitude higher than the state-of-the-art.

\section{Problem Statement}
\label{sec:Problem}

We study discrete-time stochastic (nonlinear dynamical) systems, which can be seen as a concise representation of MDPs with continuous state and action spaces:

\begin{definition}[DTSS]
    \label{def:DTSS}
    A \emph{discrete-time stochastic system (DTSS)} is a tuple $\dtss \coloneqq \DTSS$, where $\X \subseteq \R^d$ is the (continuous) \emph{state space}, $\X_0 \subseteq \X$ is a set of \emph{initial states}, $\U \subseteq \R^m$ is the (continuous) \emph{action space}, $\Noise \subseteq \R^p$ is the \emph{noise space}, $\noisedist \colon \mathcal{B}_{\mathcal{N}} \to [0,1]$ is a \emph{probability measure} on the Borel $\sigma$-algebra $\mathcal{B}_\Noise$ on $\Noise$, and $f \colon \X \times \U \times \mathcal{N} \to \X$ is the \emph{transition function}.
\end{definition}
The stochasticity of a DTSS $\dtss$ is modeled by the probability space $(\mathcal{N}, \mathcal{B}_{\mathcal{N}}, \noisedist)$ (see, e.g.,~\cite{Durrett96} for details).
The state $\state_t$ of a DTSS is defined recursively over discrete steps $t \in \Nzero$ as
\begin{equation}
    \state_{t+1} = \dynamics(\state_t, \control_t, \noise_t), 
    \quad \state_0 \in \X_0,
\label{eq:dynamical_system}
\end{equation}
where $\noise_t \sim \noisedist$. An execution of the DTSS $\dtss$ is an infinite sequence $(\state_t, \control_t, \noise_t)_{t \in \Nzero}$ of state-action-disturbance triples that satisfy \cref{eq:dynamical_system} for all $t \in \Nzero$.

\paragraph{Policy.}
A (memoryless deterministic) policy $\policy \colon \X \to \U$ chooses actions in a DTSS such that $\control_t \coloneqq \policy(\state_t)$ in \cref{eq:dynamical_system}.
Fixing a policy $\policy$ and an initial state $\state_0 \in \X_0$ defines an \emph{induced Markov process} in the probability space of all executions~\cite{Bertsekas.Shreve78,DBLP:books/wi/Puterman94}.
We denote the probability measure on this probability space by $\Prob^\policy_{\state_0}$. 

\paragraph{Reach-avoid specification.}
For an induced Markov process, we want to evaluate the probability of reaching a \emph{target set} $\xTarget \subseteq \X$ before reaching an \emph{unsafe set} $\xUnsafe \subseteq \X$.
Formally, this \emph{reach-avoid probability} $\satprob^\policy_{\state_0}(\xTarget, \xUnsafe)$ is defined as
\begin{equation}
    \label{eq:reachavoid_prob}
    \satprob^\policy_{\state_0}(\xTarget, \xUnsafe) \coloneqq
    \Prob^\policy_{\state_0} \big\{ 
    \exists t \in \Nzero \, : \,
    \state_t \in \xTarget 
    \wedge
    ( \forall t' \in \{0,\ldots,t\} : \state_{t'} \notin \xUnsafe )
    \big\}.
\end{equation}
Intuitively, $\satprob^\policy_{\state_0}(\xTarget, \xUnsafe)$ is the probability that, from initial state $\state_0$, the system eventually reaches $\xTarget$ while never reaching $\xUnsafe$ before. 
\begin{definition}[Reach-avoid specification]
Given a DTSS as in \cref{def:DTSS} and a policy $\policy$, a \emph{reach-avoid specification} is a triple $\tuple{\xTarget, \xUnsafe, \rho}$ and is satisfied if $\satprob^\policy_{\state_0}(\xTarget, \xUnsafe) \geq \rho$ for all $\state_0 \in \X_0$.
\end{definition}

\paragraph{Formal problem.}
The following verification problem is central to this paper:
\begin{problemBold}[Policy verification]\label{problem}
    Given a DTSS $\dtss$ with a policy $\policy$, verify whether the reach-avoid specification $\tuple{\xTarget, \xUnsafe, \rho}$ is satisfied. %
\end{problemBold}

In this paper, we make the following standard assumptions~\cite{Abate2008probabilisticSystems,khalil2002nonlinear,DBLP:conf/nips/BerkenkampTS017}. These assumptions ensure that the reach-avoid probability is well-defined~\cite{Bertsekas.Shreve78}.

\begin{assumption}
    \label{assumptions} For a DTSS $\dtss = \DTSS$, we assume that:
    \begin{enumerate}[topsep=3pt]
    \item The transition function $f$ and the policy $\policy$ are Lipschitz continuous;
    \item The sets $\X$, $\X_0$, $\xTarget$, $\xUnsafe$ and $\U$ are Borel measurable;
    \item The sets $\X$ and $\mathcal{N}$ are compact (i.e., closed and bounded).
    \end{enumerate}
\end{assumption}

\section{Verifying Reach-Avoid Specifications Using RASMs}
\label{sec:Discretization}

In this section, we fix a DTSS~$\DTSS$ as in \cref{def:DTSS}, a policy $\policy$, and a reach-avoid specification $\tuple{\xTarget, \xUnsafe, \rho}$.
We recap the certificate, called a \emph{reach-avoid supermartingale} (RASM), and the verification procedure, proposed by~\cite{DBLP:conf/aaai/ZikelicLHC23} to solve \cref{problem}. 
This section deviates from~\cite{DBLP:conf/aaai/ZikelicLHC23} in one aspect (see \cref{remark:deviation}).

\begin{definition}[RASM]\label{def:rasm}
    A continuous function $V\colon\X \to \R_{\geq 0}$ is a \emph{reach-avoid supermartingale (RASM)} (for a fixed reach-avoid specification) if: %
    \begin{enumerate}[topsep=3pt]
        \item \emph{Initial condition:} \; $V(\state) \leq 1$ for all $\state \in \X_0$;
        \item \emph{Safety condition:} \; $V(\state) \geq \frac1{1-\rho}$ for all $\state \in \xUnsafe$;
        \item \emph{Expected decrease condition:} There exists $\epsilon > 0$ such that for all $\state \in \X \setminus \xTarget$ with  $V(\state) < \frac1{1-\rho}$, we have $\Exp_{\noise \sim \noisedist}\left[V(f(\state, \policy(\state), \noise))\right] \leq V(\state) - \epsilon$.
    \end{enumerate}
\end{definition}
A RASM $V$ associates each state $\state \in \X$ with a non-negative value and decreases in expectation at every step in the dynamics.
To reach an unsafe state from any initial state, the value $V(\state_t)$ needs to increase from at most 1 to at least $\frac{1}{1-\rho}$ along the execution. Since $V$ decreases in expectation with every step, this happens with probability at most $1-\rho$.
This intuitively shows why the existence of a RASM implies that the reach-avoid specification in \cref{problem} is satisfied:
\begin{theorem}[\!\!\!\cite{DBLP:conf/aaai/ZikelicLHC23}, proof in {\ifappendix \cref{proof:rasm}\else \cite[App.~B.2]{Badings_CAV25_extended}\fi}]\label{thm:rasm}
    If there exists a RASM for the reach-avoid specification, then this specification is satisfied. 
\end{theorem}

\subsection{Verifying RASMs by Discretization}
Since the state space $\X$ is continuous, it is not feasible to check the conditions from \cref{def:rasm} on individual points $\state \in \X$. 
Instead, we check slightly stronger versions of the conditions from \cref{def:rasm} on a \emph{discretization} of the state space into rectangular cells.
Concretely, for a given \emph{mesh size} $\tau > 0$, define $\cell^\tau_\infty(\state) = \{\state' : \| \state - \state' \|_{\infty} \leq \tau/d\}$, where $\| \cdot \|_\infty$ denotes the $\infty$-norm and $d$ is the dimension of the state space $\X \subseteq \mathbb{R}^d$.\footnote{We divide by $d$ in the definition of $\cell^\tau_\infty(\state)$ to ensure that $\|\state - \state'\|_1 \leq \tau$ for all $\state' \in \cell^\tau_\infty(\state)$, which we need later in \cref{def:drasm}.} We allow different mesh sizes for cells around different centers $\state$.
A discretization of $\X$ must cover $\X$ as follows:

\begin{definition}[Discretization of $\X$]
    \label{def:discretization}
    A \emph{discretization} of $\X$ is a finite set of points $\Xdisc$ together with a \emph{mesh size} $\tau_{\statedisc}$ for each $\statedisc \in \Xdisc$ such that for every $\state \in \X$ there exists an $\statedisc \in \Xdisc$ such that $\state \in \cell_\infty^{\tau_{\statedisc}}(\statedisc)$.
\end{definition}
\paragraph{Lipschitz constants.}
To generalize results on a discretization to the full state space, we use Lipschitz continuity (using 1-norms). We say that $L_g$ is a \emph{Lipschitz constant} of a function $g$ if $\| g(x) - g(x') \|_1 \leq L_g \| x - x'\|_1$ for all $x, x'$ in the domain of $g$. All Lipschitz constants in this paper will be with respect to the 1-norm. Throughout the paper, we write $L_f$ and $L_\policy$ for the Lipschitz constants of the dynamics $f$ and the policy $\policy$, respectively. 

\paragraph{Conditions on the discretization.}
We define a stronger version of the RASM conditions, such that the satisfaction of these stronger conditions on each point $\statedisc \in \Xdisc$ from a discretization of $\X$ implies the satisfaction of the conditions in \cref{def:rasm}.
Toward these conditions, we define for any $\statedisc \in \Xdisc$
\[
\Vmin(\statedisc) \,=\!\! \min\limits_{\state \in \cell^{\tau_{\statedisc}}_\infty(\statedisc)}\! V(\state) \qquad \text{and} \qquad \Vmax(\statedisc) \,=\!\! \max\limits_{\state \in \cell^{\tau_{\statedisc}}_\infty(\statedisc)}\! V(\state)%
\]%
as the min/max of $V$ within each  $\cell^{\tau_{\statedisc}}_\infty(\statedisc)$.
Computing $\Vmin(\statedisc)$ and $\Vmax(\statedisc)$ analytically is in general not possible, but using interval bound propagation (IBP)~\cite{gowal2018effectiveness} we can compute bounds $\Vlb(\statedisc)$ and $\Vub(\statedisc)$ satisfying \[\Vlb(\statedisc) \leq \Vmin(\statedisc) \leq V(\statedisc) \leq \Vmax(\statedisc) \leq \Vub(\statedisc)\] for all $\statedisc \in \Xdisc$.
These bounds obtained from IBP are generally tighter than those computed using Lipschitz constants.

\paragraph{Discrete RASM.}
Concretely, the satisfaction of the conditions in \cref{def:rasm} is then implied by the following conditions on the discretization.
\begin{definition}[Discrete RASM]\label{def:drasm}
    Let $V\colon\X \to \R_{\geq 0}$ be Lipschitz continuous with Lipschitz constant $L_V$.  Let\footnote{In our implementation, we use a slightly improved definition of $K$; see {\ifappendix \cref{app:splitlip}\else \cite[App.~A.1]{Badings_CAV25_extended}\fi}.} $K = L_V L_f \left(L_\pi + 1\right)$ and let $\Xdisc$ be a~discretization with a mesh size $\tau_{\statedisc}$ for each $\statedisc \in \Xdisc$. Then, $V$ is a \emph{discrete RASM} for $\Xdisc$
    if: %
    \begin{enumerate}[topsep=3pt]
    \item \emph{Initial condition:} \;  $\Vub(\statedisc) \leq 1$ for all $\statedisc \in \Xdisc$ with $\cell_\infty^{\tau_{\statedisc}}(\statedisc) \cap \X_0 \neq \emptyset$.
    \item \emph{Safety condition:} \;  $\Vlb(\statedisc) \geq \frac1{1-\rho}$ for all $\statedisc \in \Xdisc$ with $\cell_\infty^{\tau_{\statedisc}}(\statedisc) \cap \xUnsafe \neq \emptyset$.
    \item \emph{Expected decrease condition:}
    \begin{equation}\label{eq:expdecrcond}
        \Exp_{\noise \sim \noisedist}\left[V(f(\statedisc, \policy(\statedisc), \noise))\right] < \Vlb(\statedisc) - \tau_{\statedisc} K
    \end{equation}for all $\statedisc \in \Xdisc$ with $\cell_\infty^{\tau_{\statedisc}}(\statedisc) \cap (\X \setminus \xTarget) \neq \emptyset$ and  $\Vlb(\statedisc) < \frac1{1-\rho}$.
\end{enumerate}
\end{definition}

\begin{remark}
\label{remark:deviation}
We deviate slightly from~\cite{DBLP:conf/aaai/ZikelicLHC23}, which instead uses $V(\statedisc) - \tau (K + L_V)$ on the right-hand side of \cref{eq:expdecrcond}.
Since IBP usually gives tighter bounds than Lipschitz continuity, our version of \cref{eq:expdecrcond} is (slightly) easier to satisfy. 
\end{remark}

\noindent Verifying the conditions in \cref{def:drasm} is sufficient to show that $V$ is a RASM:

\begin{lemma}[proof in {\ifappendix \cref{proof:drasm}\else \cite[App.~B.3]{Badings_CAV25_extended}\fi}]\label{lem:drasm} 
Every discrete RASM is also a RASM. 
\end{lemma}

Computing the expected value in \cref{eq:expdecrcond} exactly is generally infeasible.
Instead, we bound  this expectation from above by discretizing the noise space $\mathcal{N}$ into a collection of cells $\mathcal{C}$ (which is possible since $\mathcal{N}$ is compact), such that
$
\Exp_{\noise \sim \noisedist}\left[V(f(\statedisc, \policy(\statedisc), \noise))\right] \leq \sum_{C \in \mathcal{C}} \Prob(\noise \in C) \sup_{\noise \in C} \left[V(f(\statedisc, \policy(\statedisc), \noise)) \right],
$
where we again use IBP to upper bound $\sup_{\noise \in C} \left[V(f(\statedisc, \policy(\statedisc), \noise))\right]$ for each cell $C \in \mathcal{C}$. 

\paragraph{Shape of a discrete RASM.} 
Finally, we make some remarks about the typical shape of a discrete RASM $V$, especially in the context where we try to minimize its Lipschitz constant $K$ (to speed up verification). Since $V$ is Lipschitz continuous, it is differentiable almost everywhere. Recall that the negative gradient vector $-\nabla V$ points in the direction that $V$ decreases fastest. Due to the expected decrease condition, $-\nabla V$ will therefore typically roughly point in the direction the state moves under the dynamics. Moreover, since the expected decrease condition requires a fixed decrease of $\tau_{\statedisc} K$, the slope $\|\nabla V\|$ will be roughly constant, at least in regions where the step size under the dynamics is roughly constant.

\subsection{Challenges in Verifying RASMs}
The existing verification procedure based on the discrete RASM conditions in \cref{def:drasm} is generally conservative and computationally expensive.
The scalability of the procedure is especially limited by the large Lipschitz constant $L_V$ of any RASM.
Concretely, the initial condition in \cref{def:rasm} requires a value of at least $\frac{1}{1-\rho}$ in all $\state \in \X_U$, while the safety condition requires a value of at most $1$ in all $\state \in \X_0$.
Thus, these conditions on the RASM imply that
\[
L_V \geq \frac1{\text{dist}(\X_0, \X_U)} \left(\frac1{1-\rho}-1\right),
\]
where $\text{dist}(\X_0, \X_U) = \inf_{(\state_0, \state_u)\in \X_0 \times \xUnsafe} \|\state_0-\state_u\|_1$ is the smallest distance between an initial and an unsafe state. 
Since a large $L_V$ leads to a large $K$, verifying the conditions in \cref{def:drasm} requires a fine discretization (i.e., a discretization with small mesh size $\tau$), which is hence computationally expensive. 
Moreover, a higher $L_V$ is required for specifications with a higher threshold probability $\rho$.

The limitations of large Lipschitz constants are exacerbated since the RASM $V$ and policy $\policy$ are neural networks.
In particular, computing the (smallest) Lipschitz constant of a neural network exactly is intractable~\cite{DBLP:conf/nips/VirmauxS18}.
Hence, approaches such as~\cite{DBLP:conf/aaai/ZikelicLHC23} use loose upper bounds on these Lipschitz constants instead.

Concretely, our key contributions address these challenges by (1) taking the logarithm of the RASM conditions (\cref{sec:Log}), which reduces the lower bound on $L_V$ to $\frac1{\text{dist}(\X_0, \X_U)} \log\big(\frac1{1-\rho}\big)$, and (2) computing tighter bounds on the Lipschitz constant of neural networks (\cref{sec:Lipschitz}).

\section{Logarithmic RASMs} \label{sec:Log}

We now turn to our first main contribution, which is proposing the notion of \emph{logarithmic RASMs}, and providing a less conservative method for checking that a function is a logarithmic RASM using a discretization. 
Our starting point is to take the (natural) logarithm of the RASM conditions in \cref{def:rasm}: 

\begin{definition}[logRASM]\label{def:lograsm}
    A continuous function $V\colon\X \to \R$ is a \emph{logarithmic RASM (logRASM)} if: %
    \begin{enumerate}[topsep=3pt]
        \item \emph{Initial condition:} \;  $V(\state) \leq 0$ for all $\state \in \X_0$;
        \item \emph{Safety condition:} \;  $V(\state) \geq \log\big(\frac1{1-\rho}\big)$ for all $\state \in \xUnsafe$;
        \item \emph{Expected decrease condition:} There exists $\epsilon > 0$ such that for all $\state \in \X \setminus \xTarget$ with  $V(\state) < \log\big(\frac1{1-\rho}\big)$, we have $\log \Exp_{\noise \sim \noisedist}\big[\!\exp\!\big(V(f(\state, \policy(\state), \noise))\big)\!\big] \leq V(\state) - \epsilon$.
    \end{enumerate}
\end{definition}
\noindent The exponential of a logRASM is indeed a RASM:
\begin{lemma}[proof in {\ifappendix \cref{proof:lograsm}\else \cite[App.~B.4]{Badings_CAV25_extended}\fi}] \label{lem:lograsm}
If $V$ is a logRASM, then $ \exp\!\big(V\big)$ is a~RASM.
\end{lemma}
The threshold of the safety condition \cref{def:lograsm} is only $\log\big(\frac1{1-\rho}\big)$, which is much smaller (and thus easier to satisfy) than $\frac1{1-\rho}$ in \cref{def:rasm}.
On the other hand, for the expected decrease condition we now have to bound $\log \Exp_{\noise \sim \noisedist}\left[\exp(V(\state_{t+1}))\right]$ in \cref{def:lograsm}, which by Jensen's inequality is always larger than the $\Exp_{\noise \sim \noisedist}\left[V(\state_{t+1})\right]$ in \cref{def:rasm}, and thus the condition of \cref{def:lograsm} is easier to satisfy. Nevertheless, in practice, the positive effect of the (exponentially) smaller threshold is larger.

\paragraph{Conditions on the discretization.} Next, we show how we can check that a function $V\colon\X \to \R$ is a logRASM by checking stronger conditions on a discretization.

\begin{definition}[Discrete logRASM]\label{def:dlograsm}
    Let $V\colon\X \to \R$ be Lipschitz continuous with Lipschitz constant $L_V$.  Let $K = L_V L_f \left(L_\pi + 1\right)$ and let $\Xdisc$ be a~discretization with mesh sizes $\tau_{\statedisc}$. Then, $V$ is a \emph{discrete logRASM} for $\Xdisc$
    if the~following~hold: 
    \begin{enumerate}[topsep=3pt]
    \item \emph{Initial condition:}\, $\Vub(\statedisc) \leq 0$ for all $\statedisc \in \Xdisc$ with $\cell_{\infty}^{\tau_{\statedisc}}(\statedisc) \cap \X_0 \neq \emptyset$.
    \item \emph{Safety condition:}\, $\Vlb(\statedisc) \geq \log\big(\frac1{1-\rho}\big)$ for all $\statedisc \in \Xdisc$ with $\cell_\infty^{\tau_{\statedisc}}(\statedisc) \cap \xUnsafe \neq \emptyset$.
    \item \emph{Expected decrease condition:}
    \begin{equation}\label{eq:expdecrcond_new}
        \log \Exp_{\noise \sim \noisedist}\Big[\exp\!\big(V(f(\statedisc, \policy(\statedisc), \noise))\big)\Big] < \Vlb(\statedisc) - \tau_{\statedisc} K
    \end{equation}for all $\statedisc \in \Xdisc$ such that $\cell_\infty^{\tau_{\statedisc}}(\statedisc) \cap (\X \setminus \xTarget) \neq \emptyset$ and  $\Vlb(\statedisc) < \log\big(\frac1{1-\rho}\big)$.
\end{enumerate}
\end{definition}

The following theorem is the main result of this section and shows that the existence of a discrete logRASM implies the existence of a RASM.

\begin{theorem}[proof in {\ifappendix \cref{proof:logdrasm}\else \cite[App.~B.5]{Badings_CAV25_extended}\fi}] \label{thm:logdrasm}
If $V$ is a discrete logRASM for a discretization $\Xdisc$, then $ \exp\!\big(V\big)$ is a RASM.
\end{theorem}

We now sketch the proof of \cref{thm:logdrasm}. The main difference compared to \cref{lem:drasm} lies in the expected decrease condition. To show that \cref{eq:expdecrcond_new} implies the expected decrease condition in \cref{def:rasm} for $\exp(V)$, we first note that $V(f(\state, \policy(\state), \noise)) \leq V(f(\statedisc, \policy(\statedisc), \noise)) + \tau_{\statedisc} K$ by Lipschitz continuity.
  Hence, 
   \begin{align*} \Exp_{\noise \sim \noisedist} \big[\exp\!\big(V(f(\state, \policy(\state), \noise))\big)\big] 
     &\leq \Exp_{\noise \sim \noisedist} \big[\exp\!\big(V(f(\statedisc, \policy(\statedisc), \noise))+\tau_{\statedisc} K\big)\big] 
  \\ &= e^{\tau_{\statedisc} K}\Exp_{\noise \sim \noisedist} \big[\exp\!\big(V(f(\statedisc, \policy(\statedisc), \noise))\big)\big] 
  \\ &< e^{\tau_{\statedisc} K} e^{ - \tau_{\statedisc} K} \exp (\Vlb(\statedisc)) \leq \exp(V(\state)).
  \end{align*}
Finally, we obtain the $-\epsilon$ in the expected decrease condition from \cref{def:rasm} using a compactness argument (see {\ifappendix \cref{proof:logdrasm} for details\else \cite[App.~B.5]{Badings_CAV25_extended}\fi}), from which \cref{thm:logdrasm} follows.

The main contribution of \cref{def:dlograsm} and \cref{thm:logdrasm} lies in \cref{eq:expdecrcond_new}. Notably, proving that \cref{eq:expdecrcond_new} is sufficient for showing that the expected decrease condition holds effectively exploits the \emph{local} Lipschitz constant of $\exp$, rather than the global Lipschitz constant of $\exp(V)$. Indeed, if we would directly adapt the proof of \cref{lem:drasm}, we would obtain the condition
\begin{equation} \label{eq:expdecrcond_weak}
        \Exp_{\noise \sim \noisedist}\big[\exp\big(V(f(\statedisc, \policy(\statedisc), \noise))\big)\big] < \exp(\Vlb(\statedisc)) - \tau_{\statedisc} K'
\end{equation}
where $K' = \frac1{1-\rho} K$ is a Lipschitz constant of $\exp(V)$, where we use that we can cap any RASM at $\frac1{1-\rho}$.  The following lemma shows that our novel condition \cref{eq:expdecrcond_new} is always a weaker (i.e., better) condition than \cref{eq:expdecrcond_weak}.

\begin{lemma}[proof in {\ifappendix \cref{proof:better}\else \cite[App.~B.6]{Badings_CAV25_extended}\fi}] \label{lem:better}
Let $K' = \frac1{1-\rho} K > 0$. If $\Vlb(\statedisc) < \log\big(\frac1{1-\rho}\big)$, then $\exp(\Vlb(\statedisc)) - \tau_{\statedisc} K' < \exp(\Vlb(\statedisc) - \tau_{\statedisc} K)$.
\end{lemma}

\begin{example}
As a concrete example, consider $\rho = 0.9999$, $K' = 20$, and $\Vlb(\statedisc) = 5$. Then $\exp(\Vlb(\statedisc)) - \tau_{\statedisc} K' \approx -51.6 < 145.5 \approx \exp(\Vlb(\statedisc) - \tau_{\statedisc} K)$, showing that \cref{eq:expdecrcond_new} is much easier to satisfy than \cref{eq:expdecrcond_weak}. To obtain a right hand side of 145.5 in \cref{eq:expdecrcond_weak} we would require $\tau_{\statedisc} \approx 1.5 \cdot 10^{-5}$.  Hence, our new approach allows a discretization that is more than 60 times coarser (in each dimension).
\end{example}

\paragraph{Shape of a discrete logRASM.}
Although both a discrete RASM and the exponential of a discrete logRASM yield a RASM, they typically look quite different. Due to the fixed decrease of $\tau_{\statedisc} K$ required by the expected decrease condition~\eqref{eq:expdecrcond_new}, also a discrete logRASM generally has a nearly constant slope $\| \nabla V \|$, similarly to a discrete RASM. However, a RASM $V'$ has an exponentially larger Lipschitz constant $K$ and thus an (exponentially) larger slope than a logRASM $V$. After taking the exponential, the RASM $\exp(V)$ has a similar slope as $V'$ at points where $V'$ and  $\exp(V)$ are large, but $\exp(V)$ has a smaller slope at points where $V'$ and  $\exp(V)$ are small. %

\section{Tighter Lipschitz Constants for Neural Networks}
\label{sec:Lipschitz}

Our second main contribution is a novel method to compute tighter Lipschitz constants for feed-forward neural networks. 
In particular, to obtain tighter \emph{global} Lipschitz constants, we combine the use of \emph{weighted 1-norms} defined by $\|x\| =  \sum_i \weight_i \lvert x_i\rvert$ (for weights $\weight_i > 0$) with \emph{averaged activation operators}~\cite{combettes2020lipschitz}. 

We first provide some intuition on why using weighted norms leads to tighter Lipschitz constants. For the standard, unweighted norm, the Lipschitz constant provides the same upper bound on the change of a function in each direction. In contrast, weighted norms allow for different bounds in different directions. When composing functions (e.g., different neural network layers), the method with standard norm therefore assumes the same upper bound on the change in all directions, while our method accounts for the upper bound on each individual direction. Since these bounds are generally tighter for all but the maximal direction, our method computes tighter Lipschitz constants.

We consider feed-forward neural networks with linear layers:

\begin{definition}[Neural network] \label{def:neural}
\!\!An $(n{+}1)$-layer (feed-forward) \emph{neural network} with dimensions $m_k$ $(0 \leq k \leq n)$ is a sequence of tuples $\nn \coloneqq  (\tuple{A_k, b_k, R_k})_{k=1}^n$, where $A_k \in \R^{m_k \times m_{k-1}}$ are \emph{matrices},\footnote{We use $A$ rather than the standard $W$ for the matrices of the neural network to avoid confusion with the weights from weighted norms.} $b_k \in \R^{m_k}$ are \emph{biases}, and $R_k \colon \R^{m_k} \to \R^{m_k}$ are \emph{activation functions}. The operator $\network \colon \R^{m_0} \to \R^{m_n}$ corresponding~to~$\nn$ maps $x_0$ to $x_n$,  where $x_k$ is defined recursively by $x_{k} = R_{k}\left( A_k x_{k-1} + b_k\right)$ for $1 \leq k \leq n$.
\end{definition}
\begin{assumption}
    \label{assumption:activation_lip}
    The Lipschitz constant of each activation function $R_k$ is~1.
\end{assumption}
\cref{assumption:activation_lip} is satisfied by common activation functions such as ReLU, Softplus, tanh, and sigmoid. It is straightforward to generalize our results to any Lipschitz continuous activation functions, but we do not pursue this here. In the following, we use the notation introduced in \cref{def:neural} for the components of the neural network. 

\subsection{Weighted Norms} \label{subsec:weighted}

A weighted $1$-norm of dimension $m$ is a function from $\R^m$ to $\R$ defined by $\| x \| = \sum_{i=1}^m w_i \| x_i \|$, where $w \in \R_{>0}^m$.
By combining a weighted $1$-norm for each layer of a neural network, we obtain a \emph{weight system}.
\begin{definition}[Weight system]
A \emph{weight system} $\weightsys$ for an $(n\!+\!1)$-layer neural network consists of a weighted 1-norm $\| x \|_\weightsys^k = \sum_{i=1}^{m_k} \weight_i^k |x_i|$ for each layer $0 \leq k \leq n$, where $\weight^k \in \R_{>0}^{m_k}$ and $\max_i \weight_i^k = 1$.\footnote{We assume w.l.o.g.\ that the max.\ weight is 1: we may rescale all weights (and thus the Lipschitz bound).} 
\end{definition}
Given weighted norms $\| \cdot \|_\weightsys^k$ and $\| \cdot \|_\weightsys^\ell$ on $\R^{m_k}$ and $\R^{m_\ell}$, we define the weighted norm for a matrix $M \in \R^{m_\ell \times m_k}$ as $\| M \|^{k,\ell}_{\weightsys} = \sup \Big\{ \frac{\| Mx \|^\ell_{\weightsys}}{\| x \|^k_{\weightsys}} \!~\Big|~\! x \in \R^{m_k}, x \neq 0\Big\}$. The next lemma shows how we compute $\| M \|^{k,\ell}_{\weightsys}$ in practice.
\begin{lemma}[proof in {\ifappendix \cref{proof:matrixnorm}\else \cite[App.~B.7]{Badings_CAV25_extended}\fi}]\label{lem:matrixnorm}
Let $M \in \R^{m_\ell \times m_k}$ be a matrix with entries $M_{ij}$. Equip the space $\R^{m_k}$ with the norm $\|x\|_{\weightsys}^k = \sum_{i=1}^{m_k} \weight_i^k \lvert x_i\rvert $, and the space $\R^{m_\ell}$ with the norm $\|x\|_{\weightsys}^{\ell} = \sum_{i=1}^{m_\ell} \weight_i^\ell \lvert x_i\rvert $. Then the corresponding matrix norm satisfies 
$
\|M\|^{k, \ell}_{\weightsys} \;=\; \max\limits_{1 \leq j \leq m_k}  \left[\frac{1}{\weight_j^k} \sum_{i=1}^{m_\ell} \weight_i^\ell  \left\lvert M_{ij} \right\rvert \right]$.
\end{lemma}
We now define the Lipschitz bound of a neural network for a weight system. 
\begin{definition}[Lipschitz bound]
The \emph{Lipschitz bound} of a neural network~$\nn$ for a weight system~$\weightsys$ is  $\lipnetwork = \prod_{\ell=1}^n\! \| A_\ell \|^{\ell-1, \ell}_{\weightsys}$.
\end{definition}
The Lipschitz bound $\lipnetwork$ is indeed a Lipschitz constant of the operator $\network$:
\begin{lemma}[proof in {\ifappendix \cref{proof:lipprod}\else \cite[App.~B.8]{Badings_CAV25_extended}\fi}]\label{lem:lipprod} 
Let $\weightsys$ be a weight system. Then $\lipnetwork$ is a Lipschitz constant of $\network$, i.e.\ $\|\network(x) - \network(x')\|^n_{\weightsys} \leq  \lipnetwork \|x - x'\|^0_{\weightsys}$ for all $x, x' \in \R^{m_0}$. If additionally $w^n_i=1$ for all $1 \leq i \leq m_n$,  then $\lipnetwork$ is a Lipschitz constant of $\network$ for the standard (unweighted) 1-norm, i.e.\ $\|\network(x) - \network(x')\| \leq \lipnetwork \|x - x'\|$ for all $x, x' \in \R^{m_0}$.
\end{lemma}
By choosing the same unweighted norm for each layer, one may recover the Lipschitz bound from \cite{DBLP:journals/corr/SzegedyZSBEGF13}, which corresponds to the approach presented in \cite{DBLP:conf/aaai/ZikelicLHC23}. We now show that choosing weights aptly can lead to smaller Lipschitz bounds:
\begin{figure}[t!]
	\centering
        \scalebox{0.74}{
\begin{tikzpicture}
\begin{axis}[
axis lines = center,
xmin = -2,
ymin = -2,
xmax = 2,
ymax = 2,
xtick={-2,-1,0,1,2},
xticklabels={$-1$,$-0.5$,,$0.5$,$1$},
ytick={-2,-1,0,1,2},
yticklabels={$-1$,$-0.5$,,$0.5$,$1$},
legend style={at={(1.5,0.62)},anchor=south east,legend columns=1,column sep=0.5em},
width = 6.4 cm,
height = 4 cm
]
\addplot [
domain=0:2, 
samples=100, 
color=black,
]
{1-x/2};
\addlegendentry{ours, $|x^2| \leq 1$}
\addplot [
domain=0:1, 
samples=100, 
color=red,
]
{1-1*x};
\addlegendentry{\cite{DBLP:journals/corr/SzegedyZSBEGF13}, $|x^2| \leq 1$}
\addplot [
domain=0:2, 
samples=100, 
color=black,
]
{-1+x/2};
\addplot [
domain=-2:0, 
samples=100, 
color=black,
]
{-1-x/2};
\addplot [
domain=-2:0, 
samples=100, 
color=black,
]
{1+x/2};

\addplot [
domain=0:1, 
samples=100, 
color=red,
]
{-1+1*x};
\addplot [
domain=-1:0, 
samples=100, 
color=red,
]
{-1-1*x};
\addplot [
domain=-1:0, 
samples=100, 
color=red,
]
{1+1*x};

\end{axis}

\begin{scope}[shift={(8,0)}]
\begin{axis}[
axis lines = center,
xmin = -2,
ymin = -2,
xmax = 2,
ymax = 2,
xtick={-2,-1,0,1,2},
xticklabels={$\nicefrac{-2}{6}$,$\nicefrac{-1}{6}$,,$\nicefrac{1}{6}$,$\nicefrac{2}{6}$},
ytick={-2,-1,0,1,2},
yticklabels={$\nicefrac{-2}{6}$,$\nicefrac{-1}{6}$,,$\nicefrac{1}{6}$,$\nicefrac{2}{6}$},
legend style={at={(1.4,0.62)},anchor=south east,legend columns=1,column sep=0.5em},
width = 6.4 cm,
height = 4 cm
]
\addplot [
domain=0:1, 
samples=100, 
color=black,
]
{2-2*x};
\addlegendentry{ours, $|x^2| \leq 1$}
\addplot [
domain=0:0.6, 
samples=100, 
color=red,
]
{0.6-x};
\addlegendentry{\cite{DBLP:journals/corr/SzegedyZSBEGF13}, $|x^2| \leq 1$}
\addplot [
domain=0:1, 
samples=100, 
color=black,
]
{-2+2*x};
\addplot [
domain=-1:0, 
samples=100, 
color=black,
]
{-2-2*x};
\addplot [
domain=-1:0, 
samples=100, 
color=black,
]
{2+2*x};

\addplot [
domain=0:0.6, 
samples=100, 
color=red,
]
{-0.6+x};
\addplot [
domain=-0.6:0, 
samples=100, 
color=red,
]
{-0.6-x};
\addplot [
domain=-0.6:0, 
samples=100, 
color=red,
]
{0.6+x};
\end{axis}
\end{scope}
\end{tikzpicture}
}%
\vspace{-30pt}%
        \caption{On the left: The region such that we prove that the input $x^1$ of the hidden layer maps to output $x^2$ with $|x^2| \leq 1$. On the right: The region such that the input $x^0$ maps to output $x^2$ with $|x^2| \leq 1$. In black: our approach, in red: \cite{DBLP:journals/corr/SzegedyZSBEGF13}.}%
	\label{fig:weightednorms}%
\end{figure}%
\begin{example} \label{example:weighted}
Consider a neural network with 3 layers (1 hidden layer), matrices $A_1 = \Big({\small\begin{matrix} 4 & -1 \\[-2pt] -1 & 1 \end{matrix}}\Big)$ and $A_2 = \begin{pmatrix} 1 & 2 \end{pmatrix}$, biases $b_1 = \Big({\small\begin{matrix} 0  \\[-2pt] 0 \end{matrix}}\Big)$ and $b_2 = 0$, and ReLU activation functions. Define a weight system $\weightsys$ by $w_1^0 = 1$, $w_2^0 = \tfrac12$, $w_1^1 = \tfrac12$, $w_2^1 = 1$, and $w_1^2 = 1$. 
Then \cref{lem:matrixnorm} yields $\lipnetwork = \|A_1\|^{0,1}_{\weightsys}\|A_2\|^{1,2}_{\weightsys} = 3 \cdot 2 = 6$.
In contrast, the Lipschitz bound from \cite{DBLP:journals/corr/SzegedyZSBEGF13} is $\|A_1\|\|A_2\| = 5 \cdot 2 = 10$. While both approaches compute a bound of 2 corresponding to $A_2$, our approach using the weighted norms records that the effect of the first neuron is only $w_1^1 = \tfrac12$ times~$\|A_2\|=2$, which in turn yields a tighter bound for $A_1$, namely 3 instead of 5. We illustrate these better bounds for each layer in \cref{fig:weightednorms}. For the first layer, we only get a better bound for one of the directions, but then using this, we get a better bound for both directions and hence a better Lipschitz constant for the second layer.
\end{example}
Next, we show how to compute a weight system $\weightsys$ such that $\lipnetwork$ is lowest among all weight systems, for given weights on the output layer.\footnote{In practice, we set the weights on the output layer all to 1, since in the end we are interested in Lipschitz bounds for the unweighted 1-norm.} We call such a weight system \emph{optimal}.  Observe that the Lipschitz bound decreases when the weights on the input layer increase. 
This observation motivates the following optimality criterion for weights, which is based on the product of the Lipschitz bound and the weights on the input layer.
\begin{definition}[Optimality]
A weight system $\weightsys$ is \emph{optimal} for output weights $\weight^n$ if the Lipschitz bound satisfies $\lipnetwork\weight^0_j \leq \lipnetworkalt\weightalt^0_j $ for all $1 \leq j \leq m_0$ and all weight systems $\weightsysalt$ with output weights $\weight^n$, where $\weightalt^0$ are the input~weights~of~$\weightsysalt$.
\end{definition} 
\begin{lemma}[proof in {\ifappendix \cref{proof:optlem}\else \cite[App.~B.9]{Badings_CAV25_extended}\fi}]\label{lem:optlem}
If $\weightsys$ is optimal for output weights $\weight^n$, then $\lipnetwork \leq \lipnetworkalt$ for all weight systems $\weightsysalt$ with output weights $\weight^n$.
\end{lemma}
\begin{algorithm}[t]
\caption{Computing optimal weights.}\label{alg:weights}
\begin{algorithmic}
\Require{Output weights $\weight^n$ for output layer $n$, matrices $A_k \in \R^{m_k \times m_{k-1}}$ ($1 \leq k \leq n$) as in \cref{def:neural}.}
\Ensure{Input weights $w^0$ and a Lipschitz bound $K$ such that $(w^0, K)$ is optimal.}
\For{$\ell = n, \dots, 1$}
\State{$K_\ell$ $\gets$ $\!\!\!\max\limits_{1 \leq j \leq m_{\!\ell\!-\!1}} \!\!\!\sum_{i=1}^{m_\ell} \!\weight_i^\ell \! \left\lvert (A_\ell)_{ij} \right\rvert$  \Comment{Lipschitz constant $\| A_\ell \|^{\ell-1, \ell}_{\weightsys}$ if $w^{\ell-1}_j = 1$ for all $j$}}
\For{$j = 1, \dots, m_{\ell-1}$}
\State{$\weight_j^{\ell\!-\!1}$ $\!\gets$ $\!\frac1{K_\ell}\!\sum_{i=1}^{m_\ell} \! \weight_i^\ell \! \left\lvert (A_\ell)_{ij} \right\rvert$\quad\Comment{Smallest weight for which Lipschitz constant is $K_\ell$}}
\EndFor
\EndFor
\Return{$w^0$, $\prod_{\ell=1}^n K_\ell$} \Comment{Return input weights and Lipschitz bound $K$}
\end{algorithmic}
\end{algorithm}

We now explain how \cref{alg:weights} computes an optimal weight system.
Given weights $\weight^\ell_i$ for the space $\R^{m_{\ell}}$, we can set the normalized weights $\weight_j^k$ in \cref{lem:matrixnorm} proportional to $\sum_{i=1}^{m_\ell} \weight_i^\ell  \left\lvert M_{ij} \right\rvert$, which implies that the maximum in \cref{lem:matrixnorm} is attained for all $1 \leq j \leq m_k$. \cref{alg:weights} starts from given output weights~$w^n_i$ and iteratively computes weights $\weight_j^{\ell-1}$ given weights $\weight_i^\ell$ in this way. Then the maximum in \cref{lem:matrixnorm} is attained by all $1 \leq j \leq m_{\ell-1}$ for the matrix $M = A_\ell$.

\begin{theorem}[Correctness of \cref{alg:weights}; proof in {\ifappendix \cref{proof:optweights}\else \cite[App.~B.10]{Badings_CAV25_extended}\fi}] \label{thm:optweights}
Let output weights $\weight^n$ be given. 
Then the weights $\weight^\ell_j$ computed using  Algorithm \ref{alg:weights} are optimal for output weights $\weight^n$.
\end{theorem}

In practice, we set the output weights $\weight^n_i = 1$ for all $i$. 
Then \cref{alg:weights} computes a Lipschitz constant of $\network$ for the unweighted 1-norm, cf.\ \cref{lem:lipprod}.

\subsection{Averaged Activation Operators}\label{app:averagedactivationoperators}

Next, we explain how to combine weighted norms with \emph{averaged activation operators} \cite{baillon1978asymptotic,combettes2020lipschitz} to compute even tighter Lipschitz constants. Let $L_{\network}$ denote the Lipschitz constant of the neural network operator $\network$.

\begin{definition}
An $\alpha$-\emph{averaged activation operator} $(0 < \alpha < 1)$ is an operator $R \colon \R \rightarrow \R$ that satisfies $R = (1-\alpha) \Id + \alpha Q$ for some  $Q \colon \R \rightarrow \R$ with Lipschitz constant $1$ and identity function $\Id$.
\end{definition}

\noindent Since $\relu(x) = \tfrac12 x + \tfrac12|x|$, the $\relu$ is $\tfrac12$-averaged.
For simplicity, we only use $\tfrac12$-averaged activation operators.  
We extend a result of~\cite{combettes2020lipschitz} to weighted norms:

\begin{theorem}[proof in {\ifappendix \cref{proof:averagedactivation}\else \cite[App.~B.11]{Badings_CAV25_extended}\fi}] \label{thm:averagedactivation}
Consider an $(n+1)$-layer network with $\tfrac12$-averaged activation operators $R_k$.  Let  $\weightsys$ be a corresponding weight system. Let $S_n = \{(k_1, k_2, \ldots, k_r) \in \mathbb{N}_0^r \mid 0 \leq r \leq n-1, \, 1 \leq k_1 < k_2 < \dots < k_r \leq n-1\}$. Then, the Lipschitz constant $L_{\network}$ of the neural network operator $\network$ satisfies
\[
L_{\network} \leq \frac1{2^{n-1}} \sum_{(k_1, k_2, \ldots, k_r) \in S_n}\;\left[\prod_{\ell=1}^{r+1} \| A_{k_\ell} \ldots A_{k_{\ell-1}+1} \|^{k_{\ell-1}, k_\ell}_{\weightsys} \right],
\]
where we set $k_0 = 0$ and $k_{r+1} = n$.
\end{theorem}

\noindent For $n=2$, this yields $L_\network \leq \tfrac12 \big( \|A_2 A_1\|_{\weightsys}^{0,2} +\|A_2\|_{\weightsys}^{1,2}  \|A_1\|_{\weightsys}^{0,1} \big)$, which (by the submultiplicativity of the matrix norm) is smaller than $\|A_1\|_{\weightsys}^{1,2}  \|A_0\|_{\weightsys}^{0,1}$. 

In the general case, the submultiplicativity of the matrix norm implies that each of the $2^{n-1}$ summands in the sum is at most $\prod_{\ell=1}^n \|A_\ell\|_{\weightsys}^{\ell-1,\ell}$. 
Hence, the bound in \cref{thm:averagedactivation} is (for given weights $\weightsys$) tighter than the bound $\prod_{\ell=1}^n \| A_\ell \|^{\ell-1, \ell}_{\weightsys}$. 
The intuition for the result is that for the `identity part' of the averaged activation operator, we take the matrix product inside the matrix norm (which gives a smaller result than taking the product of the matrix norms).

The fact that \cref{thm:averagedactivation}  yields tighter bounds does not contradict the optimality in \cref{thm:optweights}, since \cref{thm:optweights} only applies if the formula $\prod_{\ell=1}^n \| A_\ell \|^{\ell-1, \ell}_{\weightsys}$ is used, while the bound in \cref{thm:averagedactivation} is always smaller for given weights.

\begin{example}  Consider the network introduced in \cref{example:weighted}, for which we have $A_2A_1 = \begin{pmatrix} 2 & 1 \end{pmatrix}$. Then just using averaged activation operators (as in \cite{combettes2020lipschitz}) yields a bound of $L_\network \leq \tfrac12 \left( \|A_2 A_1\| +\|A_2\|  \|A_1\| \right) = \tfrac12 \left( 2 + 2 \cdot 5 \right) = 6$, while using both weighted norms and averaged activation operators (\cref{thm:averagedactivation}) yields a bound of $L_\network \leq \tfrac12 \big(\|A_2 A_1\|_{\weightsys}^{0,2} +\|A_2\|_{\weightsys}^{1,2}  \|A_1\|_{\weightsys}^{0,1}\big)  = \tfrac12 \left( 2 + 2 \cdot 3 \right) = 4$.
\end{example}

\newcommand{\meshinit}{\tau_{\text{init}}}
\newcommand{\meshmin}{\tau_{\text{min}}}

\section{Learner-Verifier Framework}
\label{sec:Implementation}

Following \cite{DBLP:conf/aaai/ZikelicLHC23}, we implement our techniques from \cref{sec:Log,sec:Lipschitz} in the learner-verifier framework from \cref{fig:CEGIS}.
Given an initial policy $\policy$ (which we assume to be a neural network), the learner trains the certificate $V$ to be a logRASM.
The verifier checks whether $V$ is a discrete logRASM (as per \cref{def:dlograsm}), and thus whether $\exp(V)$ is a RASM.
Checking these conditions involves the Lipschitz constants $L_\policy$ and $L_V$, which we compute using our techniques from \cref{sec:Lipschitz}.
We terminate and return the certificate $V$ upon satisfaction of these conditions.
If the conditions are not satisfied, the verifier either refines the discretization or returns counterexamples to the learner. %
As the learner also updates the policy $\policy$, we effectively solve the following problem:
\begin{problemBold}[Policy synthesis]\label{problem2}
    Given a DTSS $\dtss$, compute a policy $\policy$ such that the reach-avoid specification $\tuple{\xTarget, \xUnsafe, \rho}$ is satisfied.
\end{problemBold}%
Termination of the learner-verifier implies that we have solved \cref{problem2}.

\newcommand{\Linit}{0}
\newcommand{\Lunsafe}{\text{U}}
\newcommand{\Lexp}{{\Exp}}

\subsubsection{Verifier.}
Recall that the verifier checks the discrete logRASM conditions from \cref{def:dlograsm} on a discretization $\Xdisc$ of the state space.
When the verifier finds a point $\statedisc \in \Xdisc$ that violates these conditions, we either decrease the mesh size $\tau_{\statedisc}$ of $\statedisc$ (to try and mitigate the violation) or return $\statedisc$ as a counterexample to the learner.

\paragraph{Local refinement.}
Decreasing the mesh size $\tau_{\statedisc}$ of the point $\statedisc$ can only mitigate a violation if $\statedisc$ is not a \emph{hard violation} of the logRASM conditions.
Hard violations are points $\statedisc \in \Xdisc$ that already suffice to prove that the current candidate certificate $V$ is not a logRASM.
For example, consider a point $\statedisc \in \Xdisc \cap \X_0$ that violates the discrete logRASM initial condition, i.e., $\Vub(\statedisc) > 0$.
If the logRASM initial condition is also violated, i.e., $V(\statedisc) > 0$, then $V$ cannot be a logRASM, so $\statedisc$ is a hard violation.
We use an analogous argument for the other conditions.

We iteratively refine the discretization as long as \emph{none of the violations} are hard violations, similar to common abstraction refinement schemes~\cite{DBLP:journals/jacm/ClarkeGJLV03,DBLP:conf/formats/DierksKL07,DBLP:conf/hybrid/TiwariK02}.
Specifically, we split the set $\cell^{\tau_{\statedisc}}_\infty(\statedisc)$ associated with each (non-hard) violation $\statedisc $ into multiple smaller cells whose mesh size $\tau_{\statedisc}$ is reduced by a factor of $C \in (0,1)$.
In the context of supermartingale certificates, such refinements are also used by~\cite{DBLP:conf/atva/AnsaripourCHLZ23}.
As a novel aspect, we observe that the reduction in $\tau_{\statedisc}$ needed to mitigate a violation depends on the degree to which a condition is violated, so we use a different factor $C$ for each violation. We discuss in {\ifappendix \cref{app:suggestedmesh}\else \cite[App.~A.2]{Badings_CAV25_extended}\fi} how we compute informed values for $C$.
Importantly, the verifier only still needs to check the discrete logRASM conditions for the points associated with these new cells: points~$\statedisc$ that are not a violation cannot become a violation due to a discretization with a smaller mesh size $\tau_{\statedisc}$. 

\paragraph{Counterexamples.}
When the verifier finds at least one hard violation, we stop the refinement and return \emph{all violations} $\Xdisc' \subseteq \Xdisc$ to the learner.
These violations of the initial, safety, and expected decrease conditions are, respectively, added to three sets of counterexamples, denoted by $C_{\Linit}$, $C_{\Lunsafe}$, and $C_{\Lexp}$.
However, if there are many violations, these counterexample sets become large.
Thus, we implement these sets as buffers of a fixed size and, in each iteration, randomly replace a fixed fraction of the samples with new counterexamples.

\subsubsection{Learner.}
The learner trains the certificate $V$ and the policy $\policy$ on a differentiable version of the logRASM conditions in \cref{def:lograsm}.
The learner minimizes the loss function
$\loss(\pi, V) = \loss_\Linit(V) +\loss_\Lunsafe(V) + \alpha \cdot \loss_\Lexp(\pi, V),$
with hyperparameter $\alpha \in \R_{\geq 0}$, and where each term models a differentiable version of a logRASM condition:

{\small
\vspace{-13pt}
\begin{align*}
    \loss_\Linit(V) &= \max_{\state \in P_0}\left\{\max\{V(\state) + \varepsilon, 0\}\right\},
    \\ 
    \loss_\Lunsafe(V) &= \frac{1}{\log(\tfrac{1}{1-\rho})} \max_{\state \in P_\Lunsafe}\left\{\max\big\{\log\big(\tfrac{1}{1-\rho}\big) - V(\state) + \varepsilon, 0\big\}\right\},
    \\
    \loss_\Lexp(\pi, V) &= \frac{1}{|P_\Lexp|} \!\sum_{\state \in P_\Lexp} \!\max\!\bigg\{
         \!\log\!\bigg[\frac1N\!\sum_{\noise_i \sim d}\! \exp\!\big[V(f(\state, \policy(\state), \noise_i))\big]\!\bigg] {-} V(\state) {+} \tau K' {+} \varepsilon'
    , 0  \bigg\}.
\end{align*}}%
The points $P_\Linit$, $P_\Lunsafe$, and $P_\Lexp$ over which we check the conditions consist of randomly sampled points (which are freshly sampled each epoch) and the respective counterexamples $C_{\Linit}$, $C_{\Lunsafe}$, and $C_{\Lexp}$ returned by the verifier in previous iterations.
The loss $\loss_\Lexp(\pi, V)$ approximates the expected decrease condition over a finite number $N$ of noise samples, $\noise_i \sim d$.
The terms $\varepsilon, \varepsilon' \in \R_{\geq 0}$ ensure that a loss of zero implies that the logRASM conditions are strictly satisfied at the points in the sets $P_\Linit$, $P_\Lunsafe$, and $P_\Lexp$.
Finally, $K' = K + L_V = L_V(L_f(L_\policy+1)+1)$ is the Lipschitz constant of the function $\state \mapsto \log \Exp_{\noise \sim \noisedist}[\exp(V(f(\state, \policy(\state), \noise)))] - V(\state)$, and $\tau$ is a \emph{loss mesh} size chosen specifically for the problem. %
\section{Empirical Evaluation}
\label{sec:Empirical}
We perform numerical experiments to answer the following questions about our techniques, implemented in the learner-verifier framework described in \cref{sec:Implementation}:

\begin{enumerate}[nosep, leftmargin=.7cm]
    \item[Q1:] Can our methods be used to verify reach-avoid specifications with high probability bounds in challenging benchmarks?
    \item[Q2:] Is our learner-verifier framework robust to deviations in the input policy?
    \item[Q3:] How does our method for computing Lipschitz constants (\cref{sec:Lipschitz}) compare to other methods for computing Lipschitz constants of neural networks?
\end{enumerate}

\paragraph{Setup.}
All experiments are run on a server running Debian, with an AMD Ryzen Threadripper PRO 5965WX CPU, 512~GB of RAM, and an  NVIDIA GeForce RTX~4090 GPU.
Our Python implementation uses JAX~\cite{jax2018github} (v0.4.26) with GPU acceleration.
The policy and certificate neural networks both consist of 3 hidden layers of 128 neurons each.
See {\ifappendix \cref{app:Hyperparameters}\else \cite[App.~C.2]{Badings_CAV25_extended}\fi} for all hyperparameters.
Our implementation is available at \url{https://doi.org/10.5281/zenodo.15214887}.

\subsection*{Q1. Verifying Reach-Avoid Specifications}
We compare learner-verifier frameworks that implement different combinations of our verifier techniques: \texttt{logRASM+Lip} is our proposed verifier as described in \cref{sec:Implementation} (i.e., using both logRASMs and improved Lipschitz bounds), \texttt{logRASM} only uses logRASMs, \texttt{Lip} only uses improved Lipschitz bounds, and the \texttt{baseline} uses neither.
Since \texttt{Lip} and \texttt{baseline} train a (standard) RASM, these learner-verifiers use a different loss function (cf. {\ifappendix \cref{app:lossrasm}\else \cite[App.~C.3]{Badings_CAV25_extended}\fi}) based on the RASM conditions.
The verifier in the \texttt{baseline} checks (except for \cref{remark:deviation}) the same discrete RASM conditions as in~\cite{DBLP:conf/aaai/ZikelicLHC23}, but our learner-verifier framework differs in several algorithmic aspects.
To obtain a fairer comparison between the cases, we use our own implementation as a baseline that we can also run on the same hardware.
However, the baseline results are generally competitive with those in~\cite{DBLP:conf/aaai/ZikelicLHC23}.

\paragraph{Benchmarks.}
We consider all benchmarks from~\cite{DBLP:conf/aaai/ZikelicLHC23} (\texttt{linear-sys}, \texttt{pendulum}, and \texttt{collision-avoid}), as well as a version of \texttt{linear-sys} with a more challenging layout.
These four benchmarks have 2D state spaces.
In addition, to assess the limits of our approach, we consider more challenging benchmarks with 3D and 4D state spaces.
We consider reach-avoid specifications with different probability bounds ranging from $\rho=0.8$ to $0.999999$.
We pretrain all policies with proximal policy optimization (PPO)~\cite{DBLP:journals/corr/SchulmanWDRK17} for 100{,}000 steps, which takes less than $30$ seconds per instance (except for \texttt{drone4D} and \texttt{planar-robot}, which are trained for $1$ and $10$ million steps, respectively).
We use a loss function that also penalizes high Lipschitz constants.
For details on the benchmarks, we refer to~{\ifappendix \cref{app:Models}\else \cite[App.~C.1]{Badings_CAV25_extended}\fi}.

{
\setlength{\tabcolsep}{2.2pt}
\begin{table*}[t]
\centering
\caption{Average runtimes (in sec.) and st.dev. over 10 seeds (timeout of $\SI{30}{\minute}$; $d$ and $m$ are the state and action space dimensions). See {\ifappendix \cref{app:Hyperparameters}\else \cite[App.~C.2]{Badings_CAV25_extended}\fi} for the hyperparameters. An instance is considered as failed if 3 or more seeds time out.
}\label{tab:main}

\scalebox{0.8}{
\begin{threeparttable}
\begin{tabular}{@{}llllllllll@{}}
\toprule
& & & & \multicolumn{6}{c}{{Probability bound $\rho$}} \\
\cmidrule(lr){5-10}
Benchmark & $d$ & $m$ & Learner-verifier & \multicolumn{1}{c}{0.8} & \multicolumn{1}{c}{0.9} & \multicolumn{1}{c}{0.99} & \multicolumn{1}{c}{0.999} & \multicolumn{1}{c}{0.9999} & \multicolumn{1}{c}{0.999999} \\
\midrule\multirow{4}{*}{\texttt{linear-sys}} & \multirow{4}{*}{{2}} & \multirow{4}{*}{{1}} & \texttt{logRASM+Lip (ours)\!\!} & $\hphantom{00}$$47 \,\scriptstyle{\pm 5}$${}^{}$  & $\hphantom{00}$$50 \,\scriptstyle{\pm 6}$${}^{}$  & $\hphantom{00}$$52 \,\scriptstyle{\pm 6}$${}^{}$  & $\hphantom{00}$$50 \,\scriptstyle{\pm 7}$${}^{}$  & $\hphantom{00}$$51 \,\scriptstyle{\pm 8}$${}^{}$  & $\hphantom{00}$$42 \,\scriptstyle{\pm 6}$${}^{}$  \\
& & & \texttt{logRASM} & $\hphantom{00}$$54 \,\scriptstyle{\pm 1}$${}^{}$  & $\hphantom{00}$$53 \,\scriptstyle{\pm 1}$${}^{}$  & $\hphantom{00}$$52 \,\scriptstyle{\pm 1}$${}^{}$  & $\hphantom{00}$$52 \,\scriptstyle{\pm 1}$${}^{}$  & $\hphantom{00}$$52 \,\scriptstyle{\pm 1}$${}^{}$  & $\hphantom{00}$$51 \,\scriptstyle{\pm 2}$${}^{}$  \\
& & & \texttt{Lip} & $\hphantom{00}$$45 \,\scriptstyle{\pm 3}$${}^{}$  & $\hphantom{00}$$42 \,\scriptstyle{\pm 5}$${}^{}$  & $\hphantom{00}$$79 \,\scriptstyle{\pm 18}$${}^{}$  & $\hphantom{0}$$180 \,\scriptstyle{\pm 66}$${}^{}$  & $\hphantom{0}$$545 \,\scriptstyle{\pm 117}$${}^{*}$  & \multicolumn{1}{c}{--} \\
& & & \texttt{baseline} & $\hphantom{00}$$88 \,\scriptstyle{\pm 10}$${}^{}$  & $\hphantom{00}$$89 \,\scriptstyle{\pm 4}$${}^{}$  & $\hphantom{0}$$308 \,\scriptstyle{\pm 157}$${}^{}$  & $\hphantom{0}$$699 \,\scriptstyle{\pm 224}$${}^{}$  & \multicolumn{1}{c}{--} & \multicolumn{1}{c}{--} \\
\midrule\multirow{4}{*}{\begin{tabular}{@{}l@{}}\texttt{linear-sys} \\ (hard layout)\!\end{tabular}} & \multirow{4}{*}{{2}} & \multirow{4}{*}{{1}} & \texttt{logRASM+Lip (ours)\!\!} & $\hphantom{0}$$103 \,\scriptstyle{\pm 9}$${}^{}$  & $\hphantom{0}$$109 \,\scriptstyle{\pm 7}$${}^{}$  & $\hphantom{0}$$110 \,\scriptstyle{\pm 5}$${}^{}$  & $\hphantom{0}$$127 \,\scriptstyle{\pm 4}$${}^{}$  & $\hphantom{0}$$138 \,\scriptstyle{\pm 25}$${}^{}$  & $\hphantom{0}$$175 \,\scriptstyle{\pm 25}$${}^{*}$  \\
& & & \texttt{logRASM} & $\hphantom{0}$$283 \,\scriptstyle{\pm 40}$${}^{}$  & $\hphantom{0}$$386 \,\scriptstyle{\pm 73}$${}^{}$  & $\hphantom{0}$$668 \,\scriptstyle{\pm 151}$${}^{}$  & \multicolumn{1}{c}{--} & \multicolumn{1}{c}{--} & \multicolumn{1}{c}{--} \\
& & & \texttt{Lip} & \multicolumn{1}{c}{--} & \multicolumn{1}{c}{--} & \multicolumn{1}{c}{--} & \multicolumn{1}{c}{--} & \multicolumn{1}{c}{--} & \multicolumn{1}{c}{--} \\
& & & \texttt{baseline} & \multicolumn{1}{c}{--} & \multicolumn{1}{c}{--} & \multicolumn{1}{c}{--} & \multicolumn{1}{c}{--} & \multicolumn{1}{c}{--} & \multicolumn{1}{c}{--} \\
\midrule\multirow{4}{*}{\texttt{pendulum}} & \multirow{4}{*}{{2}} & \multirow{4}{*}{{2}} & \texttt{logRASM+Lip (ours)\!\!} & $\hphantom{00}$$77 \,\scriptstyle{\pm 10}$${}^{}$  & $\hphantom{00}$$71 \,\scriptstyle{\pm 2}$${}^{}$  & $\hphantom{00}$$85 \,\scriptstyle{\pm 2}$${}^{}$  & $\hphantom{00}$$99 \,\scriptstyle{\pm 11}$${}^{}$  & $\hphantom{0}$$107 \,\scriptstyle{\pm 11}$${}^{}$  & $\hphantom{0}$$137 \,\scriptstyle{\pm 43}$${}^{}$  \\
& & & \texttt{logRASM} & $\hphantom{0}$$226 \,\scriptstyle{\pm 23}$${}^{}$  & $\hphantom{0}$$229 \,\scriptstyle{\pm 26}$${}^{}$  & $\hphantom{0}$$216 \,\scriptstyle{\pm 13}$${}^{}$  & $\hphantom{0}$$221 \,\scriptstyle{\pm 30}$${}^{}$  & $\hphantom{0}$$239 \,\scriptstyle{\pm 28}$${}^{}$  & $\hphantom{0}$$218 \,\scriptstyle{\pm 7}$${}^{}$  \\
& & & \texttt{Lip} & $\hphantom{0}$$108 \,\scriptstyle{\pm 7}$${}^{}$  & $\hphantom{0}$$191 \,\scriptstyle{\pm 27}$${}^{}$  & \multicolumn{1}{c}{--} & \multicolumn{1}{c}{--} & \multicolumn{1}{c}{--} & \multicolumn{1}{c}{--} \\
& & & \texttt{baseline} & $\hphantom{0}$$721 \,\scriptstyle{\pm 168}$${}^{}$  & \multicolumn{1}{c}{--} & \multicolumn{1}{c}{--} & \multicolumn{1}{c}{--} & \multicolumn{1}{c}{--} & \multicolumn{1}{c}{--} \\
\midrule\multirow{4}{*}{\begin{tabular}{@{}l@{}}\texttt{collision-} \\ \texttt{avoid}\end{tabular}} & \multirow{4}{*}{{2}} & \multirow{4}{*}{{2}} & \texttt{logRASM+Lip (ours)\!\!} & $\hphantom{00}$$69 \,\scriptstyle{\pm 1}$${}^{**}$  & $\hphantom{00}$$68 \,\scriptstyle{\pm 2}$${}^{}$  & $\hphantom{00}$$94 \,\scriptstyle{\pm 5}$${}^{}$  & $\hphantom{0}$$108 \,\scriptstyle{\pm 2}$${}^{}$  & $\hphantom{0}$$122 \,\scriptstyle{\pm 2}$${}^{}$  & $\hphantom{0}$$137 \,\scriptstyle{\pm 3}$${}^{}$  \\
& & & \texttt{logRASM} & $\hphantom{0}$$107 \,\scriptstyle{\pm 1}$${}^{}$  & $\hphantom{0}$$116 \,\scriptstyle{\pm 10}$${}^{}$  & $\hphantom{0}$$147 \,\scriptstyle{\pm 10}$${}^{}$  & $\hphantom{0}$$170 \,\scriptstyle{\pm 9}$${}^{}$  & $\hphantom{0}$$188 \,\scriptstyle{\pm 11}$${}^{}$  & $\hphantom{0}$$227 \,\scriptstyle{\pm 3}$${}^{}$  \\
& & & \texttt{Lip} & $\hphantom{0}$$117 \,\scriptstyle{\pm 6}$${}^{}$  & $\hphantom{0}$$152 \,\scriptstyle{\pm 12}$${}^{}$  & $\hphantom{0}$$391 \,\scriptstyle{\pm 42}$${}^{}$  & \multicolumn{1}{c}{--} & \multicolumn{1}{c}{--} & \multicolumn{1}{c}{--} \\
& & & \texttt{baseline} & $\hphantom{0}$$252 \,\scriptstyle{\pm 16}$${}^{**}$  & \multicolumn{1}{c}{--} & \multicolumn{1}{c}{--} & \multicolumn{1}{c}{--} & \multicolumn{1}{c}{--} & \multicolumn{1}{c}{--} \\
\midrule\multirow{4}{*}{\begin{tabular}{@{}l@{}}\texttt{triple-} \\ \texttt{integrator}\end{tabular}} & \multirow{4}{*}{{3}} & \multirow{4}{*}{{1}} & \texttt{logRASM+Lip (ours)\!\!} & $\hphantom{0}$$793 \,\scriptstyle{\pm 180}$${}^{*}$  & $\hphantom{0}$$700 \,\scriptstyle{\pm 258}$${}^{}$  & $\hphantom{0}$$630 \,\scriptstyle{\pm 114}$${}^{}$  & $\hphantom{0}$$675 \,\scriptstyle{\pm 156}$${}^{}$  & $\hphantom{0}$$597 \,\scriptstyle{\pm 111}$${}^{}$  & $\hphantom{0}$$606 \,\scriptstyle{\pm 108}$${}^{}$  \\
& & & \texttt{logRASM} & \multicolumn{1}{c}{--} & $\hphantom{}$$1394 \,\scriptstyle{\pm 148}$${}^{}$  & $\hphantom{}$$1397 \,\scriptstyle{\pm 134}$${}^{**}$  & \multicolumn{1}{c}{--} & $\hphantom{}$$1396 \,\scriptstyle{\pm 228}$${}^{}$  & \multicolumn{1}{c}{--} \\
& & & \texttt{Lip} & $\hphantom{}$$1430 \,\scriptstyle{\pm 182}$${}^{**}$  & \multicolumn{1}{c}{--} & \multicolumn{1}{c}{--} & \multicolumn{1}{c}{--} & \multicolumn{1}{c}{--} & \multicolumn{1}{c}{--} \\
& & & \texttt{baseline} & \multicolumn{1}{c}{--} & \multicolumn{1}{c}{--} & \multicolumn{1}{c}{--} & \multicolumn{1}{c}{--} & \multicolumn{1}{c}{--} & \multicolumn{1}{c}{--} \\
\midrule\multirow{4}{*}{\begin{tabular}{@{}l@{}}\texttt{planar-} \\ \texttt{robot}\end{tabular}} & \multirow{4}{*}{{3}} & \multirow{4}{*}{{2}} & \texttt{logRASM+Lip (ours)\!\!} & $\hphantom{0}$$326 \,\scriptstyle{\pm 44}$${}^{}$  & $\hphantom{0}$$380 \,\scriptstyle{\pm 89}$${}^{}$  & $\hphantom{0}$$341 \,\scriptstyle{\pm 58}$${}^{}$  & $\hphantom{0}$$341 \,\scriptstyle{\pm 94}$${}^{}$  & $\hphantom{0}$$491 \,\scriptstyle{\pm 99}$${}^{}$  & \multicolumn{1}{c}{--} \\
& & & \texttt{logRASM} & $\hphantom{0}$$720 \,\scriptstyle{\pm 262}$${}^{}$  & \multicolumn{1}{c}{--} & \multicolumn{1}{c}{--} & \multicolumn{1}{c}{--} & \multicolumn{1}{c}{--} & \multicolumn{1}{c}{--} \\
& & & \texttt{Lip} & \multicolumn{1}{c}{--} & \multicolumn{1}{c}{--} & \multicolumn{1}{c}{--} & \multicolumn{1}{c}{--} & \multicolumn{1}{c}{--} & \multicolumn{1}{c}{--} \\
& & & \texttt{baseline} & \multicolumn{1}{c}{--} & \multicolumn{1}{c}{--} & \multicolumn{1}{c}{--} & \multicolumn{1}{c}{--} & \multicolumn{1}{c}{--} & \multicolumn{1}{c}{--} \\
\midrule\multirow{4}{*}{\texttt{drone4D}} & \multirow{4}{*}{{4}} & \multirow{4}{*}{{2}} & \texttt{logRASM+Lip (ours)\!\!} & $\hphantom{0}$$665 \,\scriptstyle{\pm 282}$${}^{**}$  & $\hphantom{0}$$656 \,\scriptstyle{\pm 164}$${}^{}$  & $\hphantom{0}$$765 \,\scriptstyle{\pm 276}$${}^{*}$  & $\hphantom{0}$$873 \,\scriptstyle{\pm 124}$${}^{}$  & \multicolumn{1}{c}{--} & \multicolumn{1}{c}{--} \\
& & & \texttt{logRASM} & \multicolumn{1}{c}{--} & \multicolumn{1}{c}{--} & \multicolumn{1}{c}{--} & \multicolumn{1}{c}{--} & \multicolumn{1}{c}{--} & \multicolumn{1}{c}{--} \\
& & & \texttt{Lip} & \multicolumn{1}{c}{--} & \multicolumn{1}{c}{--} & \multicolumn{1}{c}{--} & \multicolumn{1}{c}{--} & \multicolumn{1}{c}{--} & \multicolumn{1}{c}{--} \\
& & & \texttt{baseline} & \multicolumn{1}{c}{--} & \multicolumn{1}{c}{--} & \multicolumn{1}{c}{--} & \multicolumn{1}{c}{--} & \multicolumn{1}{c}{--} & \multicolumn{1}{c}{--} \\
\bottomrule
\end{tabular}%

\begin{tablenotes}
        \raggedright
        \item[*] One timeout out of ten seeds; \,\, ${}^{**}$ Two timeouts out of ten seeds.
\end{tablenotes}
\end{threeparttable}%
}%
\vspace{-0.2em}
\end{table*}%
}%

\paragraph{Solving \cref{problem2}.}
We show that our method reliably learns verified policies with only minor parameter tuning on individual benchmarks.
Each instance is run on $10$ seeds and is considered as failed when $3$ or more seeds do not terminate within a $30$~minute timeout.
We run our learner-verifier framework with the same hyperparameters across all 2D benchmarks; for the 3D and 4D benchmarks, we only slightly tune hyperparameters to adapt to these higher dimensions (see {\ifappendix \cref{app:Hyperparameters}\else \cite[App.~C.2]{Badings_CAV25_extended}\fi} for details).
The average times required to find a valid (log)RASM are presented in \cref{tab:main}
(excluding the time to train input policies).
For all benchmarks, our new method is able to consistently verify (much) \emph{higher probability bounds} $\rho$ ($99.9999\%$ for all 2D benchmarks) at \emph{lower run times} than the other learner-verifiers.
The best bounds successfully verified by our baseline are slightly lower than the values from \cite{DBLP:conf/aaai/ZikelicLHC23}.
However, we use a lower timeout (30 minutes instead of 3 hours) and consider an instance failed if $>2/10$ seeds failed, whereas \cite{DBLP:conf/aaai/ZikelicLHC23} reports the highest bound successfully verified.
Finally, the results for the 3D and 4D benchmarks clearly show that our method scales to benchmarks that were out of reach for the baseline.

\paragraph{Learned logRASMs.}
Four logRASMs learned using our method are shown in \cref{fig:RASMs}.
Especially the \texttt{linear-sys} (hard layout) benchmark requires a logRASM with a non-trivial shape, illustrating the usefulness of neural networks to represent certificates.
For a RASM with the same bound of $\rho = 0.999999$, the learner would train the certificate to have values up to at least $10^6$, which is required to satisfy the safety condition ($V(\state) \geq \frac{1}{1-\rho} = 10^6$).
By contrast, the learned logRASMs in \cref{fig:RASMs} only take values between $-20$ and $75$, making them easier to learn.

\begin{figure}[t!]
\centering
\includegraphics[height=2.5cm]{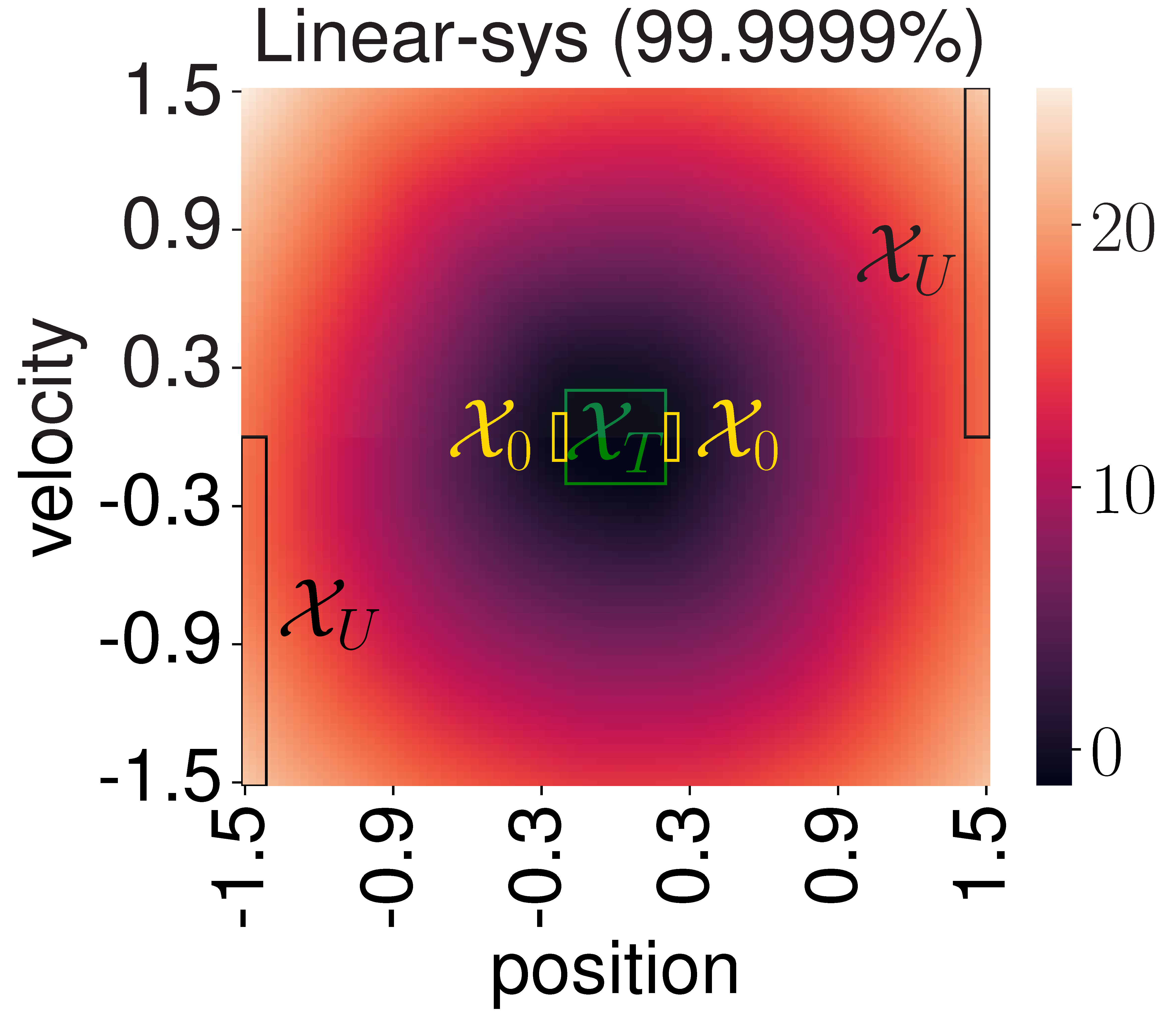}
\hfill
\includegraphics[height=2.5cm]{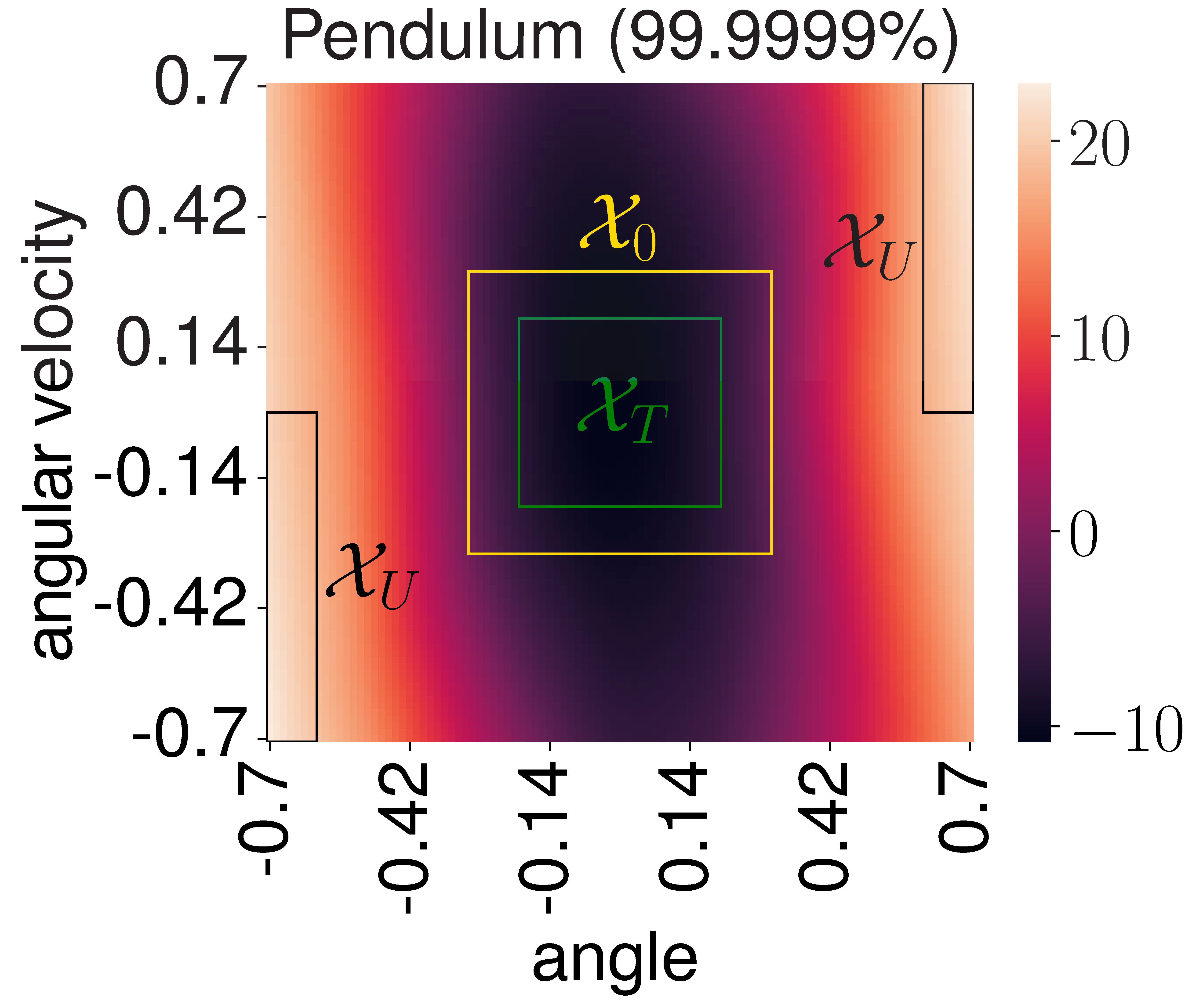}
\hfill
\includegraphics[height=2.5cm]{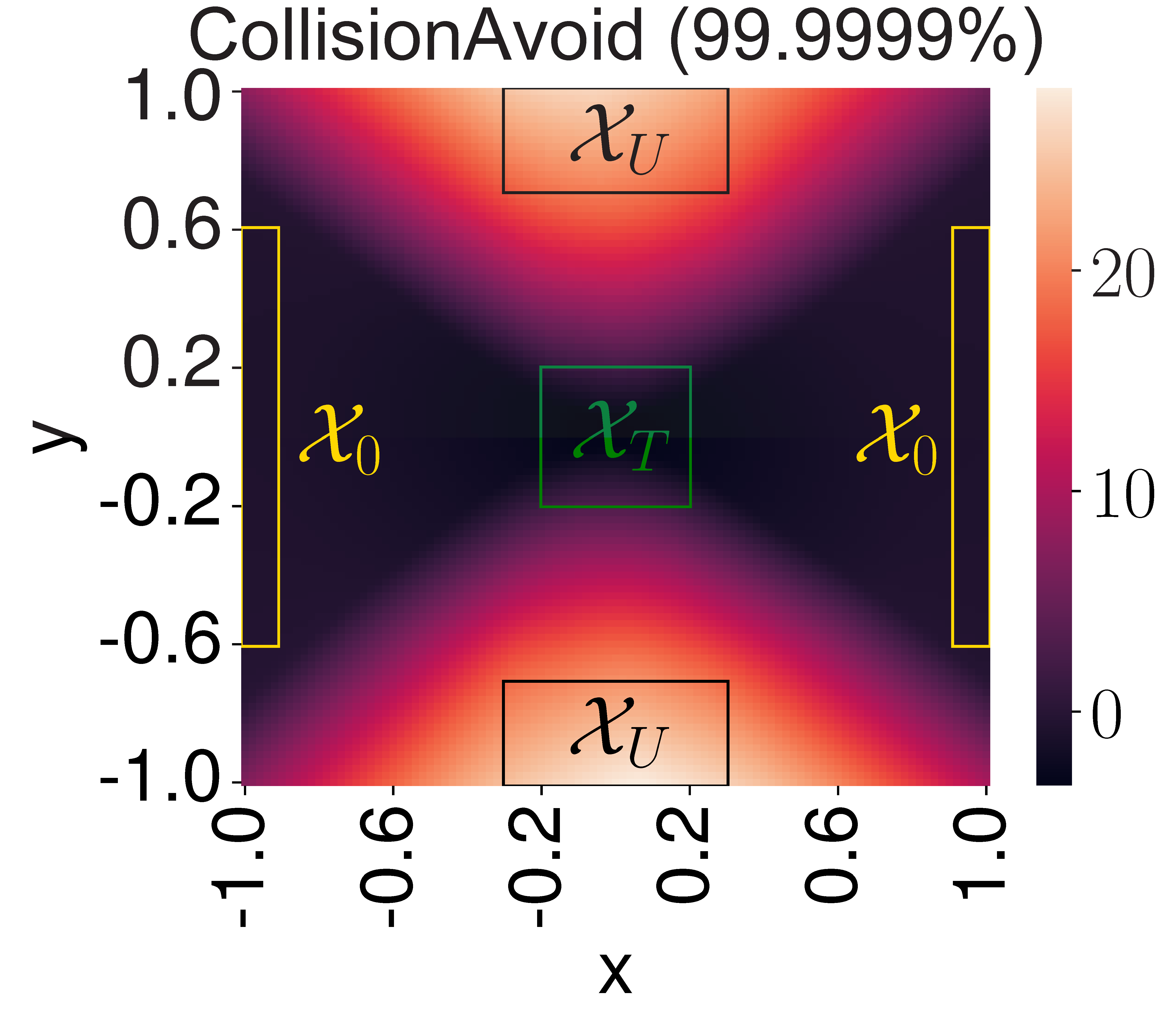}
\hfill
\includegraphics[height=2.5cm]{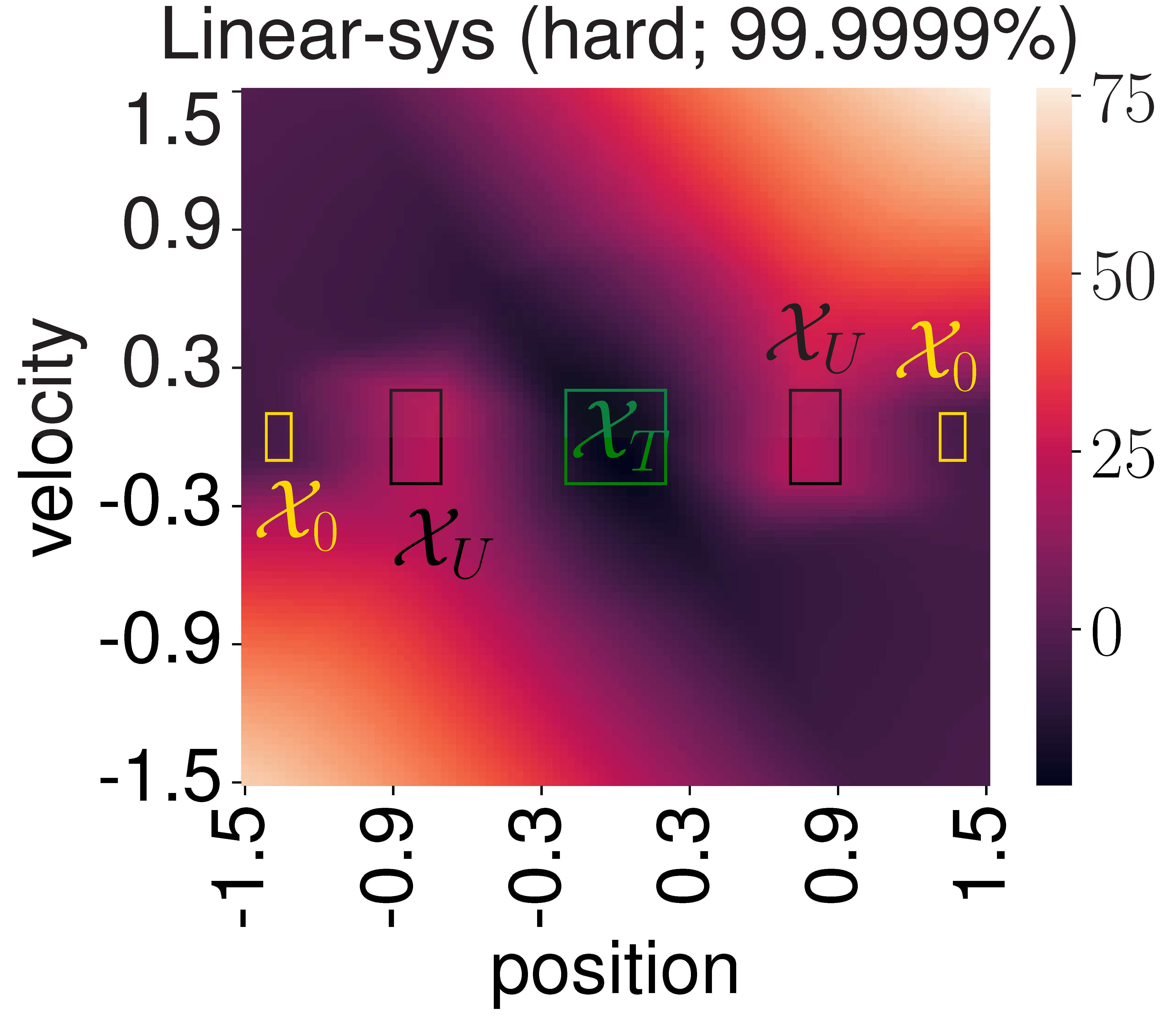}%
\caption{The logRASMs learned using our new method ($\texttt{logRASM+Lip}$).}
\label{fig:RASMs}
\end{figure}

{
\setlength{\tabcolsep}{2.2pt}
\begin{table*}[t]
\centering
\caption{Runtimes (in seconds) for verifying reach-avoid specifications (with probability $\rho=0.999999$) on input policies trained with several RL algorithms for different numbers of steps (avgs. and st.dev. over 10 seeds; timeout of $\SI{30}{\minute}$).}\label{tab:SB3}

\scalebox{0.85}{
\begin{threeparttable}
\begin{tabular}{@{}llllllllll@{}}
\toprule
& & \multicolumn{4}{c}{{$\alpha = 10, \, \tau = 0.0005$}} & \multicolumn{4}{c}{{$\alpha = 0.1, \, \tau = 0.001$}} \\
\cmidrule(lr){3-6}\cmidrule(lr){7-10} 
Benchmark & Steps & TRPO & TQC & SAC & A2C & TRPO & TQC & SAC & A2C \\
\midrule
\multirow{3}{*}{\texttt{linear-sys}} & $\num{1e+04}$ & $\hphantom{0}$$55 \,\scriptstyle{\pm 1}$ & $\hphantom{}$$135 \,\scriptstyle{\pm 37}$ & $\hphantom{}$$143 \,\scriptstyle{\pm 48}$ & $\hphantom{}$$112 \,\scriptstyle{\pm 32}$ & $\hphantom{0}$$86 \,\scriptstyle{\pm 2}$ & $\hphantom{}$$171 \,\scriptstyle{\pm 1}$ & $\hphantom{}$$167 \,\scriptstyle{\pm 9}$ & $\hphantom{}$$166 \,\scriptstyle{\pm 14}$ \\
& $\num{1e+05}$ & $\hphantom{0}$$66 \,\scriptstyle{\pm 18}$ & $\hphantom{}$$180 \,\scriptstyle{\pm 7}$ & $\hphantom{}$$177 \,\scriptstyle{\pm 18}$ & $\hphantom{}$$115 \,\scriptstyle{\pm 26}$ & $\hphantom{0}$$89 \,\scriptstyle{\pm 9}$ & $\hphantom{}$$185 \,\scriptstyle{\pm 19}$ & $\hphantom{}$$176 \,\scriptstyle{\pm 5}$ & $\hphantom{}$$170 \,\scriptstyle{\pm 1}$ \\
& $\num{1e+06}$ & $\hphantom{0}$$67 \,\scriptstyle{\pm 1}$ & $\hphantom{}$$173 \,\scriptstyle{\pm 3}$ & $\hphantom{}$$170 \,\scriptstyle{\pm 2}$ & $\hphantom{0}$$91 \,\scriptstyle{\pm 43}$ & $\hphantom{}$$109 \,\scriptstyle{\pm 23}$ & $\hphantom{}$$334 \,\scriptstyle{\pm 276}$${}^{*}$ & $\hphantom{}$$338 \,\scriptstyle{\pm 193}$${}^{*}$ & $\hphantom{}$$134 \,\scriptstyle{\pm 46}$ \\
\midrule\multirow{3}{*}{\begin{tabular}{@{}l@{}}\texttt{linear-sys} \\ (hard layout)\end{tabular}} & $\num{1e+04}$ & $\hphantom{}$$192 \,\scriptstyle{\pm 23}$ & $\hphantom{}$$240 \,\scriptstyle{\pm 3}$ & $\hphantom{}$$245 \,\scriptstyle{\pm 19}$ & $\hphantom{}$$226 \,\scriptstyle{\pm 31}$ & $\hphantom{}$$170 \,\scriptstyle{\pm 3}$ & $\hphantom{}$$246 \,\scriptstyle{\pm 41}$ & $\hphantom{}$$242 \,\scriptstyle{\pm 55}$ & $\hphantom{}$$242 \,\scriptstyle{\pm 20}$ \\
& $\num{1e+05}$ & $\hphantom{}$$188 \,\scriptstyle{\pm 3}$ & $\hphantom{}$$245 \,\scriptstyle{\pm 22}$ & $\hphantom{}$$236 \,\scriptstyle{\pm 3}$ & $\hphantom{}$$238 \,\scriptstyle{\pm 3}$ & $\hphantom{}$$237 \,\scriptstyle{\pm 16}$${}^{**}$ & $\hphantom{}$$173 \,\scriptstyle{\pm 19}$ & $\hphantom{}$$163 \,\scriptstyle{\pm 1}$ & $\hphantom{}$$256 \,\scriptstyle{\pm 40}$ \\
& $\num{1e+06}$ & $\hphantom{}$$212 \,\scriptstyle{\pm 39}$ & $\hphantom{}$$314 \,\scriptstyle{\pm 15}$ & $\hphantom{}$$316 \,\scriptstyle{\pm 18}$ & $\hphantom{}$$219 \,\scriptstyle{\pm 25}$ & $\hphantom{}$$264 \,\scriptstyle{\pm 34}$ & \multicolumn{1}{c}{--} & \multicolumn{1}{c}{--} & $\hphantom{}$$261 \,\scriptstyle{\pm 56}$ \\
\midrule\multirow{3}{*}{\texttt{pendulum}} & $\num{1e+04}$ & $\hphantom{}$$219 \,\scriptstyle{\pm 20}$${}^{*}$ & $\hphantom{}$$206 \,\scriptstyle{\pm 13}$ & \multicolumn{1}{c}{--} & $\hphantom{}$$196 \,\scriptstyle{\pm 21}$ & $\hphantom{}$$365 \,\scriptstyle{\pm 127}$ & $\hphantom{}$$652 \,\scriptstyle{\pm 312}$ & $\hphantom{}$$543 \,\scriptstyle{\pm 262}$ & $\hphantom{}$$415 \,\scriptstyle{\pm 184}$ \\
& $\num{1e+05}$ & \multicolumn{1}{c}{--} & $\hphantom{}$$279 \,\scriptstyle{\pm 42}$${}^{**}$ & $\hphantom{}$$295 \,\scriptstyle{\pm 66}$${}^{**}$ & $\hphantom{}$$193 \,\scriptstyle{\pm 24}$${}^{*}$ & $\hphantom{}$$427 \,\scriptstyle{\pm 180}$ & $\hphantom{}$$708 \,\scriptstyle{\pm 190}$ & $\hphantom{}$$400 \,\scriptstyle{\pm 161}$ & $\hphantom{}$$447 \,\scriptstyle{\pm 240}$ \\
& $\num{1e+06}$ & $\hphantom{}$$267 \,\scriptstyle{\pm 39}$${}^{*}$ & \multicolumn{1}{c}{--} & \multicolumn{1}{c}{--} & \multicolumn{1}{c}{--} & $\hphantom{}$$525 \,\scriptstyle{\pm 259}$ & $\hphantom{}$$498 \,\scriptstyle{\pm 194}$ & $\hphantom{}$$334 \,\scriptstyle{\pm 45}$ & $\hphantom{}$$496 \,\scriptstyle{\pm 251}$ \\
\midrule\multirow{3}{*}{\begin{tabular}{@{}l@{}}\texttt{collision-} \\ \texttt{avoid}\end{tabular}} & $\num{1e+04}$ & $\hphantom{}$$133 \,\scriptstyle{\pm 7}$ & $\hphantom{}$$194 \,\scriptstyle{\pm 7}$ & $\hphantom{}$$199 \,\scriptstyle{\pm 7}$ & $\hphantom{}$$197 \,\scriptstyle{\pm 12}$ & $\hphantom{}$$167 \,\scriptstyle{\pm 18}$ & $\hphantom{}$$175 \,\scriptstyle{\pm 2}$ & $\hphantom{}$$176 \,\scriptstyle{\pm 3}$ & $\hphantom{}$$183 \,\scriptstyle{\pm 21}$ \\
& $\num{1e+05}$ & $\hphantom{}$$101 \,\scriptstyle{\pm 3}$ & $\hphantom{}$$199 \,\scriptstyle{\pm 7}$ & $\hphantom{}$$198 \,\scriptstyle{\pm 8}$ & $\hphantom{}$$190 \,\scriptstyle{\pm 13}$ & $\hphantom{0}$$95 \,\scriptstyle{\pm 1}$ & $\hphantom{}$$176 \,\scriptstyle{\pm 2}$ & $\hphantom{}$$178 \,\scriptstyle{\pm 2}$ & $\hphantom{}$$174 \,\scriptstyle{\pm 2}$ \\
& $\num{1e+06}$ & $\hphantom{}$$191 \,\scriptstyle{\pm 28}$ & $\hphantom{}$$246 \,\scriptstyle{\pm 22}$ & $\hphantom{}$$276 \,\scriptstyle{\pm 30}$ & $\hphantom{}$$194 \,\scriptstyle{\pm 29}$ & $\hphantom{}$$169 \,\scriptstyle{\pm 16}$ & \multicolumn{1}{c}{--} & \multicolumn{1}{c}{--} & $\hphantom{}$$170 \,\scriptstyle{\pm 15}$ \\
\bottomrule
\end{tabular}

\begin{tablenotes}
        \raggedright
        \item[*] One timeout out of ten seeds; \,\, ${}^{**}$ Two timeouts out of ten seeds.
\end{tablenotes}
\end{threeparttable}
}

\end{table*}
}

\subsection*{Q2. Robustness to Input Policies}
We consider the same benchmarks as in \cref{tab:main} (with $\rho = 0.999999$ and with our \texttt{logRASM+Lip} learner-verifier) but now pretrain input policies using the \texttt{Stable-Baselines3}~\cite{stable-baselines3} implementation of the RL algorithms TRPO, TQC, SAC, and A2C (with default parameters; see {\ifappendix \cref{app:Models}\else \cite[App.~C.1]{Badings_CAV25_extended}\fi} for the loss functions) for either $10^4$, $10^5$ or $10^6$ steps.
Since we use these implementations unchanged, we now do not train for a lower Lipschitz constant (in contrast to the PPO-trained policies for Q1), but for a (state-based) reward function. Note that the RL reward maximization may not be able to fully capture the nature of a reach-avoid specification.
Each instance is run on 10 seeds and is considered failed when 3 or more seeds do not terminate within 30 minutes.

The run times in \cref{tab:SB3} show that our method is generally agnostic to the policy training algorithm.
We observe that training the policies longer tends to slightly increase the time to verify the policy, which can be a sign of over-training policies to maximize rewards.
Moreover, the values of the hyperparameters $\alpha$ and $\tau$ in the loss function~(cf.\ \cref{sec:Implementation}) influence the performance on individual benchmarks (e.g., we cannot reliably verify all \texttt{pendulum} policies for $\alpha =10$, $\tau = 0.0005$).
In conclusion, our method is reasonably robust against the input policy, but finding common hyperparameters for all benchmarks is difficult.

\subsection*{Q3. Comparison of Lipschitz Constants}

We demonstrate the need for our efficient method to compute Lipschitz constants when solving \cref{problem,problem2}.
We compare our techniques from \cref{sec:Lipschitz} against the anytime algorithm LipBaB~\cite{DBLP:conf/icann/BhowmickDR21}, a competitive solver for computing global Lipschitz constants, on the final policy and certificate networks (cf.\ {\ifappendix \cref{app:lipbab}\else \cite[App.~D]{Badings_CAV25_extended}\fi}). 
Our method takes \SI{0.2}{\second} to compute a Lipschitz constant (and only \SI{0.0002}{\second} when already JIT-compiled).
LipBaB returns a first Lipschitz constant after \SI{0.5}{\second} (which is, on average, 40\% larger than ours) and requires usually more than \SI{100}{\second} to compute a better Lipschitz constant than ours.
A typical benchmark requires 3--10 verifier iterations, each of which takes around \SI{20}{\second}, so better results from LipBaB may not outweigh the increase in verifier run time.
For example, even just using LipBaB in the final verifier-iteration would, on most benchmarks, more than double the total runtimes from \cref{tab:main}.

\subsection*{Discussion and Limitations}

Beyond the mentioned scalability limitation (w.r.t. the dimension of the state space) in Q1, our experiments do not address the following:
(1)~We did not consider the robustness w.r.t.\ the loss functions used for pretraining and the learner.
(2)~We did not consider multiplicative RASMs as introduced in \cite{DBLP:conf/nips/ZikelicLVCH23}, although our results would also apply to these RASMs.
(3)~We did not consider using only IBP for the expected decrease condition. This would require piecewise linear under- and overapproximations of the dynamics as proposed by \cite{DBLP:conf/nips/MazouzMRLL22,DBLP:journals/corr/MazouzBLL24}.

\section{Related Work}
\label{sec:Related}

Policy verification/synthesis for stochastic dynamical systems has largely been addressed using two approaches.
The first is to generate a model-based~\cite{DBLP:journals/ejcon/0001SRHH12,DBLP:journals/tac/ZamaniEMAL14,DBLP:journals/tac/LahijanianAB15,DBLP:journals/jair/BadingsRAPPSJ23} or data-driven~\cite{gracia2024,DBLP:conf/hybrid/JacksonLFL21} abstraction (e.g., as a finite Markov decision process) and use probabilistic model checking on this abstraction.
The second approach (which we take in this work) is to find a certificate function that implies the satisfaction of a specification.
These approaches differ from typical objectives in constrained~\cite{DBLP:conf/icml/AchiamHTA17,altman2021constrained} and safe RL~\cite{DBLP:conf/nips/BerkenkampTS017,DBLP:conf/aaai/AlshiekhBEKNT18}, which mostly focus on maximizing rewards while satisfying constraints on expected costs or safety in exploration~\cite{DBLP:journals/arcras/BrunkeGHYZPS22,DBLP:journals/jmlr/GarciaF15}.

Certificates are used in several areas, e.g., Lyapunov~\cite{khalil2002nonlinear,dequeiroz2000lyapunov} and control barrier functions~\cite{DBLP:journals/tac/AmesXGT17,DBLP:journals/csysl/LindemannD19,DBLP:conf/rss/ChoiCTS20} in control, and ranking functions~\cite{Floyd1993,DBLP:conf/cav/BradleyMS05,DBLP:conf/vmcai/PodelskiR04} in program analysis.
For stochastic systems, the value of the certificate along trajectories needs to be a supermartingale~\cite{DBLP:journals/tac/PrajnaJP07,DBLP:journals/automatica/Clark21,DBLP:journals/tac/JagtapSZ21}.
Besides the RASMs~\cite{DBLP:conf/aaai/ZikelicLHC23,DBLP:conf/nips/ZikelicLVCH23} we build upon in this paper,~\cite{neustroev2024} uses neural supermartingale certificates for continuous-time stochastic systems, and~\cite{DBLP:conf/cav/AbateGR24} proposes certificates for $\omega$-regular properties in stochastic systems but makes restrictive assumptions to achieve a practical algorithm.
Supermartingales are also used to analyze termination~\cite{DBLP:conf/cav/ChakarovS13,DBLP:journals/pacmpl/AgrawalC018,DBLP:conf/popl/ChatterjeeNZ17,DBLP:conf/cav/AbateGR20} and reachability~\cite{DBLP:journals/toplas/TakisakaOUH21} of probabilistic programs.
Various recent papers represent such certificates
as neural networks~\cite{DBLP:conf/nips/ChangRG19,DBLP:journals/csysl/AbateAGP21,DBLP:conf/corl/RichardsB018,DBLP:conf/nips/ZhouQSL22,DBLP:conf/nips/00010KV23,DBLP:journals/trob/DawsonGF23}.
The resulting candidate certificate (i.e., the neural network) can be verified using satisfiability modulo theories (SMT)~\cite{DBLP:conf/concur/AbateEGPR23,DBLP:conf/hybrid/AbateAEGP21}, branch-and-bound~\cite{DBLP:journals/csysl/MathiesenCL23}, or (like our approach) discretization and leveraging Lipschitz continuity~\cite{DBLP:conf/aaai/ZikelicLHC23,DBLP:conf/aaai/LechnerZCH22}.
Yet, all of these approaches are computationally expensive: SMT does not scale to large neural networks, whereas branch-and-bound and discretization do not scale with the state space dimension.

\subsection*{Neural Network Robustness and Lipschitz Constants}
The use of Lipschitz constants as a measure of neural network stability and robustness was pioneered by \cite{DBLP:journals/corr/SzegedyZSBEGF13}, who propose the product of the Lipschitz constants of each layer as an upper bound for the Lipschitz constant of the network. This bound is fast to compute, but also very loose. Recently, there has been significant work in devising methods for computing tighter \emph{global} and \emph{local} Lipschitz constants.
However, these methods are not designed for the large number of calls that our learner-verifier framework requires. Since the Lipschitz constant appears in the loss function, we need to recompute it for every batch and every epoch, leading to roughly 1{,}000 Lipschitz computations per learner-verifier iteration. Hence, even spending 20 ms on each Lipschitz computation would slow down our iterations by a factor two. Besides speed, another requirement is that the Lipschitz computation is differentiable, so that effects of weight updates on the Lipschitz constant are taken into account in the gradient of the loss function.

We now discuss why existing methods from the literature are (despite yielding tighter bounds on Lipschitz constants) less suited to our needs. Algorithms that compute {global} Lipschitz constants include~LipBaB \cite{DBLP:conf/nips/FazlyabRHMP19} and methods using semidefinite programming~\cite{DBLP:conf/icann/BhowmickDR21,DBLP:conf/iclr/Wang0HAZCJ24}. However, these methods are not differentiable, and have running times on the order of seconds per call. 
Methods for computing {local} Lipschitz constants include analytical bounds from \cite{avant2023analytical},  LiPopt \cite{DBLP:conf/iclr/GomezRC20}, LipMIP \cite{DBLP:conf/nips/JordanD20}, FastLin and FastLip \cite{DBLP:conf/icml/WengZCSHDBD18}, GenBaB\cite{DBLP:journals/corr/ShiJKJHZ24}, Recurjac \cite{DBLP:conf/nips/ShiW0KH22,DBLP:conf/aaai/ZhangZH19}. The analytical methods from \cite{avant2023analytical} are fast, but only apply relative to a fixed base point rather than within some region, which makes them unusable in our stochastic context. Out of the local methods, FastLin and FastLip \cite{DBLP:conf/icml/WengZCSHDBD18} are the fastest, but running times of 5 ms per call are still too slow in our context for a local method. 
 
We utilize results from~\cite{combettes2020lipschitz} in \cref{app:averagedactivationoperators}, which is to our knowledge the only method (besides \cite{DBLP:journals/corr/SzegedyZSBEGF13}) that can compute global Lipschitz constants sufficiently fast. We note that \cite{DBLP:journals/corr/abs-2412-01783}, which trains neural networks to certify the relation between two systems, also uses~\cite{combettes2020lipschitz} to compute their Lipschitz constants (and could therefore improve their results by using the method proposed in this paper instead). Anisotropic certification~\cite{DBLP:journals/tmlr/EirasATKDGB22} is similar to our weighted norms, but does not include an algorithm to compute optimal weights.

Besides approaches to bound Lipschitz constants, training networks to have a small Lipschitz constant is studied by~\cite{DBLP:conf/scalespace/BungertRRST21,DBLP:journals/ml/GoukFPC21,DBLP:journals/csysl/PauliKBKA22}. 
However, for our purposes, we need an upper bound of the Lipschitz constant, and training a network to have a low Lipschitz constant does not guarantee that an upper bound for that Lipschitz constant computed with a particular method is also small.
Another approach to neural network robustness is interval bound propagation (IBP), a technique to propagate interval inputs through neural networks~\cite{gowal2018effectiveness}.
Finally, a different line of research considers the adversarial robustness of neural networks~\cite{DBLP:conf/pkdd/BiggioCMNSLGR13,DBLP:journals/corr/SzegedyZSBEGF13,DBLP:conf/iclr/KurakinGB17a,DBLP:conf/cav/HuangKWW17,katz2017reluplex,DBLP:conf/sp/GehrMDTCV18}. 
We refer to the survey articles~\cite{DBLP:journals/csr/HuangKRSSTWY20,zuhlke2024adversarial} for a comprehensive overview of verification and robustness of neural networks.

\section{Conclusion}
\label{sec:Conclusion}

We presented two contributions to improve the verification of policies in stochastic systems using reach-avoid supermartingales (RASMs).
First, our logRASMs take exponentially lower values and hence have lower (theoretical) Lipschitz constants than (standard) RASMs.
Second, we compute tight bounds on Lipschitz constants by integrating the novel idea of weighted norms with averaged activation operators.
Our experiments show that our techniques allow the verification of reach-avoid specifications with much higher probability bounds than the state-of-the-art.

Future work includes generalizing our method for computing bounds on Lipschitz constants to broader classes of neural networks. 
In addition, while this work focuses on the verifier, improving the learner and the choice of counterexamples can improve the overall performance of the learner-verifier framework. 
Finally, we wish to investigate the robustness of the learner-verifier against perturbations in the system dynamics and the specification.

\begin{credits}
\subsubsection{\ackname}
This research has been funded by the ERC Starting Grant 101077178 (DEUCE), the EPSRC grant EP/Y028872/1 (Mathematical Foundations of Intelligence: An ``Erlangen Programme'' for AI), the Wallenberg AI, Autonomous Systems and Software Program (WASP) funded by the Knut and Alice Wallenberg Foundation, the NWO Veni Grant ProMiSe (222.147), and the NWO grant NWA.1160.18.238 (PrimaVera).

\subsubsection{\discintname}
The authors have no competing interests to declare that are relevant to the content of this article.
\end{credits}

\bibliographystyle{splncs04}
\bibliography{references}

\ifappendix

\newpage
\appendix

\section{Further Algorithmic Details}

\subsection{Split Lipschitz Constant of Dynamics}
\label{app:splitlip}

In this appendix, we describe an improvement over the formula $K = L_V L_f (L_\pi + 1)$ by analyzing the Lipschitz constant of the dynamics function $f \colon \X \times \U \times \mathcal{N} \rightarrow \X$ more carefully. Note that for $L_f$, we are interested in changes in the inputs $\state \in \X$ and $\control \in \U$, but take $\noise \in \mathcal{N}$ fixed. Hence, the Lipschitz constant $L_f$ satisfies
\[
\| f(\state', \control', \noise) -  f(\state, \control, \noise) \| \leq L_f\| (\state', \control') -  (\state, \control) \|.
\]
for all (fixed) $\noise \in \mathcal{N}$ and all $(\state, \control), (\state', \control') \in \X \times \U$.

We now compute two `split' Lipschitz constants: one Lipschitz constant $L_{f, \state}$ corresponding to changes in the state $\state$ (but keeping the action fixed), and one Lipschitz constant $L_{f, \control}$ corresponding to changes in the control $\control$ (but keeping the state fixed). Formally, we have 
\[
\| f(\state', \control, \noise) -  f(\state, \control, \noise) \| \leq L_{f, \state}\| \state' -  \state \|
\]
for fixed $\control \in \U$ and $\noise \in \mathcal{N}$, and all $\state, \state' \in \X$. Similarly, we have 
\[
\| f(\state, \control', \noise) -  f(\state, \control, \noise) \| \leq L_{f, \control}\| \control' -  \control \|
\]
for fixed $\state \in \X$ and $\noise \in \mathcal{N}$, and all $\control, \control' \in \U$. For a given dynamics function $f$, we can compute (upper bounds for) $L_{f, \state}$ and $L_{f, \control}$. Note that we always have $L_{f, \state} \leq L_f$, and $L_{f, \control} \leq L_f$. 

We now explain why we can replace $L_f (L_\pi + 1)$ by $L_{f, \state} + L_{f, \control} L_\pi$, as bound for the Lipschitz constant of the function $\state \mapsto f(\state, \policy(\state), \noise)$ for all fixed $\noise \in \mathcal{N}$. 

Fix $\noise \in \mathcal{N}$, and let $\state, \state' \in \X$ be given. Then $\| \policy(\state) - \policy(\state')\| \leq L_{\pi}\| \state - \state' \|$. Now we use the triangle inequality to separate the changes in $\state$ from those in $\control = \policy(\state)$:
\begin{align*}
\| f(\state, &\policy(\state), \noise) - f(\state', \policy(\state'), \noise) \| \\ &\leq \| f(\state, \policy(\state), \noise) - f(\state', \policy(\state), \noise) \|  + \| f(\state', \policy(\state), \noise) - f(\state', \policy(\state'), \noise) \|  \\
&\leq L_{f, \state}\| \state' -  \state \| +  L_{f, \control}\| \policy(\state) -  \policy(\state') \| \\
&\leq (L_{f, \state} + L_{f, \control} L_\pi) \| \state' -  \state \|,
\end{align*}
which shows that $L_{f, \state} + L_{f, \control} L_\pi$ is an upper bound for the Lipschitz constant of the function $\state \mapsto f(\state, \policy(\state), \noise)$. Hence, $L_V(L_{f, \state} + L_{f, \control} L_\pi)$ is an upper bound for the Lipschitz constant of the function $\state \mapsto V(f(\state, \policy(\state), \noise))$. 

Finally, note that the inequalities $L_{f, \state} \leq L_f$ and $L_{f, \control} \leq L_f$ imply that $L_V (L_{f, \state} + L_{f, \control} L_\pi) \leq L_V L_f (L_\pi + 1)$, so replacing $K=L_V L_f (L_\pi + 1)$ by $K = L_V (L_{f, \state} + L_{f, \control} L_\pi)$ is indeed an improvement. 

In practice, we have implemented this improvement in our method and all verifier variants for which we report results (including the baseline).

\subsection{Suggested Mesh Size}\label{app:suggestedmesh}

Recall from \cref{sec:Implementation} that refining the mesh size by a fixed factor $C \in (0,1)$ may be unnecessary to mitigate violations of the RASM conditions.
In this section, we explain how we compute a \emph{suggested mesh} for each of the points $\statedisc \in \Xdisc$ that violate the expected decrease condition.
This suggested mesh is used as an upper bound on the factor by which we refine the discretization.

Concretely, let $\statedisc \in \Xdisc$ be a soft violation (as defined in \cref{sec:Implementation}) and denote its current mesh size by $\tau_{\statedisc}$.
Thus, $\statedisc$ is associated with the set $\cell_\infty^{\tau_{\statedisc}}(\statedisc)$.
For each such point, the suggested mesh $\suggmesh_{\statedisc}$ is computed as
\begin{equation}
    \label{eq:suggested_mesh}
    \suggmesh_{\statedisc} =  \max\left\{0.8 \cdot\frac{V(\statedisc)-E(\statedisc)}{K},   \frac{\Vlb(\statedisc)-E(\statedisc)}{K}  \right\},
\end{equation}
where we write $E(\statedisc) = \log\Exp_{\noise \sim \noisedist}\left[\exp(V(f(\statedisc, \policy(\statedisc), \noise)))\right]$.
Then, our local refinement scheme splits the set $\cell_\infty^{\tau_{\statedisc}}(\statedisc)$ associated with the point $\statedisc$ into smaller cells with mesh size $\max(C \tau_{\statedisc}, \lambda_{\statedisc})$.
Hence, we refine the mesh size by the maximum of the fixed factor $C$ and the suggested mesh size $\suggmesh_{\statedisc}$.

\paragraph{Intuition for the suggested mesh size.} We now explain the intuition behind \cref{eq:suggested_mesh}. The requirement to check the cell containing $\statedisc$ from \cref{eq:expdecrcond_new} is that 
\[
E(\statedisc) < \Vlb(\statedisc) - \tau_{\statedisc} K.
\]
Hence, if we refine the mesh to $\lambda_{\statedisc}  < \frac{\Vlb(\statedisc)-E(\statedisc)}{K}$ (the second term in the maximum in \cref{eq:suggested_mesh}), then the condition will be guaranteed to hold for the new cell containing $\statedisc$. However, refining the point $\statedisc$ introduces new points in the discretization in the set $\cell_\infty^{\tau_{\statedisc}}(\statedisc)$, and for these points, this suggested mesh is not necessarily sufficient. This suggested mesh is somewhat conservative, however, since it does not take into account that also $\Vlb(\statedisc)$ would become larger when decreasing the mesh. The first term in the maximum in \cref{eq:suggested_mesh} takes this into account by replacing $\Vlb(\statedisc)$ by $V(\statedisc)$. We now multiply by $0.8$ to take into account that the required mesh might be lower for nearby points. 

\paragraph{Initial and safety conditions.} For the initial and safety conditions, computing a suggested mesh size is difficult due to the use of IBP for computing $\Vlb(\statedisc)$ and $\Vub(\statedisc)$.
In particular, the suggested mesh computation in \cref{eq:suggested_mesh} relies on Lipschitz constants, which are more conservative than the bounds we obtain using IBP.
As a result, adapting \cref{eq:suggested_mesh} for the initial and safety conditions would lead to suggested meshes that are too conservative (i.e., too low).
Thus, we simply use the fixed factor $C$ for refining violations of the initial and safety conditions. Since in practice the most difficult violations are violations of the expected decrease condition, this is not a major limitation.

\section{Proofs}\label{app:proofs}

\subsection{Preliminary Properties}

We first recall several standard properties of norms and Lipschitz constants. Throughout the appendix, superscripts on norms denote the index of the norm.

\cref{prop:lipcomp,prop:lipadd} are standard properties of Lipschitz constants.

\begin{property}\label{prop:lipcomp}
If $f \colon \R^{m_k} \rightarrow \R^{m_{\ell}},g \colon \R^{m_j} \rightarrow \R^{m_k}$ are Lipschitz continuous with Lipschitz constants $L_f$ and $L_g$ with respect to given norms  $\| \cdot \|^j$,  $\| \cdot \|^k$, and  $\| \cdot \|^\ell$ defined on $\R^{m_j}$, $\R^{m_k}$, and $\R^{m_\ell}$ respectively, then $f\circ g \colon \R^{m_j} \rightarrow \R^{m_{\ell}}$ is Lipschitz continuous with Lipschitz constant $L_f L_g$.
\end{property}

\begin{property}\label{prop:lipadd}
If $f, g \colon \R^{m_k} \rightarrow \R^{m_{\ell}}$ are Lipschitz continuous with Lipschitz constants $L_f$ and $L_g$ with respect to given norms  $\| \cdot \|^k$, and  $\| \cdot \|^\ell$ defined on $\R^{m_k}$ and $\R^{m_\ell}$ respectively, and $\alpha, \beta \in \R$, then $\alpha f + \beta g \colon \R^{m_k} \rightarrow \R^{m_{\ell}}$ is Lipschitz continuous with Lipschitz constant $|\alpha| L_f + |\beta| L_g$.
\end{property}

\Cref{prop:lipaffine} states that the operator norm of a matrix $A$ is a Lipschitz constant of a corresponding affine function $x \mapsto Ax + b$, where $b$ is some (bias) vector. 

\begin{property} \label{prop:lipaffine}
Let $A \in \R^{m_{\ell} \times m_k}$ be a matrix and let $b \in \R^{m_{\ell}}$ be a vector. 
Equip the input space with the norm $\| \cdot \|^k$ and the output space with the norm $\| \cdot \|^\ell$, and let the corresponding operator norm be given by 
\[\| A \|^{k,\ell} = \sup \Big\{\frac{\| A x \|^\ell}{\| x \|^k} ~\Big|~ x \in \R^{m_k},  x \neq 0\Big\}.\] For these norms, the affine function $\R^{m_k} \rightarrow \R^{m_{\ell}}$ defined by $x \mapsto Ax + b$ is Lipschitz continuous with Lipschitz constant $\|A\|^{k, \ell}$.
\end{property}
\begin{proof}
Let $x, x' \in \R^{m_k}$ be given. If $x = x'$, then $\| (Ax+b) - (Ax'+b)\|^\ell = 0 = \|A\|^{k, \ell}\|x-x'\|^k$ trivially. Now assume that $x \neq x'$, then $x-x'\neq 0$. Hence, 
\[\| (Ax+b) - (Ax'+b)\|^\ell = \|A(x-x')\|^{\ell} \leq \|A\|^{k, \ell}\|x-x'\|^k,\]
by the definition of the operator norm. This shows that $x \mapsto Ax + b$ is Lipschitz continuous with Lipschitz constant $\|A\|^{k, \ell}$.
\end{proof}

\Cref{prop:actfuns} shows that an activation function applying a scalar function componentwise inherits the Lipschitz constant of the scalar function in any weighted $1$-norm.

\begin{property} \label{prop:actfuns}
Let $R \colon \R \rightarrow \R$ be a function with Lipschitz constant $L$, i.e.\ $|R(x) - R(x')| \leq L|x-x'|$. Then the vectorized function $R' \colon \R^v \rightarrow \R^v$ applying $R$ componentwise has Lipschitz constant $L$ when the input and output space are both equipped with the same weighted $1$-norm.
\end{property}
\begin{proof}
Let $\|x\|_{\weightsys} =  \sum_{i=1}^v w_i \lvert x_i\rvert$ for weights $w_i > 0$ be the norm on the input and output space. For any $x, x' \in \R^v$, it holds that 
\begin{align*}
\|R'(x) - R'(x')\|_{\weightsys} &=  \sum_{i=1}^v w_i \lvert (R'(x) - R'(x'))_i\rvert =  \sum_{i=1}^v w_i \lvert R(x_i) - R(x'_i)\rvert\\ & \leq \sum_{i=1}^v w_i L\lvert x_i - x'_i\rvert = L  \sum_{i=1}^v w_i \lvert x_i - x'_i\rvert = L \|x - x'\|_{\weightsys},
\end{align*}
which shows that $R'$ is Lipschitz continuous with Lipschitz constant $L$. 
\end{proof}

Finally, \cref{prop:normdecomp} shows how we can decompose $1$-norms. 

\begin{property}\label{prop:normdecomp}
Equip $\R^{v_1+v_2}$, $\R^{v_1}$ and $\R^{v_2}$ with a weighted $1$-norm, such that the first $v_1$ weights on $\R^{v_1+v_2}$ coincide with the weights on $\R^{v_1}$ and the last $v_2$ weights on $\R^{v_1+v_2}$ coincide with the weights on $\R^{v_2}$.
Let $x = (y,z) \in \R^{v_1+v_2}$ with $y\in \R^{v_1}$ and $z \in \R^{v_2}$ be given. Then $\|x\|_{\weightsys} \leq \|y\|_{\weightsys} + \|z\|_{\weightsys}$. %
\end{property}

\begin{proof}
The triangle inequality implies
\[ 
\|x\|_{\weightsys} = \|(y, 0) + (0, z)\|_{\weightsys} \leq \|(y, 0)\|_{\weightsys} + \|(0, z)\|_{\weightsys} = \|y\|_{\weightsys} + \|z\|_{\weightsys},
\]
where the last equality holds since from the definition of the weighted $1$-norm it follows that 0 components do not contribute to the norm. 
\end{proof}

\setcounter{theorem}{0}
\setcounter{lemma}{0}

\subsection{Proof of \cref{thm:rasm}} \label{proof:rasm}

We give a short proof of \cref{thm:rasm}. Another proof is given in \cite{DBLP:conf/aaai/ZikelicLHC23}.

\begin{theorem}
    If there exists a RASM for the reach-avoid specification $\tuple{\xTarget, \xUnsafe, \rho}$, then this specification is satisfied.
\end{theorem}

\begin{proof} Fix a policy $\policy$ and an initial state $\state_0 \in \X_0$. We consider the stochastic process $(\state_t)_{t \in \Nzero}$ which is defined recursively by $\state_{t+1} = f(\state_t, \policy(\state_t), \noise_t)$. Let $\mathcal{F}_t$ be the natural filtration corresponding to $(\state_t)_{t \in \Nzero}$. Let \[\sigma = \min\{t\in\Nzero \colon \state_t \in \xTarget \vee V(\state_t) \geq \tfrac1{1-\rho}\}.\] Then $\sigma$ is a stopping time, since $\{\sigma = t\} \in \mathcal{F}_{t}$ for all $t \in \Nzero$; intuitively, this means that whether the event occurs is determined by the values of $\state_{t'}$ for $t' \leq t$.  

The expected decrease condition implies that $(V(\state_{\min\{t,\sigma\}}))_{t \in \Nzero}$ is a supermartingale with respect to the filtration $(\mathcal{F}_t)_{t \in \Nzero}$.  Namely, if $t < \sigma$, then \begin{align*} \Exp[V(\mathbf{x}_{\min\{t+1,\sigma\}}) \mid \mathcal{F}_t] &= \Exp[V(\mathbf{x}_{t+1}) \mid \mathcal{F}_t] = \Exp[V(f(\state_t, \policy(\state_t), \noise_t)) \mid \mathbf{x}_t] \\ &= \Exp_{\noise_t\sim\noisedist}[V(f(\state_t, \policy(\state_t), \noise_t))] \leq V(\state_t) = V(\mathbf{x}_{\min\{t,\sigma\}}), \end{align*}
while if $t \geq \sigma$ then $\Exp[V(\mathbf{x}_{\min\{t+1,\sigma\}}) \mid \mathcal{F}_t] = V(\mathbf{x}_{\sigma}) = V(\mathbf{x}_{\min\{t,\sigma\}})$.

We now show that $\sigma < \infty$ almost surely  (i.e., with probability 1). For this we use that $V$ decreases in expectation by $\epsilon$ each step until $\sigma$ occurs, but remains nonnegative. This implies that 
\[
0 \leq \Exp[V(\state_{\sigma})] = \Exp[V(\state_0) - \epsilon \sigma], 
\]
so $\Exp[\sigma] \leq \frac{\Exp[V(\state_0)]}{\epsilon}$. Hence, Markov's inequality implies $\mathbb{P}[\sigma \geq t] \leq \frac{\Exp[V(\state_0)]}{\epsilon t}$ and taking the limit $t \rightarrow \infty$ then shows that $\sigma < \infty$ almost surely.

Since $(V(\state_{\min\{t,\sigma\}}))_{t \in \Nzero}$ is a supermartingale and $\sigma < \infty$ almost surely, the optional stopping theorem implies that $\Exp[V(\state_\sigma)] \leq \Exp[V(\state_0)]$. Moreover, we have $\tfrac1{1-\rho} \mathbb{P}\left[V(\state_\sigma) \geq \tfrac1{1-\rho}\right] \leq \Exp[V(\state_\sigma)]$ by Markov's inequality (using that $V$ is nonnegative), and  $\Exp[V(\state_0)] \leq 1$ by the initial condition. Together, this shows that $\mathbb{P}\left[V(\state_\sigma) \geq \tfrac1{1-\rho}\right] \leq (1-\rho)\Exp[V(\state_\sigma)] \leq (1-\rho)\Exp[V(\state_0)] \leq 1-\rho$. 
Since $\state_\sigma\in \xTarget \vee V(\state_\sigma) \geq \frac1{1-\rho}$ holds, this implies $\mathbb{P}\left[\state_\sigma \in \xTarget\right] \geq \rho$. 
Since $V(\state_t) < \frac1{1-\rho}$ for $t < \sigma$ by the definition of $\sigma$, the safety condition guarantees that $\state_t \not\in \xUnsafe$ for $t < \sigma$.   Hence, we conclude that \[\satprob^\policy_{\state_0}(\xTarget, \xUnsafe) \geq \mathbb{P}\left[\state_\sigma \in \xTarget \wedge (\forall t < \sigma :\state_t \not\in \xUnsafe) \right] \geq \rho,\] as required to show that the reach-avoid specification $\tuple{\xTarget, \xUnsafe, \rho}$ is satisfied.
\end{proof}

\subsection{Proof of \cref{lem:drasm}} \label{proof:drasm}

\begin{lemma}
Every discrete RASM is also a RASM.
\end{lemma}

\begin{proof}
Let $V \colon \X \rightarrow \R_{\geq 0}$ be a discrete RASM. Then $V$ is Lipschitz continuous and hence continuous.  We proceed by showing that each of the three conditions in the definition of a discrete RASM (\cref{def:drasm}) implies the corresponding condition in the definition of a RASM (\cref{def:rasm}). \medskip

    \textbf{(1)} \emph{Initial condition}: Since $V$ is a discrete RASM, it holds that $\Vub(\statedisc) \leq 1$ for all $\statedisc \in \Xdisc$ such that $\cell_\infty^{\tau_{\statedisc}}(\statedisc) \cap \X_0 \neq \emptyset$. Now let $\state \in \X_0$ be given. Since $\Xdisc$ is a discretization of $\X$, there exists a point $\statedisc \in \Xdisc$ such that $\state \in \cell_\infty^{\tau_{\statedisc}}(\statedisc)$. Then $\state \in \cell_\infty^{\tau_{\statedisc}}(\statedisc) \cap \X_0$, so $\cell_\infty^{\tau_{\statedisc}}(\statedisc) \cap \X_0 \neq \emptyset$ and hence
    \[
    V(\state) \leq \Vmax(\statedisc) \leq \Vub(\statedisc) \leq 1.
    \]
    Since $\state \in \X_0$ was arbitrary, we conclude that $ V(\state) \leq 1$ for all $\state \in \X_0$. Hence, we conclude that the initial condition \textbf{(1)} from \cref{def:rasm} holds.

    \textbf{(2)} \emph{Safety condition}: Since $V$ is a discrete RASM, it holds that  $\Vlb(\statedisc) \geq \frac1{1-\rho}$ for all $\statedisc \in \Xdisc$ such that $\cell_\infty^{\tau_{\statedisc}}(\statedisc) \cap \xUnsafe \neq \emptyset$. Now let $\state \in \xUnsafe$ be given. Since $\Xdisc$ is a discretization of $\X$, there exists a point $\statedisc \in \Xdisc$ such that $\state \in \cell_\infty^{\tau_{\statedisc}}(\statedisc)$. Then $\state \in \cell_\infty^{\tau_{\statedisc}}(\statedisc) \cap \xUnsafe$, so $\cell_\infty^{\tau_{\statedisc}}(\statedisc) \cap \xUnsafe \neq \emptyset$ and hence
    \[
    V(\state) \geq \Vmin(\statedisc) \geq \Vlb(\statedisc) \geq \tfrac1{1-\rho}.
    \]
    Since $\state \in \xUnsafe$ was arbitrary, we conclude that $ V(\state) \geq \frac1{1-\rho}$ for all $\state \in \xUnsafe$. Hence, we conclude that the safety condition \textbf{(2)} from \cref{def:rasm} holds. \medskip

    \textbf{(3)} \emph{Expected decrease condition}: Since $V$ is a discrete RASM, it holds that 
    \begin{equation*}
        \Exp_{\noise \sim \noisedist}\left[V(f(\statedisc, \policy(\statedisc), \noise))\right] < \Vlb(\statedisc) - \tau_{\statedisc} K
    \end{equation*}for all $\statedisc \in \Xdisc$ such that $\cell_\infty^{\tau_{\statedisc}}(\statedisc) \cap (\X \setminus \xTarget) \neq \emptyset$ and  $\Vlb(\statedisc) < \frac1{1-\rho}$. For each point $\statedisc \in \Xdisc$ in the discretization, define $\epsilon_{\statedisc}$ by \[\epsilon_{\statedisc} = \Vlb(\statedisc) - \tau_{\statedisc} K - \Exp_{\noise \sim \noisedist}\left[V(f(\statedisc, \policy(\statedisc), \noise))\right] > 0.\]
    Since $\Xdisc$ is finite, $\epsilon := \min\limits_{\statedisc \in \Xdisc} \epsilon_{\statedisc} > 0$. Hence, there exists an $\epsilon > 0$ such that \smallskip
    \begin{equation*}
        \Exp_{\noise \sim \noisedist}\left[V(f(\statedisc, \policy(\statedisc), \noise))\right] \leq \Vlb(\statedisc) - \epsilon - \tau_{\statedisc} K
    \end{equation*}for all $\statedisc \in \Xdisc$ such that $\cell_\infty^{\tau_{\statedisc}}(\statedisc) \cap (\X \setminus \xTarget) \neq \emptyset$ and  $\Vlb(\statedisc) < \frac1{1-\rho}$.
    
    Now let $\state \in \X$ be a point such that $\state \in \X \setminus \xTarget$ and  $V(\state) < \frac1{1-\rho}$. Since $\Xdisc$ is a discretization of $\X$, there exists a point $\statedisc \in \Xdisc$ such that $\state \in \cell_\infty^{\tau_{\statedisc}}(\statedisc)$. Then $\state \in \cell_\infty^{\tau_{\statedisc}}(\statedisc) \cap (\X \setminus \xTarget)$, which implies $\cell_\infty^{\tau_{\statedisc}}(\statedisc) \cap (\X \setminus \xTarget) \neq \emptyset$, and $\Vlb(\statedisc) \leq \Vmin(\statedisc) \leq V(\state) < \frac1{1-\rho}$. Hence, we have \[\Exp_{\noise \sim \noisedist}\left[V(f(\statedisc, \policy(\statedisc), \noise))\right] \leq \Vlb(\statedisc) - \epsilon - \tau_{\statedisc} K.\]

      Fix an $\noise$. Since $\state \in \cell_\infty^{\tau_{\statedisc}}(\statedisc)$, we have $\|\state-\statedisc\|_{\infty} \leq \tau/d$, where $d$ is the dimension of the state space $\X$. It follows that $\|\state-\statedisc\|_{1} \leq \tau$. Hence, the Lipschitz continuity of $\policy$ implies $\| \policy(\state) - \policy(\statedisc) \|_1 \leq \tau_{\statedisc} L_\policy$, so 
  \[\|(\state, \policy(\state), \noise) - (\statedisc, \policy(\statedisc), \noise)\|_1 \leq  \| \state - \statedisc\|_1 + \|\policy(\state) - \policy(\statedisc)\|_1 \leq \tau_{\statedisc} (L_\policy+1),\]
  by \cref{prop:normdecomp}. 
  Since $V$ and $f$ have Lipschitz constants $L_V$ and $L_f$, this implies
  \[\left|V(f(\state, \policy(\state), \noise)) - V(f(\statedisc, \policy(\statedisc), \noise))\right| \leq  \tau_{\statedisc} L_V L_f (L_\policy+1) = \tau_{\statedisc} K.\]
  Hence, we have $V(f(\state, \policy(\state), \noise)) \leq V(f(\statedisc, \policy(\statedisc), \noise)) + \tau_{\statedisc} K$ for each $\noise$. Taking the expectation over $\noise \sim \noisedist$ yields 
  \begin{align*}
    \Exp_{\noise \sim \noisedist}\left[V(f(\state, \policy(\state), \noise))\right] &\leq
    \Exp_{\noise \sim \noisedist}\left[V(f(\statedisc, \policy(\statedisc), \noise))\right] + \tau_{\statedisc} K \\ &\leq \Vlb(\statedisc) - \epsilon \leq V(\state) - \epsilon.
  \end{align*}
  Since $\state \in \X$ was an arbitrary point such that $\state \in \X \setminus \xTarget$ and  $V(\state) < \frac1{1-\rho}$, we conclude that there exists an $\epsilon > 0$ such that  $\Exp_{\noise \sim \noisedist}\left[V(f(\state, \policy(\state), \noise))\right] \leq V(\state) - \epsilon$ for all $\state \in \X$ such that $\state \in \X \setminus \xTarget$ and  $V(\state) < \frac1{1-\rho}$. Hence, the Expected decrease condition  \textbf{(3)} from \cref{def:rasm} holds.

  Hence, we conclude that $V$ is a RASM, which completes the proof.
\end{proof}

\subsection{Proof of \cref{lem:lograsm}} \label{proof:lograsm}

\newcommand{\Vlog}{V}
\newcommand{\Vnolog}{\widetilde{V}}

\begin{lemma}
If $V$ is a logRASM, then $ \exp\!\big(V\big)$ is a RASM.
\end{lemma}

\begin{proof}
Let $\Vlog$ be a logRASM, and write $\Vnolog = \exp(\Vlog)$. For the initial condition, note that $\Vlog(\state) \leq 0$ for all $\state \in \X_0$ implies $\Vnolog(\state) = \exp\big( \Vlog(\state)\big) \leq 1$ for all $\state \in \X_0$. 

Similarly, for the safety condition, $\Vlog(\state) \geq \log\big(\frac1{1-\rho}\big)$ for all $\state \in \xUnsafe$ implies $\Vnolog(\state) = \exp\big( \Vlog(\state)\big)  \geq \frac1{1-\rho}$ for all $\state \in \xUnsafe$. 

For the expected decrease condition, we first note that $\Vlog$ is continuous and $\X$ is compact, so Weierstrass’ Extreme Value Theorem implies that $\Vlog$ attains some global minimum $v$. Now let $\epsilon > 0$ satisfy 
\[
\log \Exp_{\noise \sim \noisedist}\left[\exp(\Vlog(f(\state, \policy(\state), \noise)))\right] \leq \Vlog(\state) - \epsilon
\]
for all $\state \in \X \setminus \xTarget$ with  $\Vlog(\state) \leq \log\big(\frac1{1-\rho}\big)$. Let 
$\epsilon' = e^v(1-e^{-\epsilon}) > 0$. Then 
\begin{align*}
\Exp_{\noise \sim \noisedist}\left[\Vnolog(f(\state, \policy(\state), \noise))\right] &\leq \exp\big(\Vlog(\state) - \epsilon\big) = \Vnolog(\state) e^{-\epsilon} \\ &= \Vnolog(\state) - \Vnolog(\state)(1-e^{-\epsilon}) \\ &\leq \Vnolog(\state) - e^v(1-e^{-\epsilon}) \\ &= \Vnolog(\state) - \epsilon'
\end{align*}
for all $\state \in \X \setminus \xTarget$ with  $\Vnolog(\state) \leq \frac1{1-\rho}$. Hence, the expected decrease condition is also satisfied. Finally, we note that $\Vnolog = \exp\big(\Vlog\big)$ is nonnegative and that $\Vnolog$ is continuous since $\Vlog$ and $\exp$ are continuous. We conclude that $\Vnolog$ is a RASM.
\end{proof}

\subsection{Proof of \cref{thm:logdrasm}} \label{proof:logdrasm}

\begin{theorem}
If $V$ is a discrete logRASM for a discretization $\Xdisc$, then $ \exp\big(V\big)$ is a RASM.
\end{theorem}

\begin{proof}
We follow the proof of \cref{lem:drasm}, except that we need some novel ideas for the expected decrease condition. 
Let $\Vlog \colon \X \rightarrow \R$ be a discrete logRASM, and write $\Vnolog = \exp(\Vlog)$. Then $\Vlog$ is Lipschitz continuous and hence continuous, so $\Vnolog$ is also continuous. Also, $\Vnolog$ is nonnegative.  We proceed by showing that each of the three conditions in the definition of a discrete logRASM (\cref{def:dlograsm}) implies the corresponding condition in the definition of a RASM (\cref{def:rasm}). 

    \textbf{(1)} \emph{Initial condition}: Since $\Vnolog$ is a discrete logRASM, it holds that $\Vub(\statedisc) \leq 0$ for all $\statedisc \in \Xdisc$ such that $\cell_\infty^{\tau_{\statedisc}}(\statedisc) \cap \X_0 \neq \emptyset$. Now let $\state \in \X_0$ be given. Since $\Xdisc$ is a discretization of $\X$, there exists a point $\statedisc \in \Xdisc$ such that $\state \in \cell_\infty^{\tau_{\statedisc}}(\statedisc)$. Then $\state \in \cell_\infty^{\tau_{\statedisc}}(\statedisc) \cap \X_0$, so $\cell_\infty^{\tau_{\statedisc}}(\statedisc) \cap \X_0 \neq \emptyset$ and hence
    \[
    \Vnolog(\state) = \exp(V(\state)) \leq \exp(\Vmax(\statedisc)) \leq \exp(\Vub(\statedisc)) \leq \exp(0) = 1.
    \]
    Since $\state \in \X_0$ was arbitrary, we conclude that $\Vnolog(\state) \leq 1$ for all $\state \in \X_0$. Hence, we conclude that the initial condition \textbf{(1)} from \cref{def:rasm} holds.

    \textbf{(2)} \emph{Safety condition}: Since $V$ is a discrete logRASM, it holds that  $\Vlb(\statedisc) \geq \log\big(\frac1{1-\rho}\big)$ for all $\statedisc \in \Xdisc$ such that $\cell_\infty^{\tau_{\statedisc}}(\statedisc) \cap \xUnsafe \neq \emptyset$. Now let $\state \in \xUnsafe$ be given. Since $\Xdisc$ is a discretization of $\X$, there exists a point $\statedisc \in \Xdisc$ such that $\state \in \cell_\infty^{\tau_{\statedisc}}(\statedisc)$. Then $\state \in \cell_\infty^{\tau_{\statedisc}}(\statedisc) \cap \xUnsafe$, so $\cell_\infty^{\tau_{\statedisc}}(\statedisc) \cap \xUnsafe \neq \emptyset$ and hence
    \[
    \Vnolog(\state) = \exp(V(\state)) \geq \exp(\Vmin(\statedisc)) \geq \exp(\Vlb(\statedisc)) \geq \exp\!\big(\!\log\!\big(\tfrac1{1-\rho}\big)\big) = \tfrac1{1-\rho}.
    \]
    Since $\state \in \xUnsafe$ was arbitrary, we conclude that $\Vnolog(\state) \geq \frac1{1-\rho}$ for all $\state \in \xUnsafe$. Hence, we conclude that the safety condition \textbf{(2)} from \cref{def:rasm} holds. \medskip

    \textbf{(3)} \emph{Expected decrease condition}: Since $V$ is a discrete logRASM, it holds that 
    \begin{equation*}
        \log \Exp_{\noise \sim \noisedist}\Big[\exp\!\big(V(f(\statedisc, \policy(\statedisc), \noise))\big)\Big] < \Vlb(\statedisc) - \tau_{\statedisc} K
    \end{equation*}
    for all $\statedisc \in \Xdisc$ such that $\cell_\infty^{\tau_{\statedisc}}(\statedisc) \cap (\X \setminus \xTarget) \neq \emptyset$ and  $\Vlb(\statedisc) < \log\big(\frac1{1-\rho}\big)$. For each point $\statedisc \in \Xdisc$ in the discretization, define $\epsilon_{\statedisc}$ by \[\epsilon_{\statedisc} = \Vlb(\statedisc) - \tau_{\statedisc} K -  \log \Exp_{\noise \sim \noisedist}\Big[\exp\!\big(V(f(\statedisc, \policy(\statedisc), \noise))\big)\Big] > 0.\]
    Since $\Xdisc$ is finite, $\epsilon' := \min\limits_{\statedisc \in \Xdisc} \epsilon_{\statedisc} > 0$. Hence, there exists an $\epsilon' > 0$ such that \smallskip
    \begin{equation*}
         \log \Exp_{\noise \sim \noisedist}\Big[\exp\!\big(V(f(\statedisc, \policy(\statedisc), \noise))\big)\Big] \leq \Vlb(\statedisc) - \epsilon - \tau_{\statedisc} K
    \end{equation*}for all $\statedisc \in \Xdisc$ such that $\cell_\infty^{\tau_{\statedisc}}(\statedisc) \cap (\X \setminus \xTarget) \neq \emptyset$ and  $\Vlb(\statedisc) < \log\big(\frac1{1-\rho}\big)$.
    
    Now let $\state \in \X$ be a point such that $\state \in \X \setminus \xTarget$ and  $\Vnolog(\state) < \frac1{1-\rho}$. Since $\Xdisc$ is a discretization of $\X$, there exists a point $\statedisc \in \Xdisc$ such that $\state \in \cell_\infty^{\tau_{\statedisc}}(\statedisc)$. Then $\state \in \cell_\infty^{\tau_{\statedisc}}(\statedisc) \cap (\X \setminus \xTarget)$, which implies $\cell_\infty^{\tau_{\statedisc}}(\statedisc) \cap (\X \setminus \xTarget) \neq \emptyset$, and $\Vlb(\statedisc) \leq \Vmin(\statedisc) \leq V(\state) = \log(\Vnolog(\state)) < \log\big(\frac1{1-\rho}\big)$. Hence, we have \[\log \Exp_{\noise \sim \noisedist}\Big[\exp\!\big(V(f(\statedisc, \policy(\statedisc), \noise))\big)\Big] \leq \Vlb(\statedisc) - \epsilon' - \tau_{\statedisc} K.\]
Fix an $\noise$. Since $\state \in \cell_\infty^{\tau_{\statedisc}}(\statedisc)$, we have $\|\state-\statedisc\|_{\infty} \leq \tau/d$, where $d$ is the dimension of the state space $\X$. It follows that $\|\state-\statedisc\|_{1} \leq \tau$. Hence, the Lipschitz continuity of $\policy$ implies $\| \policy(\state) - \policy(\statedisc) \|_1 \leq \tau_{\statedisc} L_\policy$, so 
  \[\|(\state, \policy(\state), \noise) - (\statedisc, \policy(\statedisc), \noise)\|_1 \leq  \| \state - \statedisc\|_1 + \|\policy(\state) - \policy(\statedisc)\|_1 \leq \tau_{\statedisc} (L_\policy+1),\]
  by \cref{prop:normdecomp}. 
  Since $V$ and $f$ have Lipschitz constants $L_V$ and $L_f$, this implies
  \[\left|V(f(\state, \policy(\state), \noise)) - V(f(\statedisc, \policy(\statedisc), \noise))\right| \leq  \tau_{\statedisc} L_V L_f (L_\policy+1) = \tau_{\statedisc} K.\]
  Hence, we have $V(f(\state, \policy(\state), \noise)) \leq V(f(\statedisc, \policy(\statedisc), \noise)) + \tau_{\statedisc} K$ for each $\noise$. 
 
\noindent We now come to the main novel part of the proof. We have
   \begin{align*} \Exp_{\noise \sim \noisedist} \left[\Vnolog(f(\state, \policy(\state), \noise))\right] 
     &= \Exp_{\noise \sim \noisedist} \Big[\exp\!\big(\Vlog(f(\state, \policy(\state), \noise))\big)\Big] 
  \\ &\leq \Exp_{\noise \sim \noisedist} \Big[\exp\big(\Vlog(f(\statedisc, \policy(\statedisc), \noise))+\tau_{\statedisc} K\big)\Big] 
  \\ &= e^{\tau_{\statedisc} K}\Exp_{\noise \sim \noisedist} \Big[\exp\big(\Vlog(f(\statedisc, \policy(\statedisc), \noise))\big)\Big] 
  \\ &\leq e^{\tau_{\statedisc} K} \exp\left(\Vlb(\statedisc) - \epsilon' - \tau_{\statedisc} K\right)
  \\ &\leq e^{\tau_{\statedisc} K} e^{ - \tau_{\statedisc} K} e^{-\epsilon'} \exp\left(V(\statedisc)\right) = e^{-\epsilon'} \Vnolog(\state). 
  \end{align*}
  Note that $\Vlog$ is continuous and $\X$ is compact, so Weierstrass’ Extreme Value Theorem implies that $\Vlog$ attains some global minimum $v$. Let 
$\epsilon = e^v(1-e^{-\epsilon'}) > 0$. Then this shows that
\[
\Exp_{\noise \sim \noisedist} \left[\Vnolog(f(\state, \policy(\state), \noise))\right] \leq \Vnolog(\state) - \epsilon.
\]
   Since $\state \in \X$ was an arbitrary point such that $\state \in \X \setminus \xTarget$ and  $\Vnolog(\state) < \frac1{1-\rho}$, we conclude that there exists an $\epsilon > 0$ such that  $\Exp_{\noise \sim \noisedist}\left[\Vnolog(f(\state, \policy(\state), \noise))\right] \leq \Vnolog(\state) - \epsilon$ for all $\state \in \X$ such that $\state \in \X \setminus \xTarget$ and  $\Vnolog(\state) < \frac1{1-\rho}$. Hence, the Expected decrease condition  \textbf{(3)} from \cref{def:rasm} holds.

  Hence, we conclude that $\Vnolog = \exp(\Vlog)$ is a RASM, which completes the proof.
\end{proof}

\subsection{Proof of \cref{lem:better}} \label{proof:better}

\begin{lemma}
Let $K' = \frac1{1-\rho} K > 0$. If $\Vlb(\statedisc) < \log\big(\frac1{1-\rho}\big)$, then \[\exp(\Vlb(\statedisc)) - \tau_{\statedisc} K' < \exp(\Vlb(\statedisc) - \tau_{\statedisc} K).\]
\end{lemma}

\begin{proof}
We have
\begin{align*}
\exp(\Vlb(\statedisc) - \tau_{\statedisc} K) 
    &= \exp(\Vlb(\statedisc))\exp( - \tau_{\statedisc} K) 
 \\ &\geq \exp(\Vlb(\statedisc)) (1 - \tau_{\statedisc} K) 
 \\ &= \exp(\Vlb(\statedisc)) - \exp(\Vlb(\statedisc))\tau_{\statedisc} K 
 \\ &> \exp(\Vlb(\statedisc)) - \tfrac1{1-\rho}\tau_{\statedisc} K 
 \\ &= \exp(\Vlb(\statedisc)) - \tau_{\statedisc} K',
\end{align*}
which is the required inequality.
\end{proof}

\subsection{Proof of \cref{lem:matrixnorm}}\label{proof:matrixnorm}

\begin{lemma}
Let $M \in \R^{m_\ell \times m_k}$ be a matrix with entries $M_{ij}$. Equip the space $\R^{m_k}$ with the norm $\|x\|_{\weightsys}^k = \sum_{i=1}^{m_k} \weight_i^k \lvert x_i\rvert $, and the space $\R^{m_\ell}$ with the norm $\|x\|_{\weightsys}^{\ell} = \sum_{i=1}^{m_\ell} \weight_i^\ell \lvert x_i\rvert $. Then the corresponding matrix norm satisfies 
\[
\|M\|^{k, \ell}_{\weightsys} \;=\; \max\limits_{1 \leq j \leq m_k}  \left[\frac{1}{\weight_j^k} \sum_{i=1}^{m_\ell} \weight_i^\ell  \left\lvert M_{ij} \right\rvert \right].\]
\end{lemma}

\begin{proof}
Note that we can write $\|x\|_{\weightsys}^{k} = \sum_{i=1}^{m_k} \weight_i^k \lvert x_i\rvert $ and $\|y\|_{\weightsys}^{\ell} = \sum_{i=1}^{m_\ell} \weight_i^{\ell} \lvert y_i\rvert $. Hence,
\begin{align*}
    \| A x \|_{\weightsys}^{\ell} &= \sum_{i=1}^{m_\ell} \weight_i^{\ell} \lvert(Ax)_i\rvert \leq  \sum_{i=1}^{m_\ell} \left[ \weight_i^{\ell} \sum\limits_{j=1}^{m_k} \lvert A_{ij}\rvert \lvert x_j\rvert \right] \\ &=  \sum\limits_{j=1}^{m_k}  \left(\frac{1}{\weight_j^k} \sum_{i=1}^{m_\ell} \weight_i^\ell \left\lvert A_{ij}\right\rvert \right) \weight_j^k\lvert x_j \rvert  \\ &\leq  \left(\max_{1 \leq j \leq m_k} \left[\frac{1}{\weight_j^k} \sum_{i=1}^{m_\ell} \weight_i^\ell \left\lvert A_{ij}\right\rvert \right]\right) \! \sum\limits_{j=1}^{m_k}  w_j^k \lvert x_j \rvert \\ &= \left(\max_{1 \leq j \leq m_k} \left[\frac{1}{\weight_j^k} \sum_{i=1}^{m_\ell} \weight_i^\ell \left\lvert A_{ij}\right\rvert \right]\right) \|x\|_{\weightsys}^{k}, 
\end{align*}
which shows that \[\|A\|_{\weightsys}^{k, \ell} = \sup \Big\{\frac{\| A x \|_{\weightsys}^\ell}{\| x \|_{\weightsys}^k} ~\Big|~ x \in \R^{m_k},  x \neq 0\Big\} \leq \max\limits_{1 \leq j \leq m_k} \left[\frac{1}{\weight_j^k} \sum_{i=1}^{m_\ell} \weight_i^\ell \left\lvert A_{ij}\right\rvert \right].\] 

Moreover, the inequality $\|Ax\|_{\weightsys}^\ell \leq \max\limits_{1 \leq j \leq m_k} \left[\frac{1}{\weight_j^k} \sum_{i=1}^{m_\ell} \weight_i^\ell \left\lvert A_{ij}\right\rvert \right] \|x\|_{\weightsys}^k$ holds with equality, if we choose a $j^* \in \argmax\limits_{1 \leq j \leq m_k} \left[\frac{1}{\weight_j^k} \sum_{i=1}^{m_\ell} \weight_i^\ell \left\lvert A_{ij}\right\rvert \right]$ and define $x$ by $x_{j^*} = 1$ and $x_j = 0$ for $j \neq j^*$. This shows that the operator norm $\|A\|_{\weightsys}^{k, \ell}$ is in fact equal to $\max\limits_{1 \leq j \leq m_k} \left[\frac{1}{\weight_j^k} \sum_{i=1}^{m_\ell} \weight_i^\ell \left\lvert A_{ij}\right\rvert \right]$.
\end{proof}

\subsection{Proof of \cref{lem:lipprod}} \label{proof:lipprod}

\begin{lemma}
Let $\weightsys$ be a weight system. Then $\lipnetwork$ is a Lipschitz constant of $\network$, i.e.\ $\|\network(x) - \network(x')\|^n_{\weightsys} \leq  \lipnetwork \|x - x'\|^0_{\weightsys}$ for all $x, x' \in \R^{m_0}$. If additionally $w^n_i=1$ for all $1 \leq i \leq m_n$,  then $\lipnetwork$ is a Lipschitz constant of $\network$ for the standard (unweighted) 1-norm, i.e.\ $\|\network(x) - \network(x')\| \leq \lipnetwork \|x - x'\|$ for all $x, x' \in \R^{m_0}$.
\end{lemma}
\begin{proof}
We first prove the inequality  $\|\network(x) - \network(x')\|^n_{\weightsys} \leq L_{T, \weightsys} \|x - x'\|^0_{\weightsys}$ for all $x, x' \in \R^{m_0}$ by induction on the number of layers $(n+1)$, where $n \geq 1$. 

For $n=1$, we note that 
\begin{align*} \|T(x) - T(x')\|^1_{\weightsys} &= \|R_1(A_1x+b_1) - R_1(A_1x'+b_1)\|^1_{\weightsys} \\ &\leq \|(A_1x+b_1) - (A_1x'+b_1)\|^1_{\weightsys} \\ &\leq \|A_1\|^{0,1}_{\weightsys} \|x - x'\|^0_{\weightsys} = L_{T, \weightsys} \|x - x'\|^0_{\weightsys},
\end{align*}
using \cref{prop:lipaffine,prop:actfuns}, and the fact that $R_1$ has Lipschitz constant 1.

\noindent For the induction step, consider a network with $(n+1)+1$ layers. Write \[\network(x_0) = R_{n+1}(A_{n+1} x_n + b_{n+1}) = R_{n+1}(A_{n+1} \networkleqn(x_0) + b_{n+1})\] where $\networkleqn$ represents the operator corresponding to the first $n+1$ layers of the network. Then
\begin{align*} \|\network(x) &- \network(x')\|^{n+1}_{\weightsys}  \\ &= \|R_{n+1}(A_{n+1} \networkleqn(x) + b_{n+1}) - R_{n+1}(A_{n+1} \networkleqn(x') + b_{n+1})\|^{n+1}_{\weightsys} \\ &\leq \|(A_{n+1} \networkleqn(x) + b_{n+1}) - (A_{n+1} \networkleqn(x') + b_{n+1})\|^{n+1}_{\weightsys} \\ &\leq \|A_{n+1}\|^{n,n+1}_{\weightsys} \|\networkleqn(x) - \networkleqn(x')\|^n_{\weightsys} \\ &\leq \|A_{n+1}\|^{n,n+1}_{\weightsys} L_{T_{\leq n}, \weightsys}  \|x - x'\|^0_{\weightsys}  \\ &=  \|A_{n+1}\|^{n,n+1}_{\weightsys} \prod_{\ell=1}^n \|A_\ell\|^{\ell-1, \ell}_{\weightsys}   \|x - x'\|^0_{\weightsys}  \\[-4pt] &= \prod_{\ell=1}^{n+1} \|A_\ell\|^{\ell-1, \ell}_{\weightsys}  \|x - x'\|^0_{\weightsys} = L_{T, \weightsys} \|x - x'\|^0_{\weightsys},
\end{align*}
which completes the induction.

We turn to the second part of the lemma. Suppose that $w_i^n = 1$ for $1 \leq i \leq m_n$. By definition of the weight system $\weightsys$, we have $\max_i w_i^0  = 1$, so $w_i^0 \leq 1$ for $1 \leq i \leq m_0$. Hence, if $\| \cdot \|$ denotes the unweighted 1-norm, then $\|x\| = \| x \|_{\weightsys}^n$ for $x \in \R^{m_n}$ and \[\|x\| = \sum_{i=1}^{m_0} |x_i| \geq  \sum_{i=1}^{m_0} w_i^0 |x_i| = \| x \|_{\weightsys}^0\] for $x \in \R^{m_0}$. Hence, we conclude that \[\|T(x) - T(x')\| = \|T(x) - T(x')\|^{n+1}_{\weightsys} \leq L_{T, \weightsys} \|x - x'\|^0_{\weightsys} \leq L_{T, \weightsys} \|x - x'\|,\] which completes the proof.
\end{proof}

\subsection{Proof of \cref{lem:optlem}} \label{proof:optlem}

\begin{lemma}
If $\weightsys$ is optimal for output weights $\weight^n$, then $\lipnetwork \leq \lipnetworkalt$ for all weight systems $\weightsysalt$ with output weights $\weight^n$.
\end{lemma}
\begin{proof}
Let $\weight^n$ be given output weights and let $\weightsys$ be optimal for output weights $\weight^n$. Let $\weightsysalt$ be any weight system with output weights $\weight^n$. Then we have $\lipnetwork\weight^0_j \leq \lipnetworkalt\weightalt^0_j $ for all $1 \leq j \leq m_0$. By definition of a weight system, we have $\max_j \weightalt^0_j = \max_j w^0_j = 1$. Take a $j^*$ such that $w^0_{j^*} = 1$. Then $\weightalt^0_{j^*} \leq 1$, so \[\lipnetworkalt\geq\lipnetworkalt\weightalt^0_{j^*} \geq \lipnetwork \weight^0_{j^*} = \lipnetwork,\] which is the required inequality.
\end{proof}

\subsection{Proof of \cref{thm:optweights}} \label{proof:optweights}

\begin{theorem}[Correctness of Algorithm \ref{alg:weights}]
Let output weights $\weight^n$ be given. 
Then the weights $\weight^\ell_j$ computed using  Algorithm \ref{alg:weights} are optimal for output weights~$\weight^n$.
\end{theorem}

\begin{proof}
We first note that the weights that  Algorithm \ref{alg:weights} computes indeed satisfy $\max_j w^\ell_j = 1$ for all $0 \leq \ell \leq n-1$. Let $j^* \in \argmax\limits_{1 \leq j \leq m_{\ell-1}} \sum_{i=1}^{m_\ell} \weight_i^\ell  \left\lvert (A_\ell)_{ij} \right\rvert$. Then \[K_\ell =\sum_{i=1}^{m_\ell} \weight_i^\ell  \left\lvert (A_\ell)_{ij^*} \right\rvert \geq  \sum_{i=1}^{m_\ell} \weight_i^\ell  \left\lvert (A_\ell)_{ij} \right\rvert,\]
so $\weight_j^{\ell-1} = \frac1{K_\ell} \sum_{i=1}^{m_\ell}  \weight_i^\ell  \left\lvert (A_\ell)_{ij} \right\rvert \leq 1$, while $\weight_{j^*}^{\ell-1} = 1$. Hence, $\max_j w^{\ell-1}_j = 1$ for all $1 \leq \ell \leq n$, so $\max_j w^\ell_j = 1$ for all $0 \leq \ell \leq n-1$.

We now prove the optimality by induction on the number of layers $(n+1)$, where $n \geq 1$. 
For $n=1$, the weight system $\weightsys$ with corresponding Lipschitz bound $K$ computed using \cref{alg:weights} satisfies $K w_j^0 = \sum_{i=1}^{m_0} \weight_i^1 \left\lvert (A_1)_{ij} \right\rvert $. Now let $\wsysalt$ be any weight system with the same output weights $\weight^n$, and let $\Kalt$ be the corresponding Lipschitz bound. Then we have \[\Kalt \weightalt_j^0 = \| A_1 \|^{0,1}_{\wsysalt} \weightalt_j^0 \geq \left(\frac1{\weightalt_j^0} \sum_{i=1}^{m_0} \weight_i^1 \left\lvert (A_1)_{ij} \right\rvert \right) \weightalt_j^0 = \sum\limits_{i=1}^{m_0}\weight_i^1 \left\lvert (A_1)_{ij} \right\rvert = K w_j^0\] by \cref{lem:matrixnorm}, showing the optimality of the weights $w$.

For the induction step, assume that we have a network with $(n+1)+1$ layers. Let $\weightsys$ denote the weight system computed using \cref{alg:weights}, and let $K$ be the corresponding Lipschitz bound. Let $K_{\geq2}$ denote the Lipschitz bound computed after all but one iteration of the outer loop, i.e.\ $K_{\geq2} = \prod_{\ell=2}^{n+1} \| A_\ell \|^{\ell-1, \ell}_{\weightsys}$. Let $K_1$ be the Lipschitz factor in the last iteration of the loop. Then $K = K_1 K_{\geq 2}$. Similarly, let $\weightsysalt$ be any other weight system with output weights $\weight^n$. Let $\Kalt_{\geq 2}$ denote the Lipschitz bound computed after all but one iteration of the outer loop, i.e. $\Kalt_{\geq2} = \prod_{\ell=2}^{n+1} \| A_\ell \|^{\ell-1, \ell}_{\wsysalt}$. Finally, let $\Kalt_1 = \| A_1 \|^{0,1}_{\wsysalt}$ be the Lipschitz factor in the last iteration of the loop. Then $\Kalt = \Kalt_1' \Kalt_{\geq 2}$. 

From \cref{alg:weights}, we note that $K_1 \weight_j^0 = \sum_{i=1}^{m_0} \weight_i^1 \left\lvert (A_1)_{ij} \right\rvert $, while \cref{lem:matrixnorm} implies that \[\Kalt_1 \weightalt_j^0 = \| A_1 \|^{0,1}_{\wsysalt} \weightalt_j^0 \geq \left(\frac1{\weightalt_j^0} \sum_{i=1}^{m_0} \weight^1_i \left\lvert (A_1)_{ij} \right\rvert \right) \weightalt_j^0 = \sum_{i=1}^{m_0} \weightalt^1_i \left\lvert (A_1)_{ij} \right\rvert.\]
By the induction hypothesis, we have $\Kalt_{\geq 2}\weightalt^1_j \geq K_{\geq 2} \weight^1_j$ for all $1 \leq j \leq m_1$. Combining yields
 \begin{align*}\Kalt\weightalt_j^0 &= \Kalt_1 \Kalt_{\geq 2} \weightalt_j^0 \geq \sum_{i=1}^{m_0} \Kalt_{\geq 2}\weightalt^1_i \left\lvert (A_1)_{ij} \right\rvert \geq \sum_{i=1}^{m_0} K_{\geq 2}\weight^1_i \left\lvert (A_1)_{ij} \right\rvert \\ &= K_1 K_{\geq 2} w_j^0 = K w_j^0,\end{align*}
which is the required inequality. 

This completes the induction and hence the proof.
\end{proof}

\subsection{Proof of \cref{thm:averagedactivation}}\label{proof:averagedactivation}

\begin{theorem}
Consider an $(n+1)$-layer neural network with $\tfrac12$-averaged activation operators $R_k$.  Let  $\weightsys$ be a corresponding weight system. Let \[S_n = \{(k_1, k_2, \ldots, k_r) \in \mathbb{N}_0^r \mid 0 \leq r \leq n-1, \, 1 \leq k_1 < k_2 < \dots < k_r \leq n-1\}.\] Then the Lipschitz constant $L_{\network}$ of the neural network operator $\network$ satisfies
\[
L_{\network} \leq \frac1{2^{n-1}} \sum_{(k_1, k_2, \ldots, k_r) \in S_n}\;\left[\prod_{\ell=1}^{r+1} \| A_{k_\ell} \ldots A_{k_{\ell-1}+1} \|^{k_{\ell-1}, k_\ell}_{\weightsys} \right],
\]
where we set $k_0 = 0$ and $k_{r+1} = n$.
\end{theorem}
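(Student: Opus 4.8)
The plan is to express the Lipschitz constant of $T$ through repeated use of the Lipschitz rules for compositions (\cref{prop:lipcomp}) and linear combinations (\cref{prop:lipadd}), exploiting the splitting $R_k = \tfrac12\mathrm{Id} + \tfrac12 Q_k$ afforded by the $\tfrac12$-averaged assumption, where $L_{Q_k}=1$. First I note that biases are immaterial: by \cref{prop:lipaffine} the Lipschitz constant of any affine map is the weighted operator norm of its matrix, independent of the offset, so I suppress the $b_k$ and track only the matrices (a merged composite affine map $A_{k_\ell}\cdots A_{k_{\ell-1}+1}\,x + \text{offset}$ has Lipschitz constant $\|A_{k_\ell}\cdots A_{k_{\ell-1}+1}\|^{k_{\ell-1},k_\ell}_{\weightsys}$). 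Since the outermost activation $R_n$ is itself $\tfrac12$-averaged and therefore $1$-Lipschitz, \cref{prop:lipcomp} gives $L_T \le L_{\hat T_n}$, where $\hat T_n = A_n\circ R_{n-1}\circ\cdots\circ R_1\circ A_1$ is the pre-activation operator of the last layer; this is exactly why $S_n$ ranges over subsets of $\{1,\dots,n-1\}$ and the prefactor is $\tfrac1{2^{n-1}}$ rather than $\tfrac1{2^n}$.

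I would then substitute $R_k = \tfrac12\mathrm{Id} + \tfrac12 Q_k$ for each $k \in \{1,\dots,n-1\}$ and expand. Because each linear layer distributes over the sum in the activation beneath it, iterating \cref{prop:lipadd} (with weights $\tfrac12$) bounds $L_{\hat T_n}$ by $\tfrac1{2^{n-1}}$ times a sum of $2^{n-1}$ contributions, one for each way of choosing, at every layer $k\in\{1,\dots,n-1\}$, either the identity part or the $Q_k$ part. Indexing a choice by the tuple $(k_1<\cdots<k_r)\in S_n$ of layers where the $Q$-part is taken, the corresponding operator is a composition in which every run of identity choices fuses the intervening linear layers into a single matrix product: with $k_0=0$ and $k_{r+1}=n$, the $\ell$-th fused block is $A_{k_\ell}\cdots A_{k_{\ell-1}+1}$, acting from the $\|\cdot\|^{k_{\ell-1}}_{\weightsys}$-space to the $\|\cdot\|^{k_\ell}_{\weightsys}$-space, while each retained $Q_{k_\ell}$ acts componentwise and is $1$-Lipschitz in $\|\cdot\|^{k_\ell}_{\weightsys}$ by \cref{prop:actfuns}. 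Chaining these factors through \cref{prop:lipcomp}, the intermediate norms telescope and the contribution of this tuple is at most $\prod_{\ell=1}^{r+1}\|A_{k_\ell}\cdots A_{k_{\ell-1}+1}\|^{k_{\ell-1},k_\ell}_{\weightsys}$, which is precisely the summand in the claim.

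To make this expansion and, above all, the fusing of linear layers rigorous, I would run an induction on the number of layers with a strengthened hypothesis: for an arbitrary terminal matrix $B$ and every $m$, bound $L_{B\circ\hat T_m}$ by the analogous sum in which the final block $A_m\cdots A_{k_r+1}$ is replaced by $B A_m\cdots A_{k_r+1}$. Peeling off the top layer, $B\circ\hat T_{m+1} = (BA_{m+1})\circ R_m\circ\hat T_m = \tfrac12 (BA_{m+1})\circ\hat T_m + \tfrac12 (BA_{m+1})\circ Q_m\circ\hat T_m$, and \cref{prop:lipadd} splits the bound into two families: the identity term applies the hypothesis to $B A_{m+1}$, absorbing $A_{m+1}$ into the terminal block (these are the tuples not containing $m$), while the $Q_m$ term applies the hypothesis to $\hat T_m$ with terminal matrix the identity and prepends the factor equal to the weighted operator norm of $BA_{m+1}$ via \cref{prop:lipcomp} (these are the tuples containing $m$), the halving matching the prefactor $\tfrac1{2^m}$. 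I expect the main obstacle to be exactly this bookkeeping — setting up the hypothesis so an arbitrary terminal factor can be merged, and keeping the weighted-norm indices aligned across the split — because retaining the intact products $A_{k_\ell}\cdots A_{k_{\ell-1}+1}$ (rather than bounding them via submultiplicativity by $\prod_j\|A_j\|$) is precisely what makes the resulting bound tighter than $\prod_{\ell=1}^n\|A_\ell\|^{\ell-1,\ell}_{\weightsys}$. Taking $m=n$, $B=\mathrm{Id}$, and the harmless factor $1$ from $R_n$ then yields the stated inequality.
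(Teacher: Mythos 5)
Your proof is correct and follows essentially the same route as the paper's: an induction on depth that peels off the top layer, splits $R_m = \tfrac12\,\mathrm{Id} + \tfrac12 Q_m$, absorbs $A_{m+1}$ into the terminal block for the identity part, bounds the $Q_m$ part by the operator norm of the top matrix times the bound for the shallower network, and recombines via the decomposition $S_{m+1} = S_m \cup \{\text{tuples with } k_r = m\}$. Your ``strengthened hypothesis with an arbitrary terminal matrix $B$'' is precisely what the paper accomplishes by applying the plain induction hypothesis to the modified network $\network_{\leq n}'$ whose final matrix is the product $A_{n+1}A_n$ (and bias $A_{n+1}b_n+b_{n+1}$), so the two arguments coincide up to this presentational choice.
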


\begin{proof}
We prove the statement by induction on the number of layers $(n+1)$, where $n \geq 1$. For the proof, we first omit the outermost activation operator. Note that the Lipschitz constant of each activation operator $R_k$ is 1 since it is a $\tfrac12$-averaged activation operator and using \cref{prop:actfuns}. In particular, \cref{prop:lipcomp} shows that a Lipschitz constant computed for the network without the outermost activation operator is also valid for the network where we do have the outermost activation operator.

For $n=1$, we have $\network(x_0) = A_1 x_0 + b_1$. Then $L_{\network} \leq \|A_1\|_{\weightsys}^{0,n}$ by \cref{prop:lipaffine}.

For the induction step, assume that we have a network with $(n+1)+1$ layers. Write \[\network(x_0) = A_{n+1} x_n + b_{n+1} = A_{n+1} R_n(\networkleqn(x_0)) + b_{n+1}\] where $\networkleqn$ represents the operator corresponding to the first $n+1$ layers of the network. Since $R_n$ is $\tfrac12$-averaged, we can write $R_n(x) = \tfrac12 x + \tfrac12 Q(x)$, where $Q$ has Lipschitz constant 1. Hence,
\begin{align*} \network(x_0) &= \tfrac12\left(A_{n+1} (A_n x_{n-1} + b_n) + b_{n+1}\right) + \tfrac12\left(A_{n+1} Q(\networkleqn(x_0)) + b_{n+1}\right) \\ &= \tfrac12\left( (A_{n+1} A_n) x_{n-1} + (A_{n+1} b_n + b_{n+1})\right) + \tfrac12\left(A_{n+1} Q(\networkleqn(x_0)) + b_{n+1}\right)
\\ &= \tfrac12 \networkleqnprime(x_0) + \tfrac12\left(A_{n+1} Q(\networkleqn(x_0)) + b_{n+1}\right),\end{align*}
where $\networkleqnprime$ represents the operator corresponding to a network with $n+1$ layers, where the first $n$ layers are as in the given network, but where the last layer is as the output layer of the given network, the matrix is $A_{n+1}A_n$ and the bias is $A_{n+1} b_n + b_{n+1}$. Define $A'_k = A_k$ for $k < n$ and $A'_n = A_{n+1}A_n$. The induction hypothesis informs us that 
\[
L_{\networkleqnprime} \leq \frac1{2^{n-1}} \sum_{(k_1, k_2, \ldots, k_r) \in S_n}\;\left[\prod_{\ell=1}^{r+1} \| A'_{k_\ell} \ldots A'_{k_{\ell-1}+1} \|_{\weightsys}^{k_{\ell-1}, k_\ell} \right],
\]
where in this case $k_{r+1}= n$. If we instead write $k_{r+1} = n+1$, we can also write 
\[
L_{\networkleqnprime} \leq \frac1{2^{n-1}} \sum_{(k_1, k_2, \ldots, k_r) \in S_n}\;\left[\prod_{\ell=1}^{r+1} \| A_{k_\ell} \ldots A_{k_{\ell-1}+1} \|_{\weightsys}^{k_{\ell-1}, k_\ell} \right].
\]
The induction hypothesis also informs us that 
\[
L_{\networkleqn} \leq \frac1{2^{n-1}} \sum_{(k_1, k_2, \ldots, k_r) \in S_n}\;\left[\prod_{\ell=1}^{r+1} \| A_{k_\ell} \ldots A_{k_{\ell-1}+1} \|_{\weightsys}^{k_{\ell-1}, k_\ell} \right].
\]
where $k_{r+1}= n$. Write $\network'(x) = A_{n+1} Q(\networkleqn(x)) + b_{n+1}$.
Since $Q$ has Lipschitz constant 1 and $x \mapsto A_{n+1} x + b_{n+1}$ has Lipschitz constant $\| A_{n+1} \|_{\weightsys}^{n, n+1}$ by \cref{prop:lipaffine}, we can bound the Lipschitz constant of $\network'$ as $L_{\network'} \leq \| A_{n+1} \|_{\weightsys}^{n, n+1}L_{\networkleqn}$ by \cref{prop:lipcomp}. We can also write this as 
\[
L_{\network'}  \leq \frac1{2^{n-1}} \sum_{(k_1, k_2, \ldots, k_r) \in S_{n+1}: k_r = n}\;\left[\prod_{\ell=1}^{r+1} \| A_{k_\ell} \ldots A_{k_{\ell-1}+1} \|_{\weightsys}^{k_{\ell-1}, k_\ell} \right].
\]
where $k_{r+1}= n+1$. 

Note that 
\begin{align*}
    S_{n+1} &= \{(k_1, k_2, \ldots, k_r) \mid 0 \leq r \leq n, \, 1 \leq k_1 < k_2 < \dots < k_r \leq n\} \\
    &= S_n \cup \{(k_1, k_2, \ldots, k_r) \mid 0 \leq r \leq n, \, 1 \leq k_1 < k_2 < \dots < k_r \leq n, k_r = n\} \\
    &= S_n \cup \{(k_1, k_2, \ldots, k_r) \in S_{n+1} \mid k_r = n\}.
\end{align*}
Using \cref{prop:lipadd}, we find that
\begin{align*}
    L_{\network}  &\leq \tfrac12 L_{\networkleqnprime}  + \tfrac12 L_{\network'} \\ &\leq \frac1{2^{(n+1)-1}}\Bigg( \sum_{(k_1, k_2, \ldots, k_r) \in S_n}\;\left[\prod_{\ell=1}^{r+1} \| A_{k_\ell} \ldots A_{k_{\ell-1}+1} \|_{\weightsys}^{k_{\ell-1}, k_\ell} \right] \\ &\qquad\qquad \qquad  \qquad + \sum_{(k_1, k_2, \ldots, k_r) \in S_{n+1}: k_r = n}\;\left[\prod_{\ell=1}^{r+1} \| A_{k_\ell} \ldots A_{k_{\ell-1}+1} \|_{\weightsys}^{k_{\ell-1}, k_\ell} \right] \Bigg) \\ 
    &= \frac1{2^{(n+1)-1}}\sum_{(k_1, k_2, \ldots, k_r) \in S_{n+1}}\;\left[\prod_{\ell=1}^{r+1} \| A_{k_\ell} \ldots A_{k_{\ell-1}+1} \|_{\weightsys}^{k_{\ell-1}, k_\ell} \right],
\end{align*}
which completes the induction step and hence the proof. 
\end{proof}

\section{Experiments}
\label{app:Experiments}

In this appendix, we provide the complete dynamics and reach-avoid specifications for the benchmarks used for the empirical evaluation in \cref{sec:Empirical}.  In addition, we provide an overview of the hyperparameters used. 

For each of the models, we also provide the loss functions used by the RL algorithms to train the initial policy. To further test the robustness of our method against different input policies, we used different loss functions for PPO and the other RL algorithms from \texttt{Stable-Baselines3}.

\subsection{Model specifications}
\label{app:Models}

For simplicity of the implementation, we use a triangular noise distribution for each model. The triangular distribution on $[-1, 1]$ is the continuous distribution with probability density function
\[
\text{Triangular}(x) = \begin{cases} 1-|x| &\text{if } |x|< 1 \\ 0 &\text{otherwise}.\end{cases}
\]

\subsubsection{Linear System.}
\label{app:linsys}
In Linear System (\texttt{linear-sys}), we have $\X = [-1.5, 1.5]^2 \subseteq \R^2$ and $\U = [-1, 1] \subseteq \R$ and $\mathcal{N} = [-1, 1]^2 \subseteq \R^2$. The system dynamics function $f \colon \X \times \U \times \mathcal{N} \to \X$ is given by 
\[ f(\state, \control, \noise) = A \state + B \control + W \noise,\]
where $A = \begin{pmatrix} 1 & 0.045 \\ 0 & 0.9 \end{pmatrix}$ and $B = \begin{pmatrix} 0.45 \\ 0.5\end{pmatrix}$ and $W = \begin{pmatrix} 0.01 & 0 \\ 0 & 0.005 \end{pmatrix}$.
The noise $\noise$ has two components which are independent and have the Triangular distribution.

The initial states are $\X_0 = ([-0.25, -0.2] \cup [0.2, 0.25]) \times [-0.1, 0.1]$, the target states are $\xTarget = [-0.2, 0.2]^2$, and the unsafe states are $\xUnsafe = ([-1.5, -1.4] \times [-1.5, 0]) \cup ([1.4, 1.5] \times [0, 1.5])$.
The resulting reach-avoid task is shown in \cref{fig:tasks}.

\begin{figure}[t!]
\centering
\includegraphics[height=4.2cm]{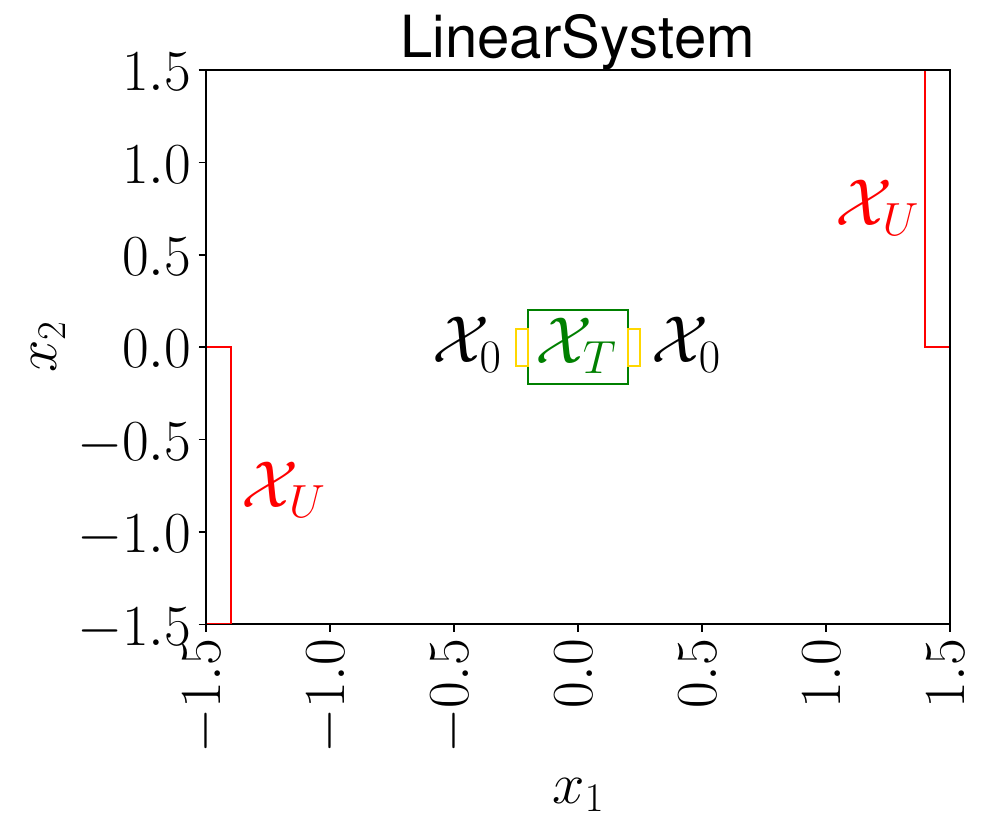}
\hspace{1cm}
\includegraphics[height=4.2cm]{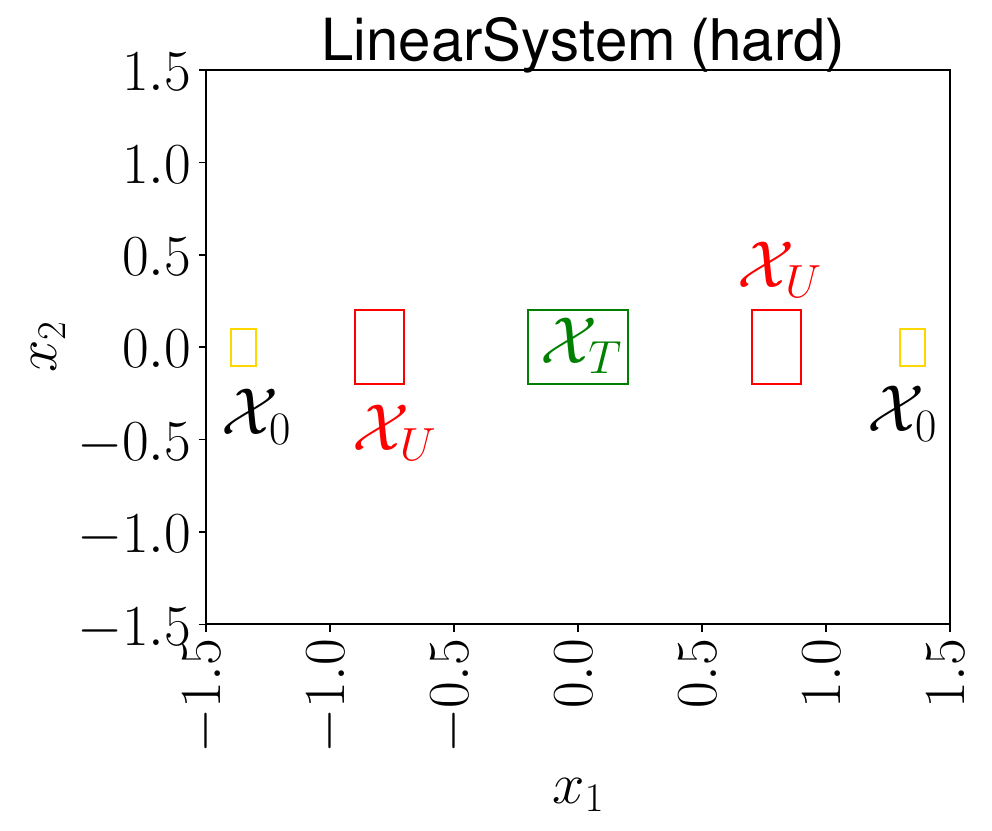}
\includegraphics[height=4.2cm]{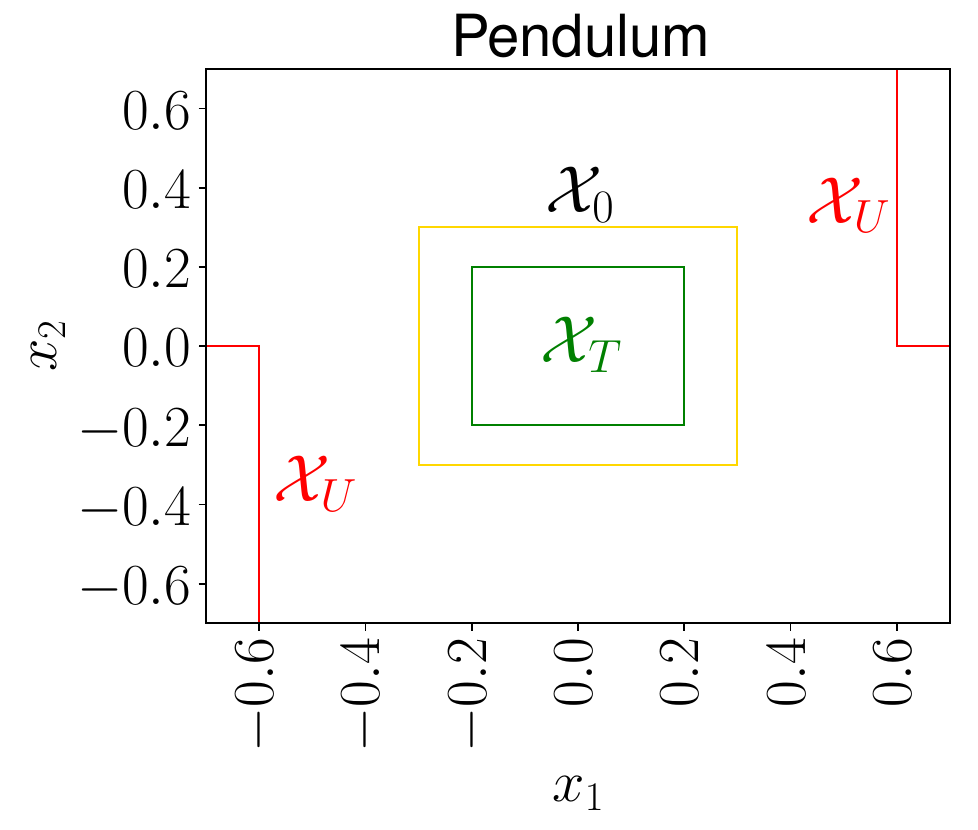}
\hspace{1cm}
\includegraphics[height=4.2cm]{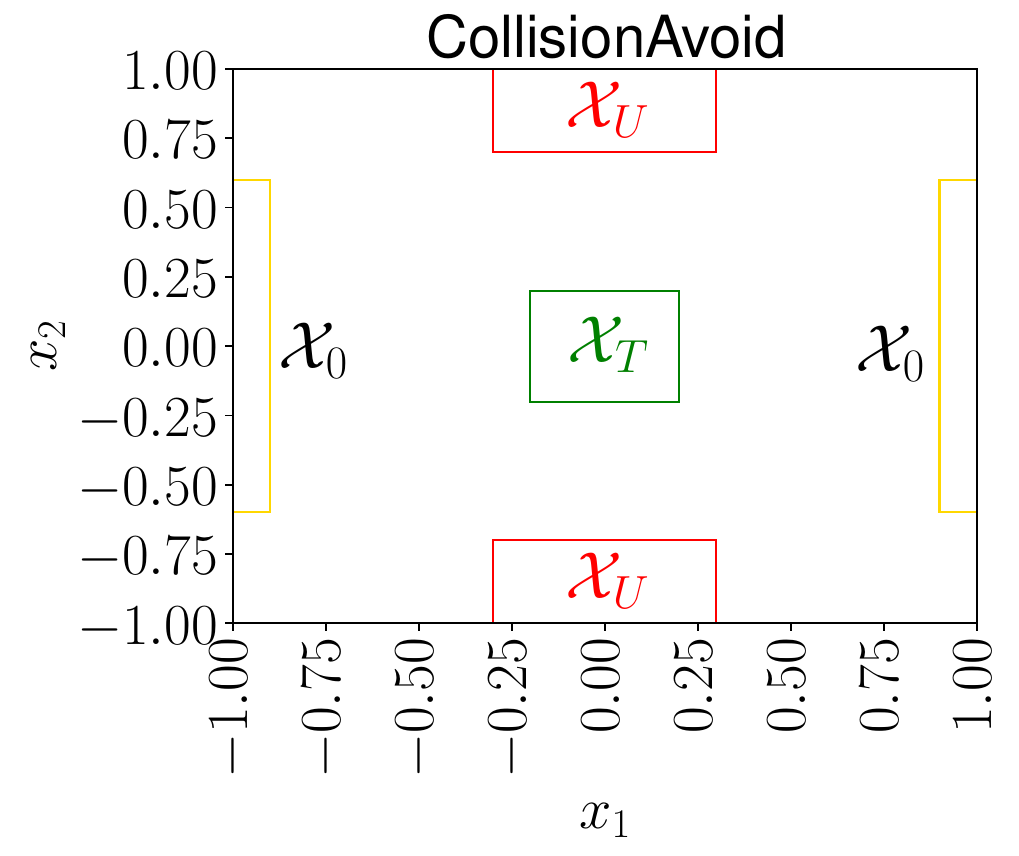}%
\caption{Reach-avoid specifications for the 2D benchmarks used for the experiments.}
\label{fig:tasks}
\end{figure}

The Lipschitz constants w.r.t.\ $\state$ and $\control$ are $L_{f, \state} = 1$ and $L_{f, \control} = 0.95$.

For all our main experiments, we use the loss function $\| \state\|_2 - 1 = \sqrt{\state_1^2 +\state_2^2} - 1$ to train input policies, which is a sensible loss function since the goal states are around the origin and the unsafe states are the farthest away from the origin. 
When training input policies with the \texttt{Stable-Baselines3} algorithms, we instead assign a loss of $5$ when entering the unsafe region and a loss of $-5$ when entering the goal region, and use the loss function $\| \state\|_2 - 1 = \sqrt{\state_1^2 +\state_2^2} - 1$ otherwise. 

\subsubsection{Pendulum.}
In Pendulum (\texttt{pendulum}), we have $\X = [-0.7, 0.7]^2 \subseteq \R^2$ and $\U = [-1, 1] \subseteq \R^2$ and $\mathcal{N} = [-1, 1]^2 \subseteq \R^2$. The system dynamics function $f \colon \X \times \U \times \mathcal{N} \to \X$ is given by 
\begin{align*} f&(\state, \control, \noise) = \begin{pmatrix} \state_1 + 0.01 \noise_1 \\ 0 \end{pmatrix} \\ &+ \begin{pmatrix} \delta \\ 1 \end{pmatrix} \text{clip}\left((1-b)\state_2 + \delta\left(\frac{-1.5 G \sin(\state_1 + \pi)}{2l} + \frac{6}{ml^2} \control\right) + 0.02\noise_2, -5, 5\right),\end{align*}
where $\delta = 0.05$, $G = 10$, $m = 0.15$, $l = 0.5$ and $b = 0.1$. 
The $\text{clip}$ function is defined as $\text{clip}(x,a,b) = \min(\max(x,a),b)$.
The noise $\noise$ has two components $\noise_1, \noise_2$ which are independent and have the Triangular distribution.

The initial states are $\X_0 = [-0.3, 0.3]^2$, the target states are $\xTarget = [-0.2, 0.2]^2$, and the unsafe states are $\xUnsafe = ([-0.7, -0.6] \times [-0.7, 0]) \cup ([0.6, 0.7] \times [0, 0.7])$.
The resulting reach-avoid task is shown in \cref{fig:tasks}.

The Lipschitz constants w.r.t.\ $\state$ and $\control$ are $L_{f, \state} = 1.7875$ and $L_{f, \control} = 8.4$.

For all our main experiments, we use the loss function $\state_1^2 + 0.1\state_2^2$ to train input policies. When training input policies with the \texttt{Stable-Baselines3} algorithms, we instead assign a loss of $5$ when entering the unsafe region and a loss of $-5$ when entering the goal region, and use the loss function $\state_1^2 + 0.1\state_2^2$ otherwise. 

\subsubsection{Collision Avoidance.}
In Collision Avoidance (\texttt{collision-avoid}), we have $\X = [-1, 1]^2 \subseteq \R^2$, $\U = [-1, 1]^2 \subseteq \R^2$ and $\mathcal{N} = [-1, 1]^2 \subseteq \R^2$. The system dynamics $f \colon \X \times \U \times \mathcal{N} \to \X$ is given by 
\begin{align*} f(\state, \control, \noise) &= \state + 0.2 \left(d_2 \left( d_1 \control + (1-d_1) \begin{pmatrix}0 \\ 1\end{pmatrix} \right) + (1-d_2)\begin{pmatrix}0 \\ -1\end{pmatrix}  \right) + 0.05 \noise,\end{align*}
where $d_1 = \min\left\{\dfrac{10}{3} \left\| \state - \begin{pmatrix}0 \\ 1\end{pmatrix} \right\|_2, 1\right\}$ and  $d_2 = \min\left\{\dfrac{10}{3} \left\| \state - \begin{pmatrix}0 \\ -1\end{pmatrix} \right\|_2, 1\right\}$.

The noise $\noise$ has two components $\noise_1, \noise_2$ which are independent and have the Triangular distribution.

The initial states are $\X_0 = ([-1, -0.9] \cup [0.9, 1]) \times [-0.6, 0.6]$, the target states are $\xTarget = [-0.2, 0.2]^2$, and the unsafe states are $\xUnsafe = [-0.3, 0.3] \cup ([-1, -0.7] \times [0.7, 1])$.
The resulting reach-avoid task is shown in \cref{fig:tasks}.

The Lipschitz constants w.r.t.\ $\state$ and $\control$ are $L_{f, \state} = 3$ and $L_{f, \control} = 0.2$.

We use the same loss functions for training input policies as for Linear System. 

\subsubsection{Linear System (hard layout).}
The hard layout of Linear System (\texttt{linear-sys}) has the same dynamics as the default version defined in \cref{app:linsys}, but we modify the reach-avoid specification.
Specifically, we use the reach-avoid specification with $\X_0 = ([-1.4, -1.3] \cup [1.3, 1.4]) \times [-0.1, 0.1]$ and $\xUnsafe = ([-0.9, -0.7] \cup [0.7, 0.9]) \times [-0.2, 0.2]$.
The resulting reach-avoid task is shown in \cref{fig:tasks}.

\begin{figure}[t!]
\centering
\includegraphics[height=4.2cm]{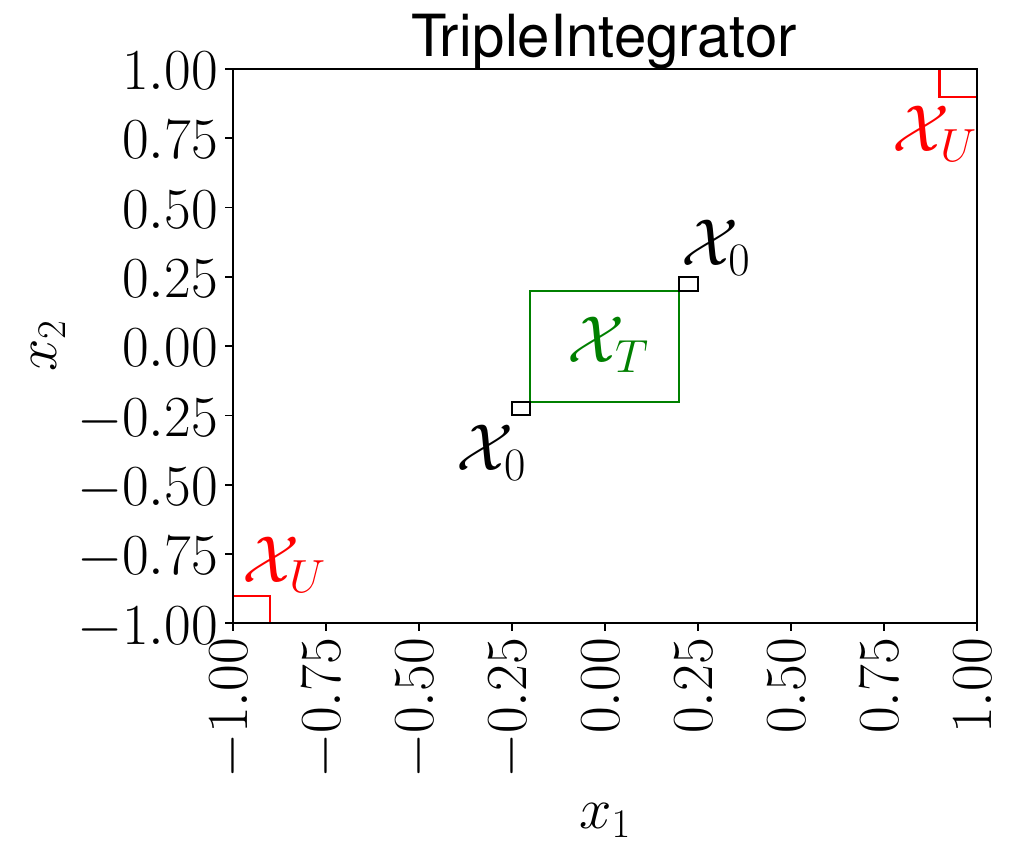}
\hspace{1cm}
\includegraphics[height=4.2cm]{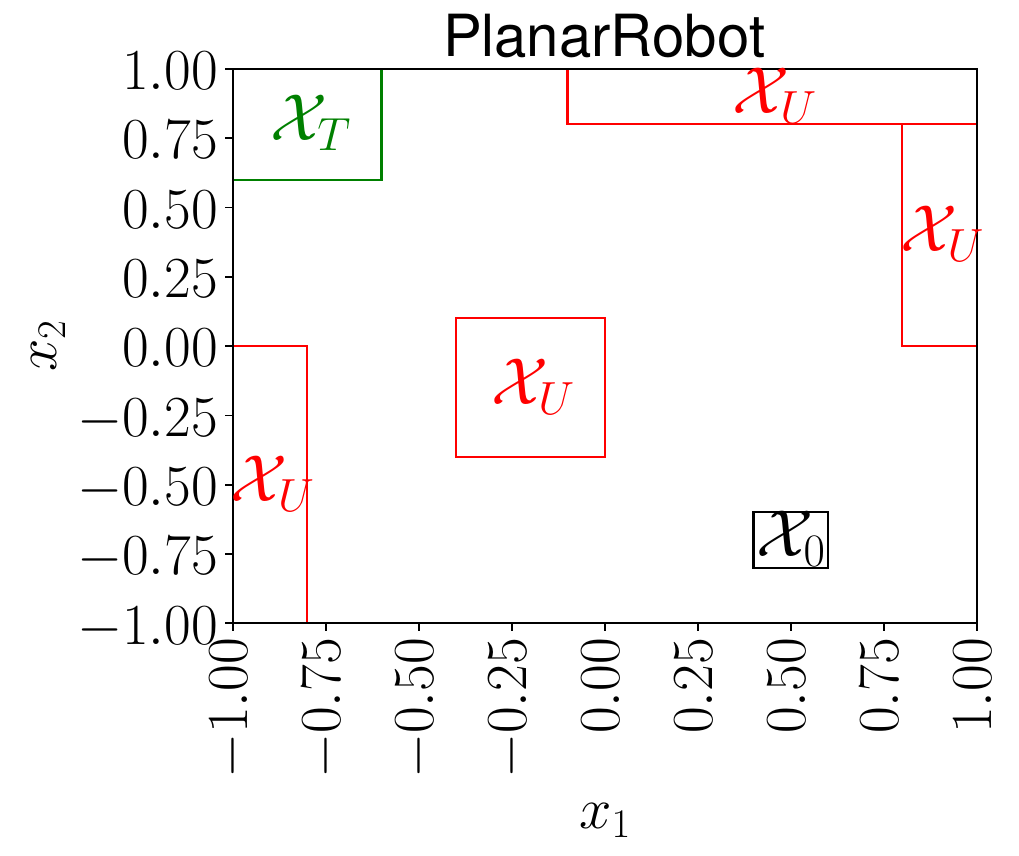}
\hspace{1cm}
\includegraphics[height=4.2cm]{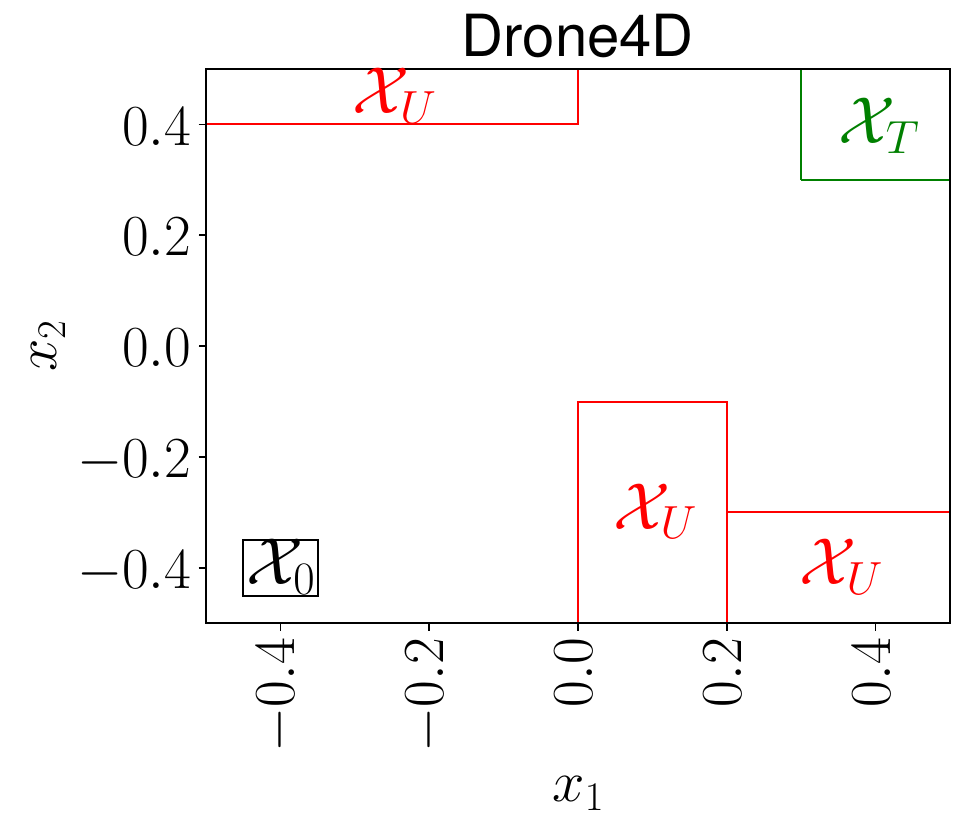}
\caption{Reach-avoid specifications for the 3D and 4D benchmarks used.}
\label{fig:challenging}
\end{figure}

\subsubsection{Triple Integrator.}
In Triple Integrator (\texttt{triple-integrator}), we have a 3D state space $\X = [-1, 1]^3 \subseteq \R^3$ and $\U = [-1, 1] \subseteq \R$ and $\mathcal{N} = [-1, 1]^3 \subseteq \R^3$. The system dynamics function $f \colon \X \times \U \times \mathcal{N} \to \X$ is given by 
\[ f(\state, \control, \noise) = A \state + B \control + W \noise,\]
where $A = \begin{pmatrix} 1 & 0.045 & 0 \\ 0 & 1 & 0.045 \\ 0 & 0 & 0.9  \end{pmatrix}$ and $B = \begin{pmatrix} 0.35 \\ 0.45 \\ 0.5 \end{pmatrix}$ and $W = \begin{pmatrix} 0.01 & 0 & 0 \\ 0 & 0.01 & 0 \\ 0 & 0 & 0.005 \end{pmatrix}$.

The noise $\noise$ has three components  which are independent and have the Triangular distribution.

The initial states are $\X_0 = ([-0.25, -0.2]^2 \cup [0.2, 0.25]^2) \times [-0.1, 0.1]$, the target states are $\xTarget = [-0.2, 0.2]^3$, and the unsafe states are $\xUnsafe = ([-1, -0.9]^2 \times [-1, 0]) \cup ([0.9, 1]^2 \times [0, 1])$. We show the 2D slice $\state_2 = 0$ of the reach-avoid specification in \cref{fig:challenging}. 

The Lipschitz constants w.r.t.\ $\state$ and $\control$ are is $L_{f, \state} = 1.045$ and $L_{f, \control} = 1.3$.

For training input policies, we use the loss function $\| \state\|_2^2 - 1 = \state_1^2 +\state_2^2 + \state_3^2 - 1$.

\subsubsection{Planar Robot.}
In Planar Robot (\texttt{planar-robot}), we have a 3D state space $\X = [-1, 1]^3 \subseteq \R^3$ and $\U = [-1, 1]^2 \subseteq \R^2$ and $\mathcal{N} = [-1, 1]^2 \subseteq \R^2$. The system dynamics function $f \colon \X \times \U \times \mathcal{N} \to \X$ is given by 
\[ f(\state, \control, \noise) = \state + \delta \begin{pmatrix} (\state_2+2\delta \control_1) \cos(\pi \control_2) \\  (\state_2+2\delta \control_1) \sin(\pi \control_2) \\ 2 \control_1 \end{pmatrix} + 0.01\begin{pmatrix} \omega_1 \\ \omega_2 \\ 0 \end{pmatrix},\]
where $\delta = 0.2$. The noise $\noise$ has two components which are independent and have the Triangular distribution.

The initial states are $\X_0 = [0.4, 0.6] \times [-0.8, -0.6] \times [-0.1, 0.1]$, the target states are $\xTarget = [-1, -0.6] \times [0.6, 1] \times [-1, 1]$, and the unsafe states are \begin{align*} \xUnsafe &= ([-1, -0.8] \times [-1, 0] \times [-1, 1]) \\ & \qquad \cup ([-0.1, 1] \times [0.8, 1] \times [-1, 1]) \\ & \qquad \cup ([0.8, 1] \times [0, 0.8] \times [-1, 1]) \\ &\qquad \cup ([-0.4, 0] \times [-0.4, 0.1] \times [-1, 1]). \end{align*} We show the 2D slice $\state_2 = 0$ of the reach-avoid specification in \cref{fig:challenging}. 

The Lipschitz constants w.r.t.\ $\state$ and $\control$ are is $L_{f, \state} = 1.4$ and $L_{f, \control} = 0.4\pi$.

For training input policies, we use the loss function 
\[10\sqrt{(\state_1 + 0.8)^2+(\state_2 - 0.8)^2} - 10 R.\] 
where $R=1$ if the goal is reached, and $R=0$ otherwise. This loss function measures the distance to the center of the target set (in the first two dimensions).

\subsubsection{Drone4D.}
In Drone4D (\texttt{drone4D}), we have a 4D state space $\X = [-0.5, 0.5]^4 \subseteq \R^4$ and $\U = [-0.5, 0.5]^2 \subseteq \R^2$ and $\mathcal{N} = [-1, 1]^2 \subseteq \R^2$. The system dynamics function $f \colon \X \times \U \times \mathcal{N} \to \X$ is given by 
\[ f(\state, \control, \noise) = \state + \delta \begin{pmatrix} \state_2 + \tfrac12 \delta \control_1 \\ -0.02 \state_2^3 + \control_1 \\ \state_4 + \tfrac12 \delta \control_2 \\ -0.01 \state_4^3 + \control_2 -0.1 \sin(\pi \state_1) \end{pmatrix} + 0.01\begin{pmatrix} 0 \\ \omega_1 \\ 0 \\ \omega_2 \end{pmatrix},\]
where $\delta = 0.5$. The noise $\noise$ has two components which are independent and have the Triangular distribution.

The initial states are $\X_0 = [-0.45, -0.35] \times [-0.1, 0.1] \times [-0.45, -0.35] \times [0.25, 0.35]$, the target states are $\xTarget = [0.3, 0.5] \times [-0.5, 0.5] \times [0.3, 0.5] \times [-0.5, 0.5]$, and the unsafe states are \begin{align*}\xUnsafe &= [0.2, 0.5] \times [-0.5, 0.5] \times [-0.5, 0.3] \times [-0.5, 0.5]\\ &\qquad \cup [0, 0.2] \times [-0.5, 0.5] \times [-0.5, 0.1] \times [-0.5, 0.5] \\ &\qquad \cup [-0.5, 0] \times [-0.5, 0.5] \times [0.4, 0.3] \times [-0.5, 0.5].\end{align*} We show the 2D slice $\state_2 = \state_4 = 0$ of the reach-avoid specification in \cref{fig:challenging}. 

The Lipschitz constants w.r.t.\ $\state$ and $\control$ are is $L_{f, \state} = 1.5$ and $L_{f, \control} = 0.5$.

For training input policies, we use the loss function 
\[\sqrt{(\state_1 + 0.4)^2+(\state_3 + 0.4)^2} - 10 R.\] 
where $R=1$ if the goal is reached, $R=-1$ if an unsafe state is entered, and 0 otherwise. This loss function measures the distance to the center of the target set (in the two dimensions corresponding to the drone position).

\subsection{Hyperparameters}
\label{app:Hyperparameters}

In this appendix, we give an overview of the hyperparameters of our algorithm. \Cref{tab:hyperparam} reports the hyperparameters. We now discuss the meaning of these parameters, the reason why we have chosen them, and the extent to which we have done hyperparameter tuning. 

\subsubsection{Hyperparameters for all benchmarks.}
We first discuss the hyperparameters common for all benchmarks, as mentioned in \cref{tab:hyperparam}. These parameters are at least used in all 2D benchmarks; for the more challenging benchmarks there are some exceptions mentioned later.

\paragraph{Samples and buffers.}
Recall from \cref{sec:Implementation} that the points used in the loss function of the learner are divided into randomly sampled points and counterexamples.
For efficiency reasons, our implementation keeps track of a buffer for both types of points and samples from these buffers when needed.
We use a buffer of size $90\,000$ for the random points and a buffer of size $30\,000$ for the counterexamples.
The number of epochs is the number of full passes made over these buffers in a single learner iteration. 
Within each epoch, the data is divided into batches of $4\,096$ points, and the counterexample fraction (we use $0.25$) gives the fraction of counterexamples in each batch.\footnote{Strictly speaking, each batch consists of 4096 points, $75\%$ of which is sampled from the buffer with random points, and $25\%$ of which is sampled from the counterexample buffer. Thus, it may happen that the same points are sampled more than once in an epoch, but given the size of the batches, this behavior only has a limited effect.} 
After each verifier iteration, a part of the counterexamples in the counterexample buffer is replaced by new counterexamples. 
The counterexample refresh fraction (we use $0.5$) gives the fraction of the counterexample buffer that is replaced with new counterexamples.
We did not tune these hyperparameters for our experiments.

\paragraph{Optimizer and learning rate.}
We take the optimizer (we use Adam~\cite{DBLP:journals/corr/KingmaB14}), learning rates for $V$ ($5 \cdot 10^{-4}$) and $\pi$ ($5 \cdot 10^{-5}$) from \cite{DBLP:conf/aaai/ZikelicLHC23} and have not performed any tuning on them. We also take the number of samples $N$ used in the expected decrease terms in the loss function of the learner (see \cref{sec:Implementation}) from \cite{DBLP:conf/aaai/ZikelicLHC23}.

\begin{table}[t]
\centering
\begin{tabular}{l|l} \toprule
number of random points & $90\,000$ \\
number of counterexamples & $30\,000$ \\
epochs & $25$ \\
batch size & $4096$ \\
counterexample fraction & $0.25$ \\
counterexample refresh fraction & $0.5$ \\
optimizer & Adam~\cite{DBLP:journals/corr/KingmaB14} \\
learning rate $V$ & $5 \cdot 10^{-4}$ \\
learning rate $\pi$ & $5 \cdot 10^{-5}$ \\
$N$ loss learner & 16 \\
pretraining steps (PPO) & $10^5$ \\
init. mesh verification grid & $0.01$ (for 2D), $0.04$ (for 3D), $0.06$ (for 4D) \\
max refine factor $C$ & $10$ (for 2D), $4$ (for 3D), $2$ (for 4D) \\
loss function: $\alpha$ & $10$ \\
loss function: $\varepsilon$ & $0.1$ \\
loss function: $\varepsilon'$ & $0.01$ \\
goal $L_\pi$ pretrain & $10$ \\
\bottomrule
\end{tabular}
\medskip
\caption{Hyperparameters common for all benchmarks, or only dependent on the dimension of the state space.}\label{tab:hyperparam}
\end{table}

\paragraph{PPO training.}
The number of steps for training input policies with PPO is $10^5$, except for \texttt{planar-robot} and \texttt{drone4D}, which use $10^7$ steps and $10^6$ steps, respectively.
We observed that this number of steps leads to adequate convergence of the policies. We did not perform  sophisticated tuning of the number of training steps.

\paragraph{Verifier grid and maximum refinement.}
We use $\tau_{\statedisc} = 0.01$ as initial mesh for each point $\statedisc$ in the discretization used in the verifier. We set the maximum refinement factor $C$ to 10, which seems to yield a good balance between (a) not having to refine too often and (b) not wasting too much time when hard violations are found after the first local refinement. 

\paragraph{Loss function.} Besides the loss mesh $\tau$, there are three hyperparameters in the loss function: the expected decrease multiplier $\alpha$, and the two terms $\varepsilon, \varepsilon'$. We set $\alpha$ to 10 (except one experiment for the robustness to input policies), since we observed that in general this gives a good balance between the three loss terms. The terms $\varepsilon, \varepsilon'$ ensure that the conditions amply hold for the points in the discretization if the loss is 0. We set a higher value ($\varepsilon = 0.1$) for the initial and unsafe conditions as these conditions are easier to satisfy and setting $\varepsilon$ to a higher value makes the algorithm more robust. On the other hand, the expected decrease condition is harder to satisfy, so we set a lower value $\varepsilon' = 0.01$. When setting this value too high, the loss function might introduce a larger margin on points that are no longer violations in favor of fixing violations. 

In addition,~\cite{DBLP:conf/aaai/ZikelicLHC23} also use losses penalizing the Lipschitz constant of the policy and the RASM candidate, and an auxiliary loss term that attempts to ensure that the global minimum of the candidate RASM is in the target set. We do not use these additional terms. The aim of the Lipschitz loss is to reduce the Lipschitz constant of (mainly) the certificate in fewer learner-verifier iterations. However, our method already substantially reduces the Lipschitz constant, so this additional loss has no substantial effect in combination with our method. 
In addition, the Lipschitz loss needs the desired Lipschitz constant as a hyperparameter, and setting it too small can actually hamper convergence. 
For the auxiliary loss, we observed that for high probability bounds this loss even worsens performance. 

\paragraph{Lipschitz constant pretraining.} The loss function used in PPO pretraining includes
an extra term that gives a loss if the Lipschitz constant of the policy is larger than
a prespecified value. We set this value to 10. For all learner-verifiers, we use the product upper bound from~\cite{DBLP:journals/corr/SzegedyZSBEGF13} to compute the Lipschitz bound, to ensure that all learner-verifiers use the same initial policy.

\subsubsection{Benchmark-specific hyperparameters.}
We now discuss the benchmark-specific hyperparameters, which can be divided into two categories.

\paragraph{Loss mesh.}
The main benchmark-specific parameter is the \emph{loss mesh}. 
Recall from \cref{sec:Implementation} that this loss mesh is contained in the loss function of the learner and is different from the mesh used by the verifier.
Intuitively, when the loss function is trained to zero for a particular loss mesh $\tau$, then a discretization of this same mesh $\tau$ in the verifier should suffice to successfully verify the certificate $V$.
In practice, the loss function is not trained to zero exactly, so we expect that the loss mesh should be higher than the (worst-case) mesh needed by the verifier.

We now explain the tradeoff for choosing a good loss mesh. A high loss mesh makes it harder to train, but easier to verify the certificate. Hence, we expect the learner-verifier to use more iterations, but the final verification to be fast. On the other hand, a lower loss mesh might lead to 0 loss fast, but lead to longer verification times since lower meshes are needed in the refinement. 

This tradeoff makes it difficult to select a good mesh. Therefore, we start with some initial loss mesh $\tau_{\text{init}}$ and decrease it by a factor 0.8 each iteration. In this way, it will become easier to learn a correct certificate as the learner-verifier iterations progress, at the cost of expecting higher verification times. 
For most 2D benchmarks we set  $\tau_{\text{init}} = 0.001$, but for \texttt{collision-avoid} we set $\tau_{\text{init}} = 0.01$.

\paragraph{Noise partition.}
Finally, we take the number of noise partition cells of $12^2 = 144$ used for computing upper bounds on expectations from~\cite{DBLP:conf/aaai/ZikelicLHC23}. For \texttt{collision-avoid}, which has relatively more noise,  we instead use $24^2 = 576$ cells. 

\subsubsection{Hyperparameters for $>2$D benchmarks.}
For the 3D and 4D benchmarks (\texttt{triple-integrator}, \texttt{planar-robot}, and \texttt{drone4D}), we have selected a number of hyperparameters differently. For \texttt{triple-integrator}, we increase the number of noise partition cells to $6^3 = 216$. We set the initial loss mesh $\tau_{\text{init}}$ to 0.005 for the 3D benchmarks and to 0.01 for \texttt{drone4D}, and refine with a factor 0.9 rather than a factor 0.8.

\subsection{Loss function RASMs} \label{app:lossrasm}

When the learner is learning a (discrete) RASM rather than a (discrete) logRASM, the learner uses a different loss function where each term instead models a differentiable version of a RASM condition:
\begin{align*}
    \loss_\Linit(V) &= \max_{\state \in P_0}\left\{\max\{V(\state) -1 + \varepsilon, 0\}\right\},
    \\ 
    \loss_\Lunsafe(V) &= (1-\rho) \max_{\state \in P_\Lunsafe}\left\{\max\big\{\tfrac{1}{1-\rho} - V(\state) + \varepsilon, 0\big\}\right\},
    \\
    \loss_\Lexp(\pi, V) &= \frac{1}{|P_\Lexp|} \!\sum_{\state \in P_\Lexp} \!\max\!\bigg\{
         \!\bigg[\frac1N\!\sum_{\noise_i \sim d} V(f(\state, \policy(\state), \noise_i))\bigg] {-} V(\state) {+} \tau K' {+} \varepsilon'
    , 0  \bigg\}.
\end{align*}
We use the same hyperparameters in this loss function.

\setlength{\tabcolsep}{2.2pt}
\begin{table}[t]
\caption{Lipschitz constants and timings for LipBaB in comparison to our method, averaged over 10 seeds (except any seeds that failed), for the policy network ($\policy$) and the certificate network ($V$). For LipBaB, we used a timeout of 600 seconds.}\label{lipbab}
\centering
\begin{tabular}{l|ll|rrrr|rr}
\toprule
model & $\rho$ & network & $L_{\text{Ours}}$ & $L_{\text{LipBaB}, 1}$ &  $L_{\text{LipBaB}}$ & lower & $t_{\text{better}}$ & $t_{\text{exact}}$ \\ \midrule
\multirow{2}{*}{\texttt{linear-sys}} & 0.8 & $\pi$ & 2.56 & 4.18 & 1.28 & 0.71 & 162.9 & \textgreater 600 \\
 & 0.999999 & $\pi$ & 2.56 & 4.18 & 1.29 & 0.71 & 162.6 & \textgreater 600 \\ \midrule
\multirow{2}{*}{\begin{tabular}{@{}l@{}}\texttt{linear-sys} \\ (hard layout)\end{tabular}} & 0.8 & $\pi$ & 2.52 & 4.15 & 1.27 & 1.05 & 159.2 & \textgreater 600 \\
 & 0.999999 & $\pi$ & 1.67 & 2.53 & 0.88 & 0.88 & 101.1 & 435 \\ \midrule
\multirow{2}{*}{\texttt{pendulum}} & 0.8 & $\pi$ & 0.70 & 0.92 & 0.43 & 0.39 & 75.1 & \textgreater 600 \\
 & 0.999999 & $\pi$ & 0.47 & 0.54 & 0.31 & 0.31 & 35.1 & \textgreater 588 \\ \midrule
\multirow{2}{*}{\texttt{collision-avoid}} & 0.8 & $\pi$ & 6.06 & 8.70 & 3.66 & 1.89 & 106.0 & \textgreater 600 \\
 & 0.999999 & $\pi$ & 5.25 & 6.92 & 3.62 & 3.17 & 90.4 & \textgreater 600 \\
\midrule
\multirow{2}{*}{\texttt{linear-sys}} & 0.8 & $V$ & 39.14 & 94.86 & 56.91 & 5.63 & \textgreater 600.0 & \textgreater 600 \\ 
 & 0.999999 & $V$ & 83.74 & 176.8 & 106.15 & 15.23 & \textgreater 600.0 & \textgreater 600 \\ \midrule
\multirow{2}{*}{\begin{tabular}{@{}l@{}}\texttt{linear-sys} \\ (hard layout)\end{tabular}} & 0.8 & $V$ & 93.78 & 148.05 & 85.08 & 21.39 & 471.2 & \textgreater 600 \\
 & 0.999999 & $V$ & 545.23 & 776.56 & 420.58 & 91.70 & 281.3 & \textgreater 600 \\ \midrule
\multirow{2}{*}{\texttt{pendulum}} & 0.8 & $V$ & 10.95 & 13.90 & 9.26 & 6.91 & 240.0 & \textgreater 600 \\
 & 0.999999 & $V$ & 120.06 & 126.73 & 81.23 & 66.78 & 15.9 & \textgreater 600 \\ \midrule
\multirow{2}{*}{\texttt{collision-avoid}} & 0.8 & $V$ & 8.05 & 9.89 & 7.43 & 6.19 & 367.0 & \textgreater 600 \\
 & 0.999999 & $V$ & 65.4 & 76.61 & 52.63 & 40.65 & 139.2 & \textgreater 600 \\
\bottomrule
\end{tabular}
\end{table}

\section{Comparison against LipBaB}\label{app:lipbab}
We compare our method for computing Lipschitz constants to the anytime algorithm LipBaB \cite{DBLP:conf/icann/BhowmickDR21}, a competitive algorithm for computing global Lipschitz constants. We use the final networks returned upon termination of the learner-verifier framework (using our method).
We then compute the Lipschitz constant for the policy network and the certificate network using our method and LipBaB.

\Cref{lipbab} shows results for Lipschitz computations using LipBaB, in comparison to our method (using both weighted norms (\cref{subsec:weighted}) and averaged activation operators (\cref{app:averagedactivationoperators})).  The first two columns specify the network we trained on by specifying the benchmark, probability bound $\rho$, and whether we consider the policy network ($\policy$) or the certificate network ($V$). The next column gives the Lipschitz constant computed using our method. For each setting, our method takes 0.2 seconds for the first call (due to JIT compilation by JAX) and 0.0002 seconds (0.2 milliseconds) for subsequent calls (if the code is already JIT-compiled). The column titled `$L_{\text{LipBaB}, 1}$' gives the first result computed by LipBaB.  Computing this first result takes 0.5 seconds for 3 layer networks. The next two columns (titled `$L_{\text{LipBaB}}$' and `lower') provide the final Lipschitz constant computed by LipBaB upon termination or after 600 seconds (whichever occurs first). The column titled `$t_{\text{better}}$' gives the first time after which LipBaB has computed a better Lipschitz constant than our method. The final column, titled `$t_{\text{exact}}$', gives the time until LipBaB returns the exact Lipschitz constant (counting a timeout as 600 seconds, and an inequality sign indicating that in at least one run the exact Lipschitz constant was not found in 600 seconds). %

\fi

\end{document}